\documentclass{article}

 \usepackage[preprint]{neurips_2026}

\usepackage[utf8]{inputenc} 
\usepackage[T1]{fontenc}    
\usepackage{hyperref}       
\usepackage{url}            
\usepackage{booktabs}       
\usepackage{amsfonts}       
\usepackage{nicefrac}       
\usepackage{microtype}      
\usepackage{xcolor}         
\usepackage{booktabs}
\usepackage{float}
\usepackage{tabularx}
\usepackage{array}

\usepackage[table]{xcolor}

\usepackage{amsmath}        
\usepackage{amssymb}        
\usepackage{amsthm}         
\usepackage{mathtools}      
\usepackage{bm}             
\usepackage{algorithm}
\usepackage{algpseudocode}

\usepackage{subcaption}

\usepackage{placeins}
\usepackage{float}

\usepackage{enumitem}

\usepackage{tikz,lipsum,lmodern}
\usepackage[most]{tcolorbox}

\theoremstyle{plain}
\newtheorem{theorem}{Theorem}
\newtheorem{lemma}[theorem]{Lemma}
\newtheorem{proposition}[theorem]{Proposition}

\theoremstyle{definition}
\newtheorem{definition}[theorem]{Definition}
 
\theoremstyle{remark}

\newcommand{\R}{\mathbb{R}}

\newcommand{\om}{\omega}
 
\DeclareMathOperator{\CRPS}{CRPS}
\DeclareMathOperator*{\argmin}{arg\,min}   

\newcommand{\Reg}{\mathrm{Reg}}             
\newcommand{\1}{\mathbf{1}}                 

\title{AdaWeather: Adaptively Mixing Probabilistic Weather Forecasts with Logarithmic Regret}

\author{%
  Saptarishi Dhanuka \\
  Ashoka University, India \\
  \texttt{saptarishi.dhanuka@ashoka.edu.in}
  \And
  Sarvesh Iyer \\
  Ashoka University, India \\
  \texttt{sarvesh.iyer@ashoka.edu.in}
  \AND
  Manmeet Singh \\
  Western Kentucky University \\
  \texttt{manmeet.singh@wku.edu}
  \And
  Mihir More \\
  Ashoka University, India \\
  \texttt{mihir.more@ashoka.edu.in}
  \AND
  Rushil Gupta \\
  Ashoka University, India \\
  \texttt{rushil.gupta\_ug2023@ashoka.edu.in}
  \And
  Dhruman Gupta \\
  Ashoka University, India \\
  \texttt{dhruman.gupta\_ug2023@ashoka.edu.in}
  \AND
  Parthasarathi Mukhopadhyay \\
  Ashoka University, India \\
\texttt{parthasarathi.mukhopadhyay@ashoka.edu.in}
  \And
  Sandeep Juneja \\
  Ashoka University, India \\
  \texttt{sandeep.juneja@ashoka.edu.in}
}

\let\oldaddcontentsline\addcontentsline
\renewcommand{\addcontentsline}[3]{}

\usepackage{todonotes}
\usepackage{titlesec}
\begin{document}

\maketitle

\begin{abstract}

Recent advances in machine learning have produced probabilistic weather forecasting models comparable to state-of-the-art numerical weather predictors. But no model consistently dominates spatio-temporally, and relative performance is highly context-dependent. This motivates adaptive methods for combining multiple forecasts to obtain improvements and robustness. While combined forecasts have been proposed in the literature, these are achieved either through supervised learning or through prediction with expert advice methods. We introduce \textbf{AdaWeather}, an adaptive framework that combines many probabilistic forecasts using both machine learning as well as mixture of experts to arrive at a unified improved probabilistic forecast. While traditional expert methods develop the regret bounds with respect to the best single expert in hindsight, we extend the algorithm and analysis to show our method has logarithmic regret compared to the \textit{best static mixture of experts} in hindsight. Empirically, we focus on forecasting temperature, and observe improvements over existing methods.

\end{abstract}

\section{Introduction}

Weather forecasting has critical applications in many fields like agriculture, energy, and disaster management. While numerical weather predictors are used operationally, neural weather models such as FourCastNet \citep{pathak2022fourcastnet}, Pangu-Weather \citep{bi2023pangu}, GraphCast \citep{lam2023graphcast}, AIFS \citep{lang2024aifs}, GenCast \citep{price2024gencast}, NeuralGCM \citep{kochkov2024neuralgcm}, ClimaX \citep{nguyen2023climax}, Aurora \citep{bodnar2024aurora}, and Stormer \citep{nguyen2024stormer} are now comparable to these systems over various metrics, with lesser compute costs. Yet despite architectural inductive biases ranging from graph neural networks, 3D earth-specific attention, spherical Fourier operators, and diffusion-based samplers, benchmarks  \citep{rasp2024weatherbench2} show that these demonstrate remarkably similar aggregate performance, with no single forecaster consistently dominating across variables, lead times, or regimes \citep{chakraborty2025mowe,nai2025geneps,liu2025timefuse}. The natural response is to combine them, as done in operational meteorology \citep{bauer2015quiet}. 

Broadly, these combination or \textit{mixture} methods are either offline or online. An offline algorithm learns a static predictor on historical data while online algorithms produce dynamic forecasts updated at each step based recent forecast errors.
Many existing methods are take the offline track, including classical mixtures of experts \citep{jacobs1991moe,jordan1994hierarchical} and their modern sparse variants \citep{shazeer2017outrageously,fedus2022switch}. These have been adapted to weather as spatially-conditioned MOEs \citep{dryden2022smoe}, variable-adaptive MOEs \citep{chen2025vamoe}, persistence-augmented mixtures \citep{perez2019mixexperts}, and so on \citep{chakraborty2025mowe}. Sample-level fusion \citep{liu2025timefuse} and generative super-ensembles, \citep{nai2025geneps} also contribute a number of powerful architectural backbones \citep{zhou2021informer,wu2021autoformer,zhou2022fedformer,nie2023patchtst,oreshkin2020nbeats,wu2023timesnet}. Such methods are statistically driven because they exploit covariate-dependent structure in model errors, but are frozen at deployment and offer no \textit{guarantees} against distribution shift. \\ Online algorithms mitigate this by reweighing base predictors sequentially as observations arrive \citep{zhang2023onenet,pham2022fsnet,lau2025dsof,fu2022rlmc,liu2025onlinemoe}. We can look at the prediction with expert advice setting where procedures for combining or mixing models or "experts" (called aggregating algorithms), produce optimal results \cite{vovk1990aggregating,vovk1995game,freund1996predicting,littlestone1994weighted,cesa2006prediction}. These are particularly useful when analysing the loss of the mixture against the best possible method, or its \textit{regret}.

Under the assumption that the loss function is \textit{mixable} (e.g. the Continuous Ranked Probability Score or CRPS),  methods inherit logarithmic and controlled regret against shifting comparators in non-stationary environments \citep{aydore2019dynamiclocalregret,herbster1998tracking,hazan2009efficient,jadbabaie2015dynamicregret}. Classical NWP-focused statistical post-processing methods like ensemble model output statistics \citep{gneiting2005emos}, Bayesian model averaging \citep{raftery2005bma}, and proper-scoring-rule calibration \citep{gneiting2007scoring} recalibrate or combine existing forecast outputs using historical forecast–observation data, making them simpler than learned supervised aggregators but less adaptive and theoretically guaranteed than online methods.

Thus a gap exists wherein offline methods exploit historical structure but are unstable under distributional drift, while online aggregators are provably adaptive but do not see structure. To the best of our knowledge, we are aware of no prior work which combines these approaches efficiently, which is filled by us.
We first train a U-Net to learn historical patterns, which then gives us so-called ``side-information'' \cite{cover1996universal} for the aggregating algorithm, when added as an expert alongside other models. We not only get better performance, but also develop a \textit{novel regret bound} against the theoretical best \textit{mixture} in hindsight instead of simply the best expert alone, as done earlier \cite{vyugin2021online, haussler1994tight, vovk1990aggregating}.

\textbf{Key Contributions:}
\begin{enumerate}[topsep=2pt,itemsep=1pt,parsep=0pt,partopsep=0pt]
    \item \textbf{Combined offline-online framework for weather models:} We train a spatio-temporal U-Net aggregation model on historical data and add this model to the existing weather models. We then pass this set of experts to an online aggregation algorithm, which performs the final predictions and achieves improved results when empirically compared against other methods.
    
    \item \textbf{Regret guarantees:} We derive a novel theoretical upper bound on the regret that matches the optimal minimax bound in a special case \cite{freund1996predicting} up to a constant.
\end{enumerate}

In this paper, we first elucidate the related work in Section \ref{sec:related}, formalise the problem in \ref{sec:formulation}, propose the offline-online methods in \ref{sec:method}, provide theoretical analysis in \ref{sec:theory}, and finally present the data, experiments and results, with more proofs and implementation details in the appendices.

\begin{figure}
    \centering
    \includegraphics[width=1.0\linewidth]{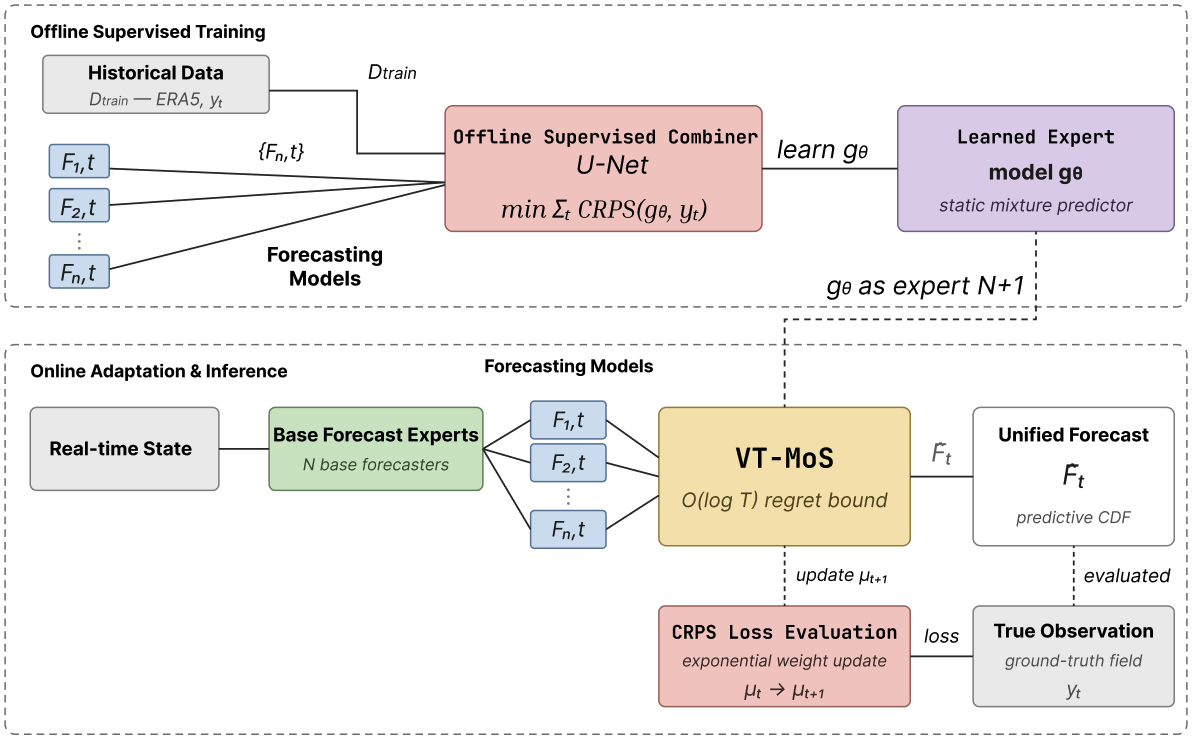}
    \caption{Overall Framework}
    \label{fig:arch}
\end{figure}

{

\section{Related Work}
\label{sec:related}
Various threads relevant to our work are covered below:
\paragraph{Neural weather models:}
Recent AI-based weather models---FourCastNet \citep{pathak2022fourcastnet},
Pangu-Weather \citep{bi2023pangu}, GraphCast \citep{lam2023graphcast},
ClimaX \citep{nguyen2023climax}, Stormer \citep{nguyen2024stormer},
AIFS \citep{lang2024aifs}, Aurora \citep{bodnar2024aurora}, NeuralGCM
\citep{kochkov2024neuralgcm},GenCast, etc
\citep{price2024gencast}---are routinely benchmarked against IFS
\citep{bauer2015quiet} and ERA5 via WeatherBench~2 \citep{rasp2024weatherbench2}.
Despite their architectural diversity, recent benchmarks document near-empirical
convergence and a lack of consistent dominance
\citep{chakraborty2025mowe,nai2025geneps,liu2025timefuse}.
\paragraph{Multi-model combination in weather and time series:}
Operational forecasting has long combined models via grand multi-model means,
super-ensemble regression \citep{krishnamurti1999superensemble}, Bayesian
model averaging \citep{raftery2005bma}, ensemble model output statistics
\citep{gneiting2005emos}, and proper-scoring-rule calibration
\citep{gneiting2007scoring}. The deep-learning era has reframed combination
as a learnable problem. Mixture-of-experts
\citep{jacobs1991moe,jordan1994hierarchical,shazeer2017outrageously,fedus2022switch}
has been adapted with spatially-conditioned routing \citep{dryden2022smoe},
variable-adaptive experts for incremental forecasting \citep{chen2025vamoe},
persistence-augmented mixtures \citep{perez2019mixexperts}, and the
ViT-gated MoWE \citep{chakraborty2025mowe}. Sample-level adaptive fusion
\citep{liu2025timefuse} and generative super-ensembles \citep{nai2025geneps}
extend the design space. The architectural backbones on which these
combiners depend like Informer \citep{zhou2021informer}, Autoformer
\citep{wu2021autoformer}, FEDformer \citep{zhou2022fedformer}, PatchTST
\citep{nie2023patchtst}, N-BEATS \citep{oreshkin2020nbeats}, DeepAR
\citep{salinas2020deepar}, and TimesNet \citep{wu2023timesnet}, closely
interact with distribution-shift normalisation
\citep{kim2021revin,liu2024sand}.
\paragraph{Online forecasting under concept drift:}
OneNet \citep{zhang2023onenet} couples variable-independent and cross-variable
forecasters via online convex programming; FSNet \citep{pham2022fsnet} takes
complementary-learning-systems theory \citep{mcclelland1995cls,kumaran2016cls}
to time series; DSOF \citep{lau2025dsof} introduces dual streams to remove
information leakage in online evaluation; RLMC \citep{fu2022rlmc} treats
dynamic weighting as reinforcement learning; and \cite{liu2025onlinemoe}
propose an online MOE with no-regret guarantees. The continual-learning
machinery these methods share \citep{lopezpaz2017gem} connects to the broader
concept-drift literature \citep{tsymbal2004drift,gama2014drift}.
\paragraph{Prediction with expert advice:}
The theoretical foundation traces to \cite{vovk1990aggregating,littlestone1994weighted} and \cite{haussler1994tight}, which established
tight $\ln N$ regret bounds for mixable losses. Then, \cite{freund1996predicting}
unified the analysis through exponential weights. We refer the reader to \cite{cesa2006prediction,shalev2012online,hazan2016introduction} for further work which consolidates the field. The mixability of CRPS was established by \cite{dombry2024crps} and exploited online by \cite{vyugin2021online}. In atmospheric science, expert aggregation has been applied to temperature combination over France \citep{pfitzner2025ea} via EWA and
MLpoly, but only to limited sparse station data with NWP forecasts, and without an offline mixing stage.
}

\section{Problem Formulation}
\label{sec:formulation}

\subsection{Preliminaries}
\label{subsec:setting}

\paragraph{Forecasts and observations.}
We work in the probabilistic forecasting setup with multiple forecast
trajectories generated for each day by each model. Index forecast issue
times by $t \in [T]$ and lead times by $\tau \in \mathcal{T}$. For each
pair $(t,\tau)$, we observe a target field $y_{t,\tau} \in \mathcal{Y}
\subseteq \R^{|\mathcal{G}|}$ on a spatial grid $\mathcal{G}$, with each
grid-point value lying in a known interval $[a,b] \subset \R$.
Boundedness is realistic for variables such as temperature and is
essential for the bounds we derive.
There are $N$ base forecasters, or \emph{experts}. For each $(t,\tau)$,
expert $n$ outputs a predictive CDF $F_{n,t,\tau}$ over $\mathcal{Y}$,
derived from an atmospheric initial state $x_t$ via a model pass
$f_n : x_t \mapsto F_{n,t,\tau}$. In practice $F_{n,t,\tau}$ is
represented by an $M$-member ensemble and we work with the empirical
CDF; our analysis imposes no structure on expert predictions.  
\paragraph{Loss.}
Quality is measured by the \emph{continuous ranked probability score}
(CRPS) \citep{gneiting2007scoring}, evaluated pointwise and averaged
over $\mathcal{G}$. For any predictive CDF $\hat F$ and observation $y$
at a single grid point,
\begin{equation}
  \CRPS(\hat F, y)
  \;=\; \int_{a}^{b}\big(\hat F(z) - \1\{z \ge y\}\big)^{2}\, dz.
  \label{eq:crps}
\end{equation}
For an empirical CDF given by an ensemble $\{e_1,\dots,e_M\}$, CRPS
admits the closed form
$\tfrac{1}{M}\sum_i |e_i - y|
 - \tfrac{1}{2M^2}\sum_{i,j}|e_i - e_j|$
(this follows from arguments analogous to those in
Lemma~\ref{lem:quadratic}); in implementation we use the unbiased
$M(M{-}1)$-normalised plug-in estimator (details in
Appendix~\ref{app:crps}). We let $\CRPS$ act on
grid-valued $(\hat F_{t,\tau}, y_{t,\tau})$ by averaging \eqref{eq:crps}
over $g \in \mathcal{G}$, and write the cumulative loss as
$$\mathcal{L}_T = \sum_{t=1}^{T}\sum_{\tau \in \mathcal{T}}
 \CRPS(\hat F_{t,\tau}, y_{t,\tau}).$$ 
CRPS is a strictly proper scoring rule \citep{gneiting2007scoring} and
is $\tfrac{2}{b-a}$-mixable on $[a,b]$ \citep{vyugin2021online,
dombry2024crps}. It also carries a quadratic structure
(Lemma~\ref{lem:quadratic}) that is central to controlling its
fluctuations. Both properties drive the logarithmic regret bound of
Section~\ref{sec:vtmos}.

\subsection{Aggregation algorithms}
\label{subsec:aggregation}
An aggregation algorithm $\mathcal{A}$ produces a predictive CDF
$\hat F_{t,\tau}
 = \mathcal{A}\!\big(x_t, \{F_{n,t,\tau}\}_{n=1}^{N}\big)$
from the initial state and the expert predictions; see
\cite{vovk1995game},\cite{vyugin2021online} for canonical examples. We consider both offline and online variants.

\paragraph{Offline aggregation.}
Following typical supervised learning, given a training split $\mathcal{D}_{\mathrm{train}}
 = \{(x_t, \{F_{n,t,\tau}\}_n, y_{t,\tau}) : t \le T_0\}$, an offline
aggregator $\hat F_{t,\tau} = g_\theta(x_t, \{F_{n,t,\tau}\}_n)$ is fit
by empirical risk minimisation,
\begin{equation}
  \theta^\star \;\in\; \arg\min_\theta\;
  \sum_{t \le T_0}\sum_{\tau \in \mathcal{T}}
  \CRPS\!\big(g_\theta(x_t, \{F_{n,t,\tau}\}_n),\, y_{t,\tau}\big).
  \label{eq:erm}
\end{equation}

\paragraph{Online aggregation.}
At each step the algorithm plays a CDF $\hat F_t$
and incurs loss $\CRPS(\hat F_t, y_t)$, then updates once $y_t$ is
revealed. A canonical family of online aggregators is the
\emph{linear mixture} indexed by weights $w \in \Theta := \{w \in
\R_{\ge 0}^{N} : \sum_n w_n = 1\}$,
$\hat F^{(w)}_{t,\tau}=\sum_{n=1}^{N} w_n\, F_{n,t,\tau}$
Our key methodological contribution is a second online procedure
producing a forecast that is non-linear in the expert predictions
\emph{and} the weights (Section~\ref{sec:vtmos}).

\paragraph{Regret.}
We measure online performance by the static regret against the best
fixed convex combination in hindsight,
\begin{equation}
  R_T
  \;=\; \sum_{t=1}^{T}\CRPS(\hat F_t, y_t)
  \;-\; \min_{w^\star \in \Theta}
        \sum_{t=1}^{T}\CRPS\!\big(\hat F^{(w^\star)}_t, y_t\big).
  \label{eq:regret}
\end{equation}
This is strictly stronger than regret against the best single expert,
since $\Theta$ contains the vertices $e_n$. For best-expert regret on
mixable losses, an algorithm utilising mixability achieves an optimal
$O(\ln N)$ bound independent of $T$
\citep{vovk1995game,haussler1994tight,freund1996predicting,
cesa2006prediction}; for convex losses the rate degrades to
$O(\sqrt{T \ln N})$
\citep{shalev2012online,hazan2016introduction}. In
Section~\ref{sec:vtmos} we derive an aggregator whose regret
\eqref{eq:regret} grows $ln(t)$ with an explicit leading
constant; we discuss optimality of this constant in
Section~\ref{app:regproof} via comparison to the two-expert
setting in \cite{freund1996predicting}.

\subsection{Problem statement}
\label{subsec:problem}
Hence, given $N$ base forecasters and $\mathcal{D}_{\mathrm{train}}$, we seek
an aggregation algorithm $\mathcal{C}$ that simultaneously
\begin{enumerate}
  \item exploits the dependent structure exposed by
        $\mathcal{D}_{\mathrm{train}}$, and
  \item admits a sublinear regret bound of the form \eqref{eq:regret}
        on the evaluation horizon $t > T_0$.
\end{enumerate}
Purely offline ERM addresses (a) but loses all guarantees under shift,
while a purely online aggregator over $\{F_{n,t}\}$ addresses (b) but
discards every error pattern visible in $\mathcal{D}_{\mathrm{train}}$.
We resolve both by feeding an offline-trained expert as an additional
input to an online aggregator.

\section{Method}
\label{sec:method}

\subsection{Offline supervised training}
\label{subsec:offline-method}

Our offline aggregator is a U-Net that maps expert forecasts to a
probabilistic mixture forecast over an $H \times W$
India grid at $L$ lead times. It learns which forecast models to weigh more based on historical data. More details are in Appendix~\ref{app:offline}.

\subsection{Online adaptation}
\label{subsec:online-method}

At inference time we treat the trained offline model as an additional
expert and combine it online with the other $N$ forecasters. We suppress the lead-time
index $\tau$ throughout the analysis but all results apply uniformly
across lead times. We
describe two online aggregators: Vovk's algorithm as a baseline, covered in Appendix \ref{sec:backgroundexperts} and
our proposed VT-MOS. Vovk achieves $O(\ln N)$ regret against the best
single expert \citep{vovk1995game,haussler1994tight}, but degrades
against the strictly stronger best-mixture comparator
\eqref{eq:regret}: weights are maintained on individual experts and
not on their mixtures, while the non-linearity of CRPS means a mixture
can substantially outperform every individual expert (see
Lemma~\ref{lem:quadratic}). Lower bounds of $\Omega(\sqrt{T})$ are
known for related losses \citep{cesa1997use}. This motivates an
algorithm that tracks weights over mixtures directly.

\subsubsection{VT-MOS: aggregation for mixtures of experts}
\label{sec:vtmos}

Our proposed algorithm, Vyugin-Trunov Mixture of Simplex (VT-MOS), adapts the Vyugin--Trunov aggregator
\citep{vyugin2021online} to the mixture-of-experts comparator. Write
$F_{t}^{(p)} = \sum_{n=1}^{N} p_n F_{n,t}$ for the predictive CDF
induced by mixture $p \in \Theta$. The mixability framework underlying
the aggregator needs to be tailored to mixtures.

\begin{definition}\label{def:mix}
For $\eta > 0$, the CRPS loss is \emph{$\eta$-mixable in the MOE
setting} if, for every collection of expert CDFs $\{F_n\}_{n=1}^N$ and
every finite measure $\nu$ on $\Theta$, there exists a CDF $F$ such
that for all $y \in [a,b]$,
\begin{equation}
  \CRPS(F, y)
  \;\le\; -\frac{1}{\eta}\,
  \ln \frac{\int_\Theta
    \exp\!\big(-\eta\,\CRPS(F^{(p)}, y)\big)\,d\nu(p)}{\nu(\Theta)}.
  \label{eq:mixineq}
\end{equation}
\end{definition}
VT-MOS maintains a measure $\mu_t$ on $\Theta$, initialised to the
uniform prior $\mu_0$ (a choice supported by the two-expert analysis
of \citealp[Sections 4,7]{freund1996predicting}). At round $t$ it
outputs a CDF $\hat F_t$ satisfying \eqref{eq:mixineq} relative to
$\mu_t$, then updates
\begin{equation}
  \frac{d\mu_{t+1}}{d\mu_t}(p)
  \;=\; \exp\!\big(-\eta\,\CRPS(F_{t}^{(p)}, y_t)\big).
  \label{eq:update}
\end{equation}
Theorem~\ref{thm:mixability} (Section~\ref{subsec:mixability})
establishes that such an $\hat F_t$ exists in closed form, with $\eta = 2/(b-a)$.
\paragraph{Closed form.}
Substituting the explicit form of $\hat F_t$ (Theorem~\ref{thm:mixability})
into \eqref{eq:update} and unrolling, the prior $\mu_t$ collapses into
the integrand and $\hat F_t$ depends only on the expert predictions and
observations seen so far:
\begin{equation}
  \hat F_t(y) \;=\; \tfrac{1}{2}
  \;-\; \tfrac{1}{4}\,\ln\!\left(
    \frac{\int_\Theta \exp\!\big(-2\big\{[F_t^{(p)}(y)]^2
                                          + S_{t-1}(p)\big\}\big)\,dp}
         {\int_\Theta \exp\!\big(-2\big\{[1 - F_t^{(p)}(y)]^2
                                          + S_{t-1}(p)\big\}\big)\,dp}
  \right),
  \label{eq:Ft_explicit}
\end{equation}
where $S_{t-1}(p) = \sum_{j=1}^{t-1} \CRPS(F_{j}^{(p)}, y_j)$ and $dp$
is Lebesgue measure on $\Theta$.

\paragraph{Monte Carlo implementation.}
The integrals over $\Theta$ are intractable in closed form but admit a
simple Monte Carlo estimator. We sample $p \sim \mathrm{Unif}(\Theta)$
via the standard exponential representation: draw
$X_{nj} \stackrel{\mathrm{iid}}{\sim} \mathrm{Exp}(1)$ for
$n \in [N], j \in [\mathsf{M}]$ and set
$v_{nj} = X_{nj}/\sum_{n'} X_{n'j}$. We reuse the same samples
$\{v_{\cdot j}\}$ in both numerator and denominator: the integrands
differ only in the leading squared term, so the shared $S_{t-1}(p)$
factor and correlated draws yield substantial variance reduction. This
gives Algorithm~\ref{alg:vtmos}.

\begin{algorithm}[H]
\caption{VT-MOS: variance-reduced Vyugin--Trunov for mixtures of
experts}
\label{alg:vtmos}
\begin{algorithmic}[1]
\Require Number of experts $N$, support bounds $a < b$, Monte Carlo
size $\mathsf{M}$.
\State Sample $X_{nj}
   \stackrel{\mathrm{iid}}{\sim} \mathrm{Exp}(1)$ and set
   $v_{nj} \gets X_{nj}/\sum_{n'} X_{n'j}$ for
   $n \in [N], j \in [\mathsf{M}]$.
\State Initialise $S_j \gets 0$ for $j \in [\mathsf{M}]$.
   \Comment{Cumulative loss of mixture $v_{\cdot j}$.}
\For{$t = 1, \ldots, T$}
  \State Receive expert predictions $\{F_{n,t}\}_{n=1}^{N}$.
  \State \textbf{Output} $\hat F_t(\cdot)$ defined for
    $y \in [a,b]$ by
  \[
    \hat F_t(y) \;=\; \tfrac{1}{2} \;-\; \tfrac{1}{4}\,\ln\!\left(
      \frac{\sum_{j=1}^{\mathsf{M}}
        \exp\!\big(-2\big[(\sum_n v_{nj} F_{n,t}(y))^2 + S_j\big]\big)}
           {\sum_{j=1}^{\mathsf{M}}
        \exp\!\big(-2\big[(1 - \sum_n v_{nj} F_{n,t}(y))^2
                        + S_j\big]\big)}
    \right).
  \]
  \State Receive observation $y_t$.
  \State $S_j \gets S_j
    + \CRPS\!\big(\sum_n v_{nj} F_{n,t},\, y_t\big)$
    for each $j \in [\mathsf{M}]$.
\EndFor
\end{algorithmic}
\end{algorithm}

\section{Theoretical Analysis}
\label{sec:theory}

We now establish the two main theorems behind VT-MOS: that CRPS is
mixable in the MOE sense (Theorem~\ref{thm:mixability}, justifying
the closed form \eqref{eq:Ft_explicit}), and that the resulting
algorithm enjoys logarithmic regret with an explicit leading constant
(Theorem~\ref{thm:reg}). Throughout, we measure performance by the
\emph{mixture-of-experts regret}
\begin{equation}
  \Reg_{T}
  \;=\; \sum_{t=1}^{T} \CRPS(\hat F_t, y_t)
       \;-\; \min_{p \in \Theta}
             \sum_{t=1}^{T} \CRPS\!\big(F_{t}^{(p)}, y_t\big),
  \label{eq:mos_regret}
\end{equation}
which, since $\Theta$ contains the vertices $e_n$, is at least as large
as the standard best-expert regret.

\subsection{MOE-mixability of CRPS}
\label{subsec:mixability}

\begin{theorem}[Mixability]\label{thm:mixability}
The CRPS loss is $\tfrac{2}{b-a}$-mixable in the MOE setting. Moreover,
inequality \eqref{eq:mixineq} is satisfied by the explicit CDF
\begin{equation}
  \hat F(y) \;=\; \tfrac{1}{2}
  \;-\; \tfrac{1}{4}\,\ln\!\left(
    \frac{\int_\Theta \exp\!\big(-2\big[F^{(p)}(y)\big]^2\big)\,d\nu(p)}
         {\int_\Theta \exp\!\big(-2\big[1 - F^{(p)}(y)\big]^2\big)
                                                    \,d\nu(p)}
  \right).
  \label{eq:Fhat}
\end{equation}
\end{theorem}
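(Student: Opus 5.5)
The plan is to reduce the MOE-mixability of CRPS to the classical mixability of the binary square loss, applied pointwise in the integration variable, and then to integrate. Write $\bar\nu=\nu/\nu(\Theta)$ and $L=b-a$. Fix $y$ and the integration variable $z\in[a,b]$, and set $\omega=\1\{z\ge y\}\in\{0,1\}$ and $\gamma_p=F^{(p)}(z)\in[0,1]$. The CRPS integrand of the mixture $p$ is then $(\gamma_p-\omega)^2$, a square loss with a binary outcome. (The right-hand side of \eqref{eq:Fhat} should be read with the CDF argument $z$, i.e.\ $\hat F(z)$ is built from the values $F^{(p)}(z)$.)

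\textbf{Step 1 (pointwise, $\eta=2$).} Let $A_0=\int e^{-2\gamma_p^2}d\bar\nu$ and $A_1=\int e^{-2(1-\gamma_p)^2}d\bar\nu$, and put $\hat\gamma=\tfrac12-\tfrac14\ln(A_0/A_1)$. We need
\[
\hat\gamma^2\le -\tfrac12\ln A_0
\quad\text{and}\quad
(1-\hat\gamma)^2\le -\tfrac12\ln A_1 .
\]
The route is Vovk's standard argument. The set $\{(e^{-2\gamma^2},e^{-2(1-\gamma)^2}):\gamma\in[0,1]\}$ bounds a convex region for $\eta\le 2$, so the point $(A_0,A_1)$ lies under the curve. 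Hence there is a $\gamma^\ast$ with $e^{-2\gamma^{\ast2}}\ge A_0$ and $e^{-2(1-\gamma^\ast)^2}\ge A_1$. Since $\gamma^2-(1-\gamma)^2=2\gamma-1$ is linear, the specific choice $\hat\gamma$ is exactly the one that splits the slack evenly. I expect this to be the main technical obstacle. I would verify it directly: reduce to showing that $\phi(\gamma)=\ln\int e^{-2(\gamma_p-\gamma)^2}d\bar\nu$ satisfies the needed inequality at $\gamma\in\{0,1\}$, using concavity of $\gamma\mapsto e^{-2\gamma^2}$ on $[0,\tfrac12]$ (its second derivative $(16\gamma^2-4)e^{-2\gamma^2}$ is $\le0$ there). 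Alternatively I would cite the binary-square-loss substitution function from \cite{vovk1990aggregating,vyugin2021online}. Since $e^{-2}\le A_0,A_1\le 1$, we also get $\hat\gamma\in[0,1]$.

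\textbf{Step 2 (integration and Hölder).} Integrating Step 1 over $z$ gives
\[
\CRPS(\hat F,y)\le -\tfrac12\int_a^b \ln \int e^{-2\ell_p(z)}\,d\bar\nu(p)\,dz,
\qquad \ell_p(z)=(F^{(p)}(z)-\1\{z\ge y\})^2 .
\]
Since $\eta=2/L$, the target right-hand side of \eqref{eq:mixineq} equals $-\tfrac L2\ln\int \exp(-2\,\mathrm{avg}_z\ell_p)\,d\bar\nu$, where $\mathrm{avg}_z=\tfrac1L\int_a^b dz$. It therefore suffices to prove the continuous Hölder inequality
\[
\int \exp\big(\mathrm{avg}_z g_p(z)\big)\,d\bar\nu\le \exp\big(\mathrm{avg}_z \ln h(z)\big),
\qquad h(z)=\int e^{g_p(z)}d\bar\nu,\quad g_p=-2\ell_p .
\]
Dividing through by the right-hand side, the claim becomes $\int\exp(\mathrm{avg}_z\ln(e^{g_p(z)}/h(z)))\,d\bar\nu\le 1$. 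This follows from Jensen (convexity of $\exp$): the integrand is at most $\mathrm{avg}_z(e^{g_p(z)}/h(z))$, and after Fubini that average integrates to $1$. Measurability in $(p,z)$ holds because $F^{(p)}$ is linear in $p$ and monotone in $z$.

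\textbf{Step 3 ($\hat F$ is a CDF).} For each $p$, $F^{(p)}(z)$ is nondecreasing in $z$. So $A_0(z)$ is nonincreasing and $A_1(z)$ is nondecreasing, which makes $\hat F$ nondecreasing with values in $[0,1]$. Below $a$ every $F^{(p)}=0$, giving $\hat F=\tfrac12-\tfrac14\ln e^{2}=0$; symmetrically $\hat F=1$ at or above $b$. Right-continuity follows by dominated convergence. This shows that \eqref{eq:Fhat} is a legitimate CDF achieving \eqref{eq:mixineq} with $\eta=2/(b-a)$.
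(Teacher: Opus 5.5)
Your proof is correct. It shares its skeleton with the paper's: $2$-mixability of the binary square loss is applied at each point of $[a,b]$ against the push-forward of $\nu$ (this is Proposition~\ref{prop:freund}, which the paper states without proof), and the points are then recombined with a H\"older-type inequality.

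The recombination is where you differ. Following Vyugin--Trunov, the paper discretises $[a,b]$ into $d$ bins and replaces CRPS by a Riemann sum with error $3\Delta$. It then applies the finite generalized H\"older inequality $\prod_s\|h_s\|_{1/\Delta}\ge\|\prod_s h_s\|_{1/(\Delta d)}$ and lets $d\to\infty$. That limit needs $\hat F$ to be a CDF so the approximation bounds apply to it, which the paper asserts without checking, and it needs a limit interchange on the right-hand side. You instead integrate the pointwise inequality directly over $z$ and prove the continuous analogue of that H\"older inequality in one line via Jensen and Tonelli. This is exactly the $d\to\infty$ limit of the paper's inequality. Your route removes the approximation lemmas and the limiting argument entirely, needs only joint measurability and $h\ge e^{-2}>0$, and your Step~3 supplies the CDF verification the paper omits. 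The paper's route keeps the finite-outcome structure of the original Vyugin--Trunov argument, at the cost of those extra approximation steps.

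Step~1 needs two small tightenings.
\begin{itemize}
\item \emph{The slack argument.} The slacks $s_0(\gamma)=-\tfrac12\ln A_0-\gamma^2$ and $s_1(\gamma)=-\tfrac12\ln A_1-(1-\gamma)^2$ coincide at $\hat\gamma$, because $2\hat\gamma-1=-\tfrac12\ln(A_0/A_1)$. On $[0,1]$, $s_0$ is nonincreasing and $s_1$ is nondecreasing. So whichever side of $\hat\gamma$ your $\gamma^\ast$ lies on, nonnegativity at $\gamma^\ast$ transfers to $\hat\gamma$; say this explicitly.
\item \emph{The convexity fact.} What you need is concavity of the curve $u\mapsto v$, where $(u,v)=(e^{-2\gamma^2},e^{-2(1-\gamma)^2})$, not concavity of $e^{-2\gamma^2}$ on $[0,\tfrac12]$. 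The log of the slope magnitude $\tfrac{1-\gamma}{\gamma}e^{2(2\gamma-1)}$ has derivative $4-\tfrac{1}{\gamma(1-\gamma)}\le 0$, since $\gamma(1-\gamma)\le\tfrac14$. That inequality is exactly where $\eta=2$ enters.
\end{itemize}
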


\paragraph{Proof sketch.}
The argument follows the structure of \cite[Theorem 2]{vyugin2021online}
with adaptations at two key points. First, discretise $[a,b]$ at scale
$\Delta = (b-a)/d$ and approximate
$\CRPS(F, y) \approx \Delta \sum_{s=1}^{d}(F(z_s) - \1\{z_s \ge y\})^2$
up to $O(\Delta)$ error. This step is standard. Second, where the
single-expert proof applies mixability bin-by-bin to a finite expert
set, we apply it bin-by-bin against the measure $\nu$ on $\Theta$,
producing $\hat f_s = \hat F(z_s)$, and then recombine bins via
H\"older's inequality with exponents aligned at $\eta = 2/(b-a)$. As
noted in \cite[p.~7]{freund1996predicting}, this amounts to replacing
sums over experts by integrals over $\Theta$ with appropriate
adjustments to the H\"older space. Letting $d \to \infty$ completes
the proof; full details are in Appendix~\ref{app:mixproof}. The rate $\eta = 2/(b-a)$ shrinks as the support widens, consistent
with the intuition that prediction is harder on a larger range.

\subsection{Regret bound}
\label{subsec:regret}

\begin{theorem}[Regret bound]\label{thm:reg}
For VT-MOS run with exact integration over $\Theta$
($\mathsf{M} \to \infty$),
\[
  \Reg_T
  \;\le\; \frac{(b-a)(N-1)}{2}\,\ln T \;+\; C,
\]
where $C$ depends only on $a, b, N$ and is independent of $T$.
\end{theorem}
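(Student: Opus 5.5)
Our plan is the standard mixability-to-regret argument with a continuous prior, followed by a Laplace-type lower bound on the integral of exponentiated losses near the best mixture. Set $\eta = 2/(b-a)$ and let $W_t = \mu_t(\Theta) = \int_\Theta \exp(-\eta S_{t}(p))\,d\mu_0(p)$, where $\mu_0$ is the uniform (normalised Lebesgue) probability measure on $\Theta$. By Theorem~\ref{thm:mixability}, applied with $\nu=\mu_t$ at each round, the output $\hat F_t$ satisfies $\CRPS(\hat F_t,y_t)\le -\tfrac1\eta \ln (W_{t}/W_{t-1})$, using the update \eqref{eq:update}. Summing and telescoping, with $W_0=1$, gives
\[
\sum_{t=1}^T \CRPS(\hat F_t,y_t) \le -\tfrac1\eta \ln \int_\Theta e^{-\eta S_T(p)}\,d\mu_0(p).
\]
So everything reduces to lower bounding this integral by $e^{-\eta S_T(p^\star)}$ times a polynomial factor in $T$, where $p^\star$ is the minimiser of $S_T$. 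This minimiser exists since $S_T$ is continuous on the compact set $\Theta$.

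The key structural input is the quadratic (hence convex) dependence of CRPS on $p$, which is Lemma~\ref{lem:quadratic}. Since $F^{(p)}$ is affine in $p$, the map $p\mapsto \CRPS(F^{(p)}_t,y_t)$ is a convex quadratic. Its gradient is bounded uniformly by some $G$ that depends only on $a,b$: for instance $|\partial_{p_n}| = |\int 2(F^{(p)}-\1)F_n\,dz| \le 2(b-a)$, so $G \le 2(b-a)\sqrt N$. Rather than using a curvature-based Laplace expansion, I would use a Lipschitz shrinkage argument. Let $\Theta_\epsilon = \{(1-\epsilon)p^\star + \epsilon q : q\in\Theta\}$. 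For $p\in\Theta_\epsilon$, convexity of each per-round loss gives $S_T(p)\le (1-\epsilon)S_T(p^\star)+\epsilon S_T(q)\le S_T(p^\star)+\epsilon T(b-a)$, using $\CRPS\le b-a$. The set $\Theta_\epsilon$ is a homothetic copy of the $(N-1)$-dimensional simplex with ratio $\epsilon$, so $\mu_0(\Theta_\epsilon)=\epsilon^{N-1}$. Hence
\[
\int_\Theta e^{-\eta S_T}\,d\mu_0 \ge \epsilon^{N-1} e^{-\eta S_T(p^\star)-\eta\epsilon T(b-a)}.
\]

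Choosing $\epsilon = 1/T$ gives
\[
\Reg_T \le \tfrac{N-1}{\eta}\ln T + (b-a) = \tfrac{(b-a)(N-1)}{2}\ln T + C,
\]
which matches the claimed leading constant exactly, with $C=b-a$. The step I expect to need care is the mixability step when $\nu=\mu_t$ is not normalised. Here \eqref{eq:mixineq} is stated with $\nu(\Theta)$ in the denominator, which is precisely what telescopes, so I will verify that the normalisation in Theorem~\ref{thm:mixability} matches $W_t/W_{t-1}$. I also need to confirm that the closed form \eqref{eq:Fhat} is invariant to rescaling $\nu$, so that using the unnormalised $\mu_t$ in \eqref{eq:Ft_explicit} is legitimate.

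A secondary subtlety is the grid averaging and the multiple lead times. CRPS averaged over $\mathcal G$ is still bounded by $b-a$ and still convex in $p$. Mixability, however, is applied pointwise, and averaging over grid points requires that the exponential-weights inequality survive the averaging. I would handle this by running the argument per grid point and per lead time, where the guarantee holds as stated, and then averaging the resulting regret bounds. Since each bound has the same form, the average satisfies the same bound.
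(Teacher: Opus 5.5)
Your argument is correct, and it lower-bounds the exponential-weights integral differently from the paper.

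The first half is the same in both. Theorem~\ref{thm:mixability} combined with the update \eqref{eq:update} telescopes to $\sum_t \CRPS(\hat F_t,y_t)\le -\eta^{-1}\ln\int_\Theta e^{-\eta L(p)}\,d\mu_0(p)$. The rescaling concern you flag is immaterial, because both \eqref{eq:mixineq} and \eqref{eq:Fhat} are invariant under $\nu\mapsto c\nu$.

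The paper then restricts to a Euclidean ball $\Theta\cap B(p^\star,r)$. It controls $L(p)-L(p^\star)$ through the exact expansion $(A-Bp^\star)^\top\delta-\tfrac12\delta^\top B\delta$ from Lemma~\ref{lem:quadratic}, and bounds both terms with the worst-case estimates of Lemma~\ref{bds} to get $L(p)-L(p^\star)\le (b-a)T(3\sqrt N+N)\|\delta\|_2$. It takes $r\asymp 1/T$ and invokes the geometric fact $|\Theta\cap B(p^\star,r)|\ge c_\Theta r^{N-1}$.

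You instead restrict to the homothetic copy $(1-\epsilon)p^\star+\epsilon\Theta$, whose normalised volume is exactly $\epsilon^{N-1}$. You use only the chord inequality for the convex map $p\mapsto\CRPS(F^{(p)}_t,y_t)$ together with $0\le\CRPS\le b-a$. This buys several things:
\begin{itemize}
\item it bypasses both lemmas and the constant $c_\Theta$;
\item it gives the explicit constant $C=b-a$;
\item it works for any comparator in $\Theta$, not only a minimiser.
\end{itemize}
The gradient bound $G$ you compute is never needed.

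What the paper's route buys is visibility of the first-order term $(A-Bp^\star)^\top\delta$. When that term vanishes, as for an interior minimiser or in the two-expert setting of \cite{freund1996predicting}, the excess loss is $O(T\|\delta\|_2^2)$. One can then take $r\asymp T^{-1/2}$, and the leading constant halves to $\tfrac{(b-a)(N-1)}{4}$. Your chord inequality is linear in $\epsilon$, so it cannot see this curvature. Recovering the improvement in your framework would require reinstating the quadratic structure along the segment from $p^\star$ to $q$.

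Your per-pixel treatment of grid averages is sound, since $\min_p\sum_g L_g(p)\ge\sum_g\min_p L_g(p)$. Alternatively, H\"older's inequality shows that the grid-averaged CRPS is itself $\eta$-mixable with a single shared measure, a point the paper leaves implicit.
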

The bound is logarithmic in $T$ and grows linearly in $N - 1$ and the
support length, both of which match the intuition that larger
comparator classes and wider observation ranges make prediction
harder. The proof rests on two structural facts about CRPS.

\begin{lemma}[Quadratic structure]\label{lem:quadratic}
Let $L(p) = \sum_{t=1}^{T} \CRPS(F_t^{(p)}, y_t)$, and define
$A_i = \sum_{t} \CRPS(F_{i,t}, y_t)$ and
$B_{ij} = \sum_{t} \int_a^b (F_{i,t}(z) - F_{j,t}(z))^2\, dz$. Then
$L(p) = A^\top p - \tfrac{1}{2}\, p^\top B p$ for all $p \in \Theta$.
\end{lemma}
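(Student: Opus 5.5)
The identity is proved pointwise in $z$ and $t$ and then integrated and summed; no earlier result is needed beyond the definition \eqref{eq:crps}. Fix a round $t$ and a point $z\in[a,b]$, and abbreviate $f_i = F_{i,t}(z)$ and $h = \1\{z \ge y_t\}$. Since $p\in\Theta$ satisfies $\sum_i p_i = 1$, we have
\[
F_t^{(p)}(z) - h \;=\; \sum_i p_i (f_i - h).
\]
The integrand of \eqref{eq:crps} is the square of a convex combination, and the whole lemma reduces to the classical variance decomposition for such squares.

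\textbf{Step 1 (pointwise identity).} I will verify that for any reals $u_i$ and $p\in\Theta$,
\[
\Big(\sum_i p_i u_i\Big)^2 \;=\; \sum_i p_i u_i^2 \;-\; \tfrac12 \sum_{i,j} p_i p_j (u_i-u_j)^2 .
\]
Expanding $(u_i-u_j)^2 = u_i^2 + u_j^2 - 2u_iu_j$ and using $\sum_j p_j = 1$ gives
\[
\tfrac12\sum_{i,j}p_ip_j(u_i-u_j)^2 \;=\; \sum_i p_i u_i^2 - \Big(\sum_i p_iu_i\Big)^2,
\]
which is the claim. Apply this with $u_i = f_i - h$ and note $u_i - u_j = f_i - f_j$, so the indicator drops out of the cross term. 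This cancellation is the one point that deserves care: it relies essentially on $p$ lying on the simplex, both to write the mixture residual as $\sum_i p_i u_i$ and to collapse $\sum_j p_j$ to $1$. Outside the simplex the quadratic form would be different.

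\textbf{Step 2 (integrate and sum).} Integrate the pointwise identity over $z\in[a,b]$. The left side becomes $\CRPS(F_t^{(p)},y_t)$. The first term on the right becomes $\sum_i p_i\,\CRPS(F_{i,t},y_t)$. The second becomes $\tfrac12\sum_{i,j}p_ip_j\int_a^b (F_{i,t}-F_{j,t})^2\,dz$. Exchanging the finite sums with the integral is trivial, since all integrands are bounded on a bounded interval. Summing over $t=1,\dots,T$ and using linearity yields
\[
L(p)=\sum_i p_iA_i - \tfrac12\sum_{i,j}p_ip_jB_{ij} = A^\top p - \tfrac12 p^\top Bp .
\]

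If the grid-averaged CRPS is intended, the same argument applies verbatim per grid point, and $A$ and $B$ are then averaged over $\mathcal G$. The same computation applied to an empirical CDF, with $p$ uniform over ensemble members, gives the stated ensemble closed form. There is no real obstacle; the only thing to check carefully is the simplex constraint in Step 1.
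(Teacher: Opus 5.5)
Your proof is correct and follows essentially the same route as the paper: a pointwise variance-type identity for the squared residual of a convex combination (the indicator cancels in the cross term because $\sum_i p_i = 1$), then integration over $z$ and summation over $t$. Your identity carries the correct factor $\tfrac12$ on $\sum_{i,j}p_ip_j(u_i-u_j)^2$; the paper's written intermediate identity omits this factor, although its stated lemma has it.
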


\begin{lemma}[Worst-case bounds]\label{bds}
For all $1 \le i, j \le N$ and $p \in \Theta$:
\textup{(i)}~$|A_i - A_j| \le 2(b-a)T$;
\textup{(ii)}~$|(Bp)_i - (Bp)_j| \le (b-a)T$;
\textup{(iii)}~$-\delta^\top B \delta \le (b-a)NT \|\delta\|_2^2$ for
every $\delta$ with $\mathbf{1}^\top \delta = 0$.
\end{lemma}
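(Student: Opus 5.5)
All three bounds follow from the pointwise range $0 \le F_{n,t}(z) \le 1$, combined with the Gram-type identity that also underlies Lemma~\ref{lem:quadratic}. Throughout, view each expert as an element $G_i := (F_{i,t})_{t=1}^T$ of the Hilbert space $H = L^2([a,b])^T$, with inner product $\langle G,G'\rangle = \sum_t \int_a^b G_t(z)G'_t(z)\,dz$. With this notation, $B_{ij} = \|G_i - G_j\|_H^2$.

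\textbf{Part (i).} The CRPS integrand $(F(z)-\1\{z\ge y\})^2$ takes values in $[0,1]$. Hence $0 \le \CRPS(F_{i,t},y_t) \le b-a$ for every $t$, so each $A_i$ lies in $[0,(b-a)T]$. Therefore $|A_i - A_j| \le (b-a)T \le 2(b-a)T$.

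\textbf{Part (ii).} Likewise $(F_{i,t}-F_{k,t})^2 \in [0,1]$, so every entry satisfies $B_{ik} \in [0,(b-a)T]$. Since $p \in \Theta$ is a probability vector, $(Bp)_i = \sum_k p_k B_{ik}$ is a convex combination of such entries and lies in the same interval. The difference of two numbers in $[0,(b-a)T]$ is at most $(b-a)T$ in absolute value, which gives (ii).

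\textbf{Part (iii).} This is the only step with real content: $B$ is a squared-distance matrix and hence conditionally negative definite.

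\begin{itemize}
\item Expand $B_{ij} = \|G_i\|^2 + \|G_j\|^2 - 2\langle G_i,G_j\rangle$.
\item For $\mathbf{1}^\top\delta = 0$, the two norm terms vanish after summing against $\delta_i\delta_j$. This leaves
\[
-\delta^\top B\delta = 2\Big\|\sum_i \delta_i G_i\Big\|_H^2 .
\]
\item The condition $\mathbf{1}^\top\delta = 0$ also lets us replace $G_i$ by $G_i - c$ for any fixed $c \in H$ without changing $\sum_i \delta_i G_i$. Choose $c \equiv \tfrac12$. Then $|G_{i,t}(z) - \tfrac12| \le \tfrac12$ pointwise, so $\|G_i - c\|_H^2 \le (b-a)T/4$.
\item By the triangle inequality and Cauchy--Schwarz,
\[
\Big\|\sum_i \delta_i (G_i - c)\Big\|^2 \le \|\delta\|_1^2 \max_i \|G_i - c\|^2 \le N\|\delta\|_2^2\,\frac{(b-a)T}{4}.
\]
\end{itemize}

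Combining these gives $-\delta^\top B\delta \le \tfrac12 (b-a)NT\|\delta\|_2^2$, which is stronger than the claimed bound.

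The main obstacle is recognizing the centering trick in (iii). Without it, the crude bound $\|G_i\|^2 \le (b-a)T$ produces an extra factor of $2$ and would miss the stated constant. Everything else is bookkeeping with the bounded integrands.
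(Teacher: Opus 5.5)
Your proof is correct and follows the same skeleton as the paper's. Both use entrywise boundedness for (i) and (ii), and for (iii) the Gram identity $-\delta^\top B\delta = 2\sum_t\int_a^b\bigl(\sum_n\delta_nF_{n,t}(z)\bigr)^2\,dz$ followed by a pointwise bound. The differences are in the details, and they favour you.

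In (i) the paper uses $|A_i-A_j|\le|A_i|+|A_j|$, whereas nonnegativity gives you the sharper $(b-a)T$. In (ii) your convex-combination argument is the clean version of the paper's displayed estimate, whose sum $\sum_n|B_{ni}p_i-B_{nj}p_j|$ has misplaced indices.

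The substantive difference is in (iii). The paper bounds $\bigl(\sum_n\delta_nF_{n,t}(z)\bigr)^2\le\bigl(\sum_n\delta_n\bigr)^2\le\|\delta\|_2^2$ and concludes $-\delta^\top B\delta\le2(b-a)T\|\delta\|_2^2$. But the middle quantity is $0$ when $\mathbf{1}^\top\delta=0$, and the outer inequality fails for $N\ge5$. Take $N=6$, experts $1$--$3$ point masses at $a$ and experts $4$--$6$ point masses at $b$ at every $t$, and $\delta=(1,1,1,-1,-1,-1)$. Then $-\delta^\top B\delta=18(b-a)T$, while $2(b-a)T\|\delta\|_2^2=12(b-a)T$. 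So the paper's closing inequality is false in general, although the lemma as stated, with the factor $N$, is true.

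Your centering at $c\equiv\tfrac12$ supplies the correct pointwise bound $\bigl|\sum_n\delta_nF_{n,t}(z)\bigr|\le\tfrac12\|\delta\|_1$. Your constant $\tfrac12(b-a)NT$ is attained in the example above, so it is sharp for even $N$. Your argument is the one that actually establishes (iii).
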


These bounds are worst-case and can be tightened under structural
assumptions on the experts; see Section~\ref{app:regproof}.

\paragraph{Proof sketch.}
Iterating \eqref{eq:mixineq} and \eqref{eq:update} yields the standard
loss identity
\begin{equation}
  \sum_{t=1}^{T} \CRPS(\hat F_t, y_t)
  \;\le\; \eta^{-1}\,\ln \frac{\nu_0(\Theta)}{\mu_{T+1}(\Theta)}
  \;=\; \eta^{-1}\,\ln \frac{1}{\mu_{T+1}(\Theta)},
  \label{eq:lossmin}
\end{equation}
using $\mu_0 \equiv 1$. Let $p^\star \in \arg\min_\Theta L$, which
exists by continuity on the compact simplex. Combining
\eqref{eq:lossmin} with Lemma~\ref{lem:quadratic} yields
\begin{equation}
  \Reg_T \;\le\;
  -\tfrac{b-a}{2}\,\ln \int_{\Theta}
  \exp\!\Big(-\tfrac{2}{b-a}\big[L(p) - L(p^\star)\big]\Big)\, dp.
  \label{eq:reg_integral}
\end{equation}
Setting $\delta = p - p^\star$ and using
$L(p) - L(p^\star) = (A - Bp^\star)^\top \delta
 - \tfrac{1}{2}\,\delta^\top B \delta$, Lemma~\ref{bds} bounds the
linear term by $3(b-a)T\sqrt{N}\,\|\delta\|_2$ and the quadratic term
by $(b-a)NT\,\|\delta\|_2^2$. Choosing
$r = 1/\big(T(3\sqrt{N} + N)\big)$ gives
\[
  \|\delta\|_2 \le r
  \;\implies\; L(p) - L(p^\star) \le b - a
  \;\implies\; \exp\!\big(-\tfrac{2}{b-a}[L(p) - L(p^\star)]\big)
              \ge e^{-2}.
\]
The integral in \eqref{eq:reg_integral} is therefore lower-bounded by
$e^{-2}\,|\Theta \cap B(p^\star, r)|$. Since $\Theta$ is a regular
$(N-1)$-simplex, this volume is at least $c_\Theta\, r^{N-1}$ for a
constant $c_\Theta$ depending only on $N$, yielding
\[
  \int_{\Theta}
  \exp\!\Big(-\tfrac{2}{b-a}\big[L(p) - L(p^\star)\big]\Big)\, dp
  \;\ge\; C\, T^{-(N-1)}.
\]
Substituting into \eqref{eq:reg_integral} gives the claimed bound.
Full details are in Appendix~\ref{app:regproof}.

\section{Data and Experiments}

\subsection{Datasets}

Due to computational reasons and storage constraints, we restrict our main experiments to 0.25$^{\circ}$ 2m temperature data over India from 2019-2026 and 3 day forecasts in intervals of 12 hours (to match the intervals of GenCast). More details about the datasets and preprocessing are described in Appendix \ref{app:data}. The idea can be applied to any general bounding box, time period or variable. We train the multi-model combiner from 2019-2022, validate over 2023, and test over 2024-2025 over ERA5 reanalysis data. Another reason for this 2019 cutoff is that the most recent data-driven forecast (GenCast) is trained uptil 2018, and FGN is till 2022. Hence, forecasts generated by these models prior to this period will be "testing on training data" which is unsound due to data leakage.

\subsection{Experiments}

Full experimental details are in Appendix \ref{app:expts}.

We train our supervised U-Net model on the forecast-ERA5 ground truth pairs from 2019-2022, and evaluate on 2024-2025 years. The details of the U-Net are in Appendix \ref{app:offline}. 
We evaluate our mixture method against individual forecasting models, equal weighting, online aggregators, and offline mixture baselines using CRPS. We also analyze regret against the best forecast in hindsight and, on selected subregions where optimization is tractable, the best convex combination of forecasts in hindsight. \\ During evaluation, we feed the U-Net prediction as an expert into the online expert-advice methods. The task is to aggregate $N=5$ heterogeneous ensemble forecast systems into a single probabilistic forecast for 2-meter temperature (T2m) over India.

\textbf{Baseline and regret benchmark}

\emph{Equal Weight} uniformly averages the $N$ available ensemble forecasts.
As an offline lower bound, we compute the \emph{best-in-hindsight} (BIH)
static convex combination
$w^\star(\ell)\in\Delta^{N-1}$
that minimizes the closed-form fair-CRPS over the entire test window for
each lead. This is solved using a per-lead simplex-constrained SLSQP
optimization over the $(a,B)$ coefficients:
$$
\mathrm{CRPS}(w)
\approx
\sum_t
\sum_{\text{cell}}
\left[
a(t,\ell,\cdot,\text{cell})^\top w
-
\frac12
w^\top
B(t,\ell,\text{cell})
\,w
\right].
$$
No online combiner can outperform this oracle in expectation; the gap to
BIH defines the \emph{regret}.

\subsection{Evaluation protocol}
For each method $m$, lead $\ell$, and initialization $t$, we evaluate CRPS
in two ways: (i) \emph{patch CRPS} over a $5\times5$ window centered on
Delhi and other Indian cities, and (ii) the spatial mean over the full
India grid. We evaluate all methods using cumulative-regret curves relative
to the best-in-hindsight (BIH) oracle, expert-weight contribution dynamics,
per-city single-pixel CRPS across major Indian cities, and India-wide
spatial averages. We additionally generate spatial improvement maps and
best-method-per-pixel visualizations relative to the Equal-Weight baseline.
For hybrid methods, we analyze the temporal trade-off between direct expert
weighting and U-Net trust through the $(N{+}1)$-band decomposition of expert
contributions. We retrain the MoWE framework over our experts with a CRPS loss as a benchmark \cite{chakraborty2025mowe}.

\section{Results}

\FloatBarrier
We observe that both, the trained U-Net and the VT-MOS algorithm, perform better than all individual experts when considering the average performance over lead times. Combining the two by using the trained U-Net as an expert to the VT-MOS algorithm leads to the best results across methods, and admits the least cumulative regret as well.

\begin{figure}
    \begin{subfigure}{0.48\textwidth}
        \centering
        \includegraphics[width=\linewidth]{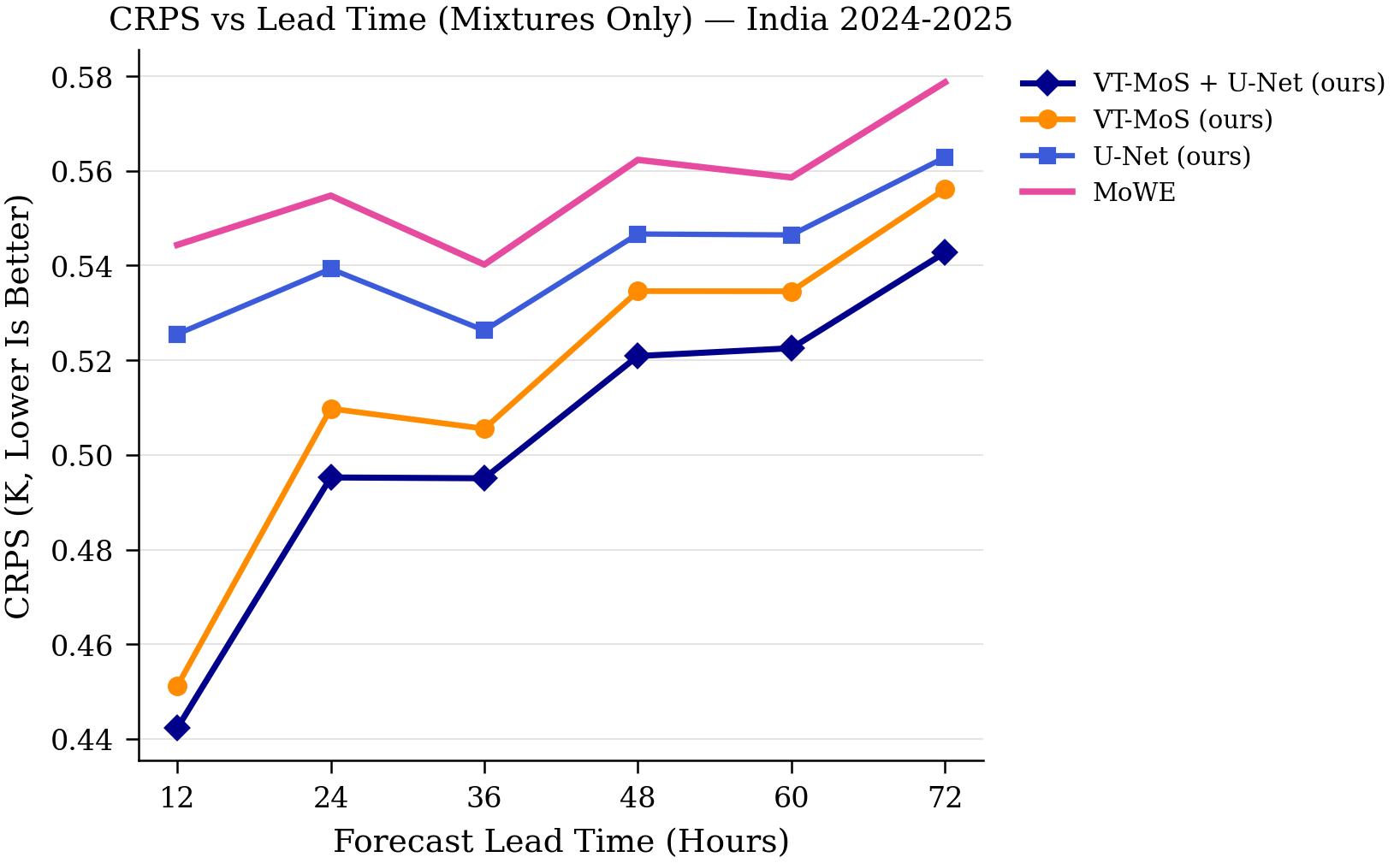}
        \caption{CRPS (lower better) across lead times for evaluation period}
        \label{fig:crpslead_mix}
    \end{subfigure}
    \hfill
    \begin{subfigure}{0.48\textwidth}
        \centering
        \includegraphics[width=\linewidth]{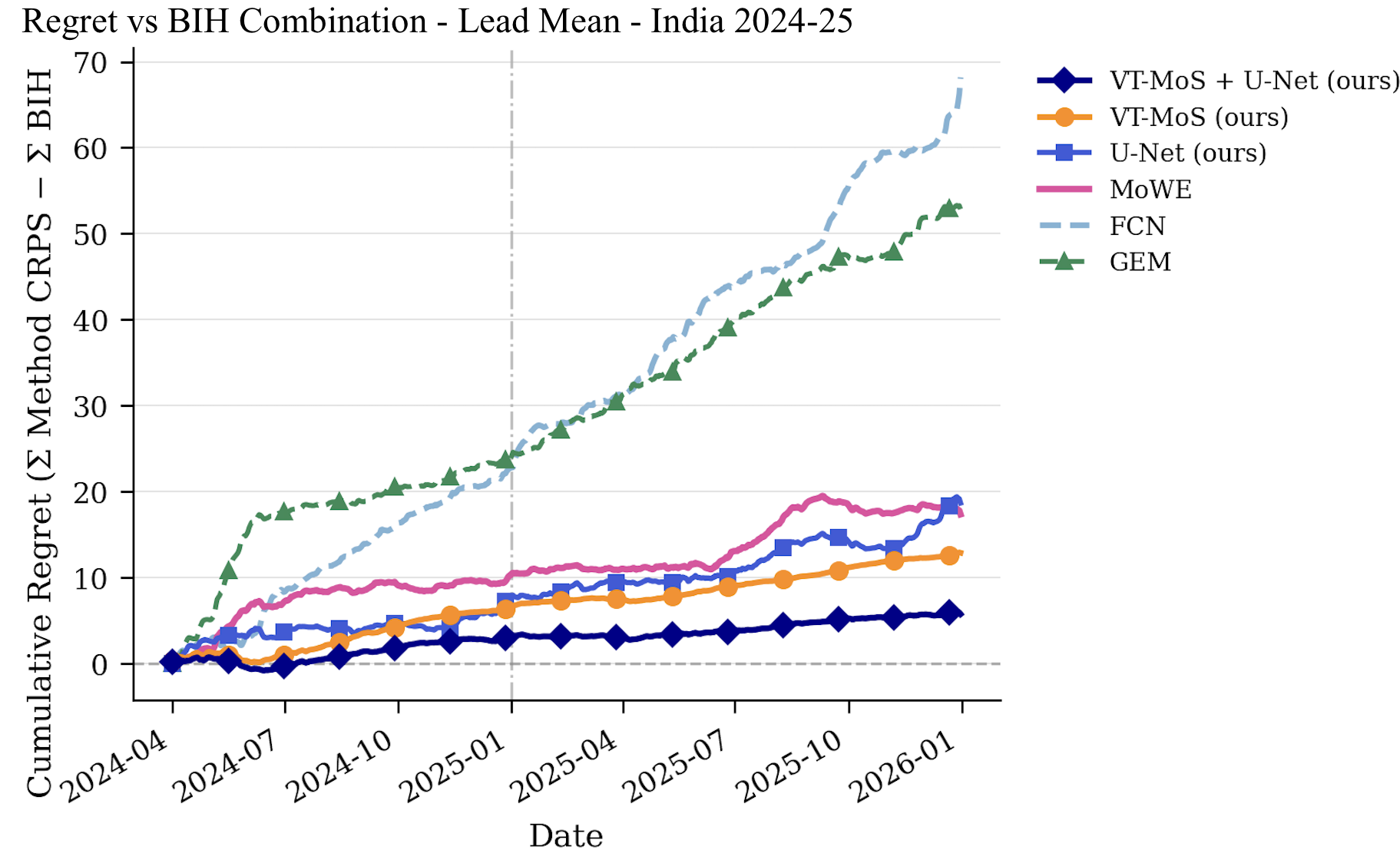}
        \caption{CRPS regret (2 individual experts for contrast) against Best-in-Hindsight (BIH)}
        \label{fig:bih_regret_lead}
    \end{subfigure}
    \begin{subfigure}{0.48\textwidth}
        \centering
        \includegraphics[width=\linewidth]{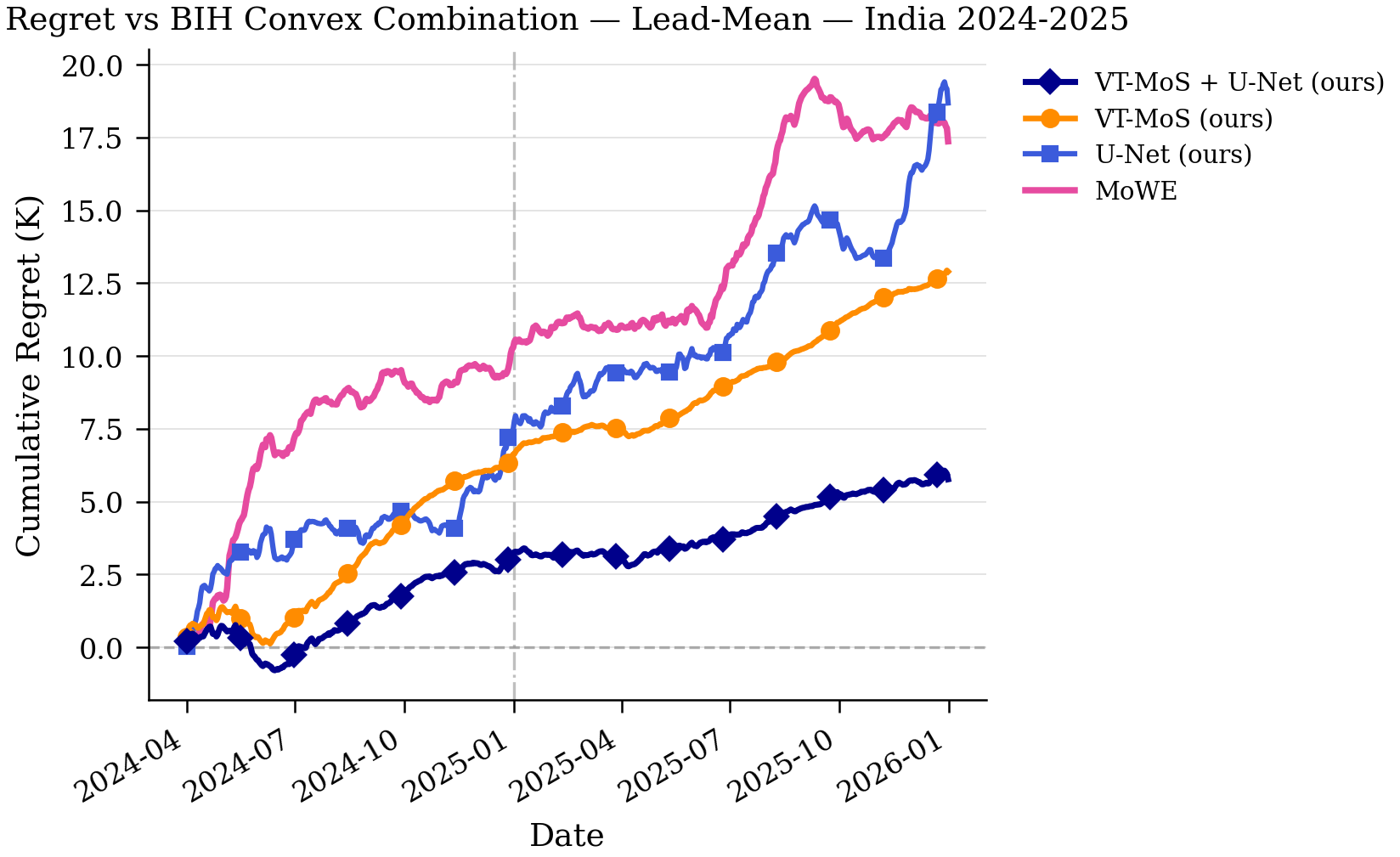}
        \caption{CRPS regret analysis (mixtures only)}
        \label{fig:crpsregret_mix}
    \end{subfigure}
    \hfill
    \begin{subfigure}{0.48\textwidth}
        \centering
        \includegraphics[width=\linewidth]{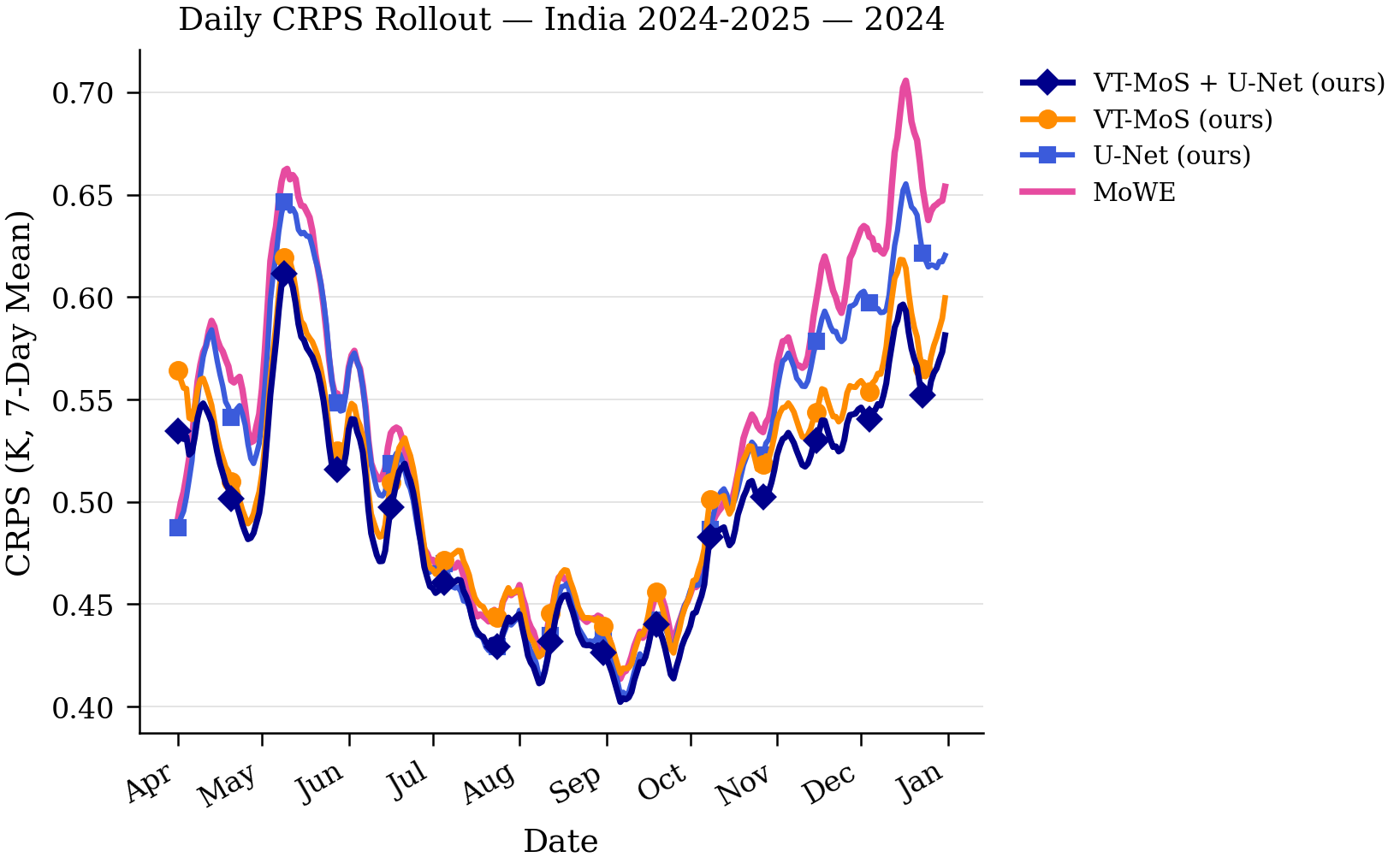}
        \caption{CRPS roll out for 2024}
        \label{fig:crpsrollout_mix}
    \end{subfigure}
    \caption{Evaluation of performance across rollout, lead-time, and regret-based metrics}
    \label{fig:mix_results}
\end{figure}

\begin{table}
  \centering
  \small
  \setlength{\tabcolsep}{4pt}
  \caption{Per-lead CRPS (K, lower is better) on India 2024--2025}
  \label{tab:crps-per-lead}
  \begin{tabular}{lccccccc>{\columncolor{black!4}}c}
  \toprule
  \textbf{Method} & 12h & 24h & 36h & 48h & 60h & 72h & Overall \\
  \midrule
  \textbf{VT-MOS + U-Net}     & \textbf{0.442} & \textbf{0.495} & \textbf{0.495} & \textbf{0.521} & \textbf{0.523} & \textbf{0.543} & \textbf{0.503} \\
  VT-MOS + MoWE            & 0.452 & 0.511 & 0.505 & 0.534 & 0.533 & 0.556 & 0.515 \\
  VT-MOS                       & 0.451 & 0.510 & 0.506 & 0.535 & 0.535 & 0.556 & 0.515 \\
  Vovk-AA                      & 0.465 & 0.524 & 0.534 & 0.554 & 0.566 & 0.578 & 0.537 \\
  U-Net offline                & 0.525 & 0.539 & 0.526 & 0.547 & 0.546 & 0.563 & 0.541 \\
  MoWE               & 0.544 & 0.555 & 0.540 & 0.562 & 0.559 & 0.579 & 0.557 \\
  Equal Weight                 & 0.679 & 0.712 & 0.681 & 0.711 & 0.694 & 0.718 & 0.699 \\
  \midrule
  \multicolumn{8}{l}{\textit{Individual experts}} \\
  FCN3            & 0.474 & 0.558 & 0.575 & 0.614 & 0.627 & 0.652 & 0.583 \\
  FEM             & 0.623 & 0.628 & 0.592 & 0.616 & 0.604 & 0.629 & 0.615 \\
  FGN             & 1.061 & 1.018 & 1.005 & 1.000 & 1.005 & 1.000 & 1.015 \\
  GenCast         & 1.095 & 1.020 & 1.041 & 0.990 & 1.025 & 0.982 & 1.026 \\
  IFS-ENS         & 1.192 & 1.202 & 1.185 & 1.201 & 1.183 & 1.197 & 1.193 \\
  \bottomrule
  \end{tabular}
\end{table}
Figure~\ref{fig:mix_results}c and Figure~\ref{fig:mix_results}d show the regret of different methods with respect to the best mixture in hindsight. The cumulative regret curves grow slowly with $T$, appearing logarithmic, which is consistent with the theoretical guarantee for VT-MOS. The hybrid VT-MOS + U-Net method has the lowest regret throughout the evaluation period, indicating that it stays closest to the hindsight-optimal mixture. In contrast, individual experts accumulate substantially larger regret. This shows that adaptive mixing provides a advantage over both fixed experts and other aggregation baselines.

\paragraph{Ablations:}

We perform some ablations to see which models/model-pairs have the most impact on the mixture's performance, and FCN and FGN are consistently the biggest contributers to the CRPS accuracy of the mixture, as shown in Figure \ref{fig:ablations}.

\section{Conclusion}

We present an extensible offline-online weather forecasting framework while developing novel regret bounds for mixing outputs of different probabilistic models with aggregation algorithms.  Our combination achieves better performance over experts and other methods, with formal guarantees on performance. This is important for a sensitive field like weather forecasting and helps mitigate the black-box nature of AI weather forecasting with lightweight, interpretable expert-based algorithms. Our work has the limitations of a small training period and limited quantity of models used for comparison due to data and compute constraints. Our analysis is also constrained by the intractability of the relevant integrals and the simplex optimization. We plan future work that explores more sources of data for adaptation across stations, satellite, radar, etc, train and evaluate for longer periods of time, more benchmarking, and test-time finetuning frameworks like low-rank adaptation.

\begin{figure}[!htbp]
    \centering
    \includegraphics[width=1.0\linewidth]{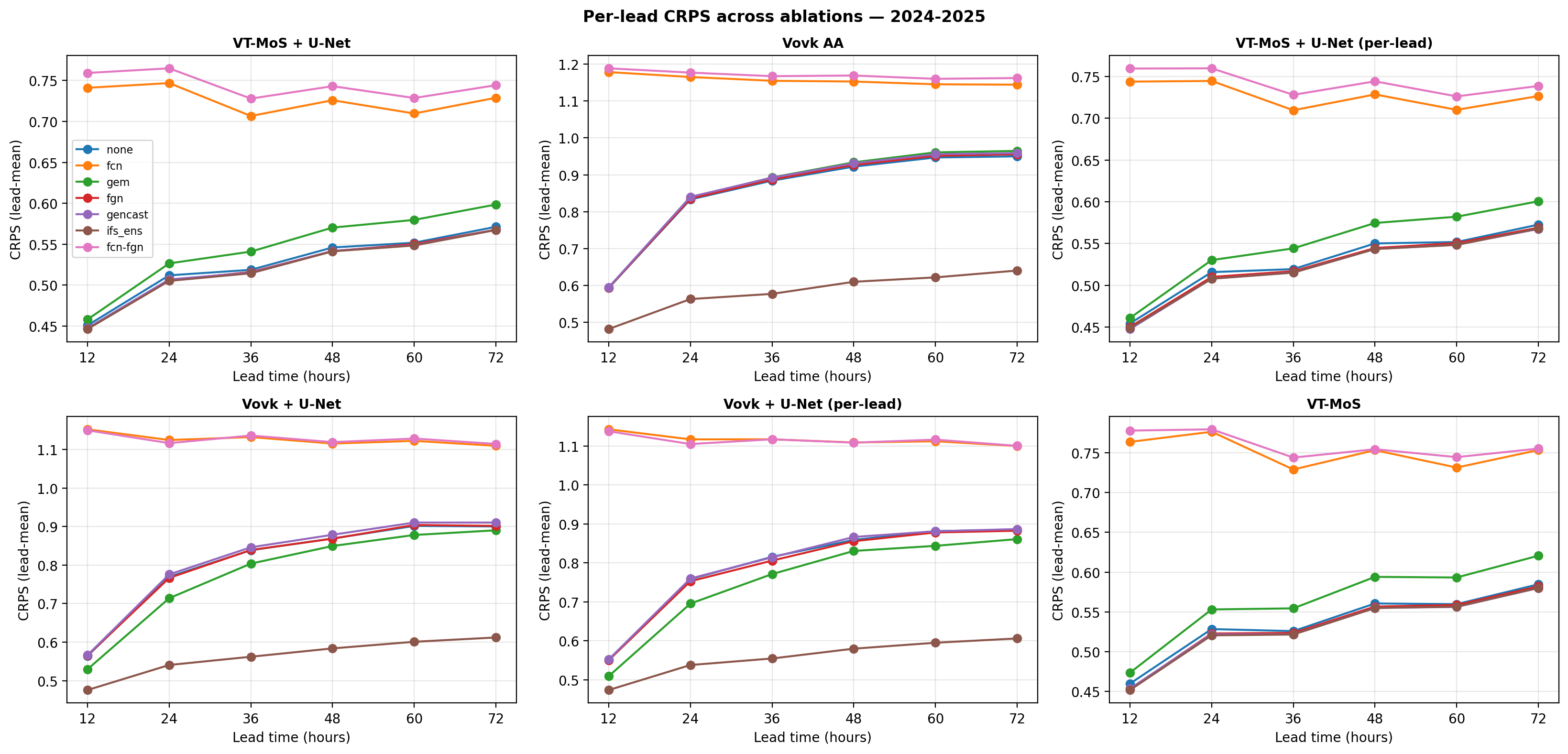}
    \caption{Ablations with labels meaning held out models, showing remarkably better performance of FCN and FGN}
    \label{fig:ablations}
\end{figure}

\clearpage
\newpage
\bibliographystyle{unsrt}  
\bibliography{refs}

\newpage
\appendix

\let\addcontentsline\oldaddcontentsline

\section*{Appendix}

\tableofcontents   

\newpage

\section{Broader Impact}
\label{sec:broader_impact}
Weather forecasting is critical for many fields like agriculture, energy, disaster management, etc, and a more accurate forecast can have broader societal benefits. While potential for positive transformation is there if ML-based weather methods like these are operationalized, over-reliance on these methods should be considered. These risks need to be managed properly and with deep engagement with domain experts, such as the Indian Meteorological Department and other agencies, with more thorough evaluations than done here for real world deployment.

\section{Data}
\label{app:data}

We obtain the FGN, GenCast, IFS-ENS and GEM data from cloud-based datasets like \href{https://console.cloud.google.com/storage/browser/weathernext;tab=objects?prefix=&forceOnObjectsSortingFiltering=false}{WeatherNext}, \href{https://console.cloud.google.com/storage/browser/weatherbench2;tab=objects?prefix=&forceOnObjectsSortingFiltering=false}{WeatherBench2}, \href{https://dynamical.org/}{dynamical.org}, and \href{https://gemv3.salient-open-data.com/assets/README}{Salient GEM v3}, while the FCN3 data is obtained by running the model on our system to obtain a forecast dataset primarily using \href{https://catalog.ngc.nvidia.com/orgs/nvidia/teams/earth-2/models/fourcastnet3?version=0.1.0}{Earth2Studio}. The ERA5 ground truth is available from \href{https://github.com/google-research/arco-era5}{ARCO-ERA5}.

\begin{table}[H]
\centering
\small

\caption{Forecast and reanalysis datasets used for training and evaluation.}
\label{tab:datasets}

The bounding box used for the Indian domain is 6.0 N to 38.0 N and 68.0 E to 98.0 E. 

\renewcommand{\arraystretch}{1.15}

\begin{tabular}{@{}l l c c@{}}
\toprule
\textbf{Name} & \textbf{Developer} & \textbf{Period} &
\textbf{Members} \\
\midrule

\multicolumn{4}{@{}l}{\textit{Forecast experts (training inputs)}} \\
\addlinespace[1pt]

FCN3    & NVIDIA & 2019--2026 & 16 \\
FGN     & Google Research & 2022--2026\textsuperscript & 64 \\
GenCast & Google DeepMind & 2019--2026 & 48 \\
IFS-ENS & ECMWF & 2019--2026 & 50--51 \\
GEM     & ECCC / Meteorological Service of Canada & 2019--2026 & 50 \\

\midrule

\multicolumn{4}{@{}l}{\textit{Ground truth (training target / evaluation)}} \\
\addlinespace[2pt]

ERA5 & ECMWF reanalysis & 2019--2026 & --- \\

\bottomrule
\end{tabular}
\end{table}

We acknowledge that FGN is available only from 2022, so we ignore it from 2019-2021 in training, and have only limited 2022 data for it to contribute to training.

\section{Experiments}
\label{app:expts}

\subsection{Training}
We conduct our experiments on a single H100 GPU, and more training details are given in \ref{app:offline}.

\subsection{Evaluation}

We use the U-Net's predictions as another input to the VT-MOS online algorithm in addition to the other forecasts in an online fashion, and roll out the forecasts over 2 years in an online fashion. Now we specify how we produce our evaluations: the common CRPS estimator (App.~\ref{app:eval-metric}), the
per-lead-time CRPS aggregated over the whole evaluation period
(App.~\ref{app:eval-perlead}), and the two cumulative-regret
quantities --- against the best individual forecast model
(App.~\ref{app:eval-regret-expert}) and against the best static convex
combination in hindsight (App.~\ref{app:eval-regret-bih}).

\paragraph{Scoring protocol and the common CRPS estimator}
\label{app:eval-metric}

\paragraph{Indexing.} The evaluation period is India 2024--2025; the
data split is 2019--2022 (train), 2023 (validation), 2024--2025 (test),
so the window strictly post-dates every data-driven forecaster's
training cut-off. Forecast issue (init) times are indexed $t \in [T]$,
one per available init date in the test span, and lead times by
$\ell \in \{1,\dots,L\}$ with $L = 6$ corresponding to
$\tau \in \{12,24,36,48,60,72\}$\,h. The target is 2-metre temperature
on the $129 \times 121$ India grid $\mathcal{G}$
($[6^\circ,38^\circ]$\,N $\times$ $[68^\circ,98^\circ]$\,E at
$0.25^\circ$); ground truth $y_{t,\ell}$ is the matching ERA5
reanalysis field. There are $N=5$ operational ensemble experts (FCN3,
FGN, GenCast, IFS-ENS, GEM); combiners and the offline U-Net
pseudo-expert are scored on exactly the same metric as the raw experts.

\paragraph{One estimator for every method.} To make all methods
comparable, each method's predictive distribution at $(t,\ell)$ is
materialised as a single per-gridpoint ensemble of $M = 50$ members:
raw experts use their native members (resampled to $M$), and a convex
combiner with weight vector $w \in \Theta$ draws members per pixel in
proportion to $w$ from the expert pools. The reporting CRPS is then the
\emph{unbiased} (fair) kernel estimator, identical to
\texttt{xskillscore.crps\_ensemble} and to the closed form in
Eq.~\eqref{eq:crps}:
\begin{equation}
  \widehat{\CRPS}\big(\{e_i\}_{i=1}^{M},\, y\big)
  \;=\;
  \frac{1}{M}\sum_{i=1}^{M} \lvert e_i - y \rvert
  \;-\;
  \frac{1}{2\,M(M-1)}\sum_{i=1}^{M}\sum_{j=1}^{M}
        \lvert e_i - e_j \rvert ,
  \label{eq:eval-kernel-crps}
\end{equation}
evaluated at every grid point. The $M(M-1)$ denominator (rather than
$M^2$) removes the finite-ensemble bias so that combiners are not
rewarded merely for carrying more effective members than a single
expert. This estimator is the single source of truth for all reported
CRPS; the differentiable closed-form Gaussian-mixture CRPS is used only
as a training loss and never for reporting.

\paragraph{Spatial reduction} Let $\mathcal{G}_{t,\ell}\subseteq\mathcal{G}$ be the
set of grid points with a finite truth value and finite forecast at
$(t,\ell)$. The per-sample score is the masked spatial mean
\begin{equation}
  S^{(\mathrm{m})}_{t,\ell}
  \;=\;
  \frac{1}{\lvert\mathcal{G}_{t,\ell}\rvert}
  \sum_{g\,\in\,\mathcal{G}_{t,\ell}}
  \widehat{\CRPS}^{(\mathrm{m})}_{t,\ell,g},
  \label{eq:eval-spatial-mean}
\end{equation}
for method $\mathrm{m}$. Grid points or whole $(t,\ell)$ samples that
are missing for a method (e.g.\ an expert absent on a given init) are
excluded from the average via \texttt{nanmean} rather than imputed with
zeros, so a method is never silently credited for data it did not
produce.

\paragraph{Per-lead-time CRPS over the whole evaluation period}
\label{app:eval-perlead}

The headline per-lead metric is the time average of
$S^{(\mathrm{m})}_{t,\ell}$ over every init in the test period, taken
independently for each lead. With $\mathcal{T}_\ell = \{\,t :
S^{(\mathrm{m})}_{t,\ell}\text{ finite}\,\}$,
\begin{equation}
  \overline{S}^{\,(\mathrm{m})}_{\ell}
  \;=\;
  \frac{1}{\lvert\mathcal{T}_\ell\rvert}
  \sum_{t\,\in\,\mathcal{T}_\ell}
  S^{(\mathrm{m})}_{t,\ell},
  \qquad \ell = 1,\dots,L,
  \label{eq:eval-perlead}
\end{equation}
This yields one CRPS value per lead,
spanning the entire 2024--2025 evaluation period. The single \emph{overall} CRPS reported per method
is the mean of $S^{(\mathrm{m})}_{t,\ell}$ over all finite $(t,\ell)$
entries; equivalently it summarises the same array collapsed over both
axes.

\paragraph{Cumulative regret vs.\ the best individual forecast model}
\label{app:eval-regret-expert}

The first regret diagnostic compares each combiner to a \emph{per-init
best raw expert} oracle which is a comparator strictly stronger than any
single fixed expert, since it is allowed to switch to whichever raw
ensemble is best on each individual init. Let
$S^{(n)}_{t,\ell}$ denote the per-sample score
\eqref{eq:eval-spatial-mean} of raw expert $n$, restricted to experts
with at least one finite score in the period. The oracle's per-init
score and a method's per-init score are the lead means
\begin{equation}
  b^{\mathrm{raw}}_{t}
  \;=\;
  \frac{1}{L}\sum_{\ell=1}^{L}
  \min_{n\in[N]} S^{(n)}_{t,\ell},
  \qquad
  \bar S^{(\mathrm{m})}_{t}
  \;=\;
  \frac{1}{L}\sum_{\ell=1}^{L} S^{(\mathrm{m})}_{t,\ell},
  \label{eq:eval-bestraw}
\end{equation}
and the per-step regret is
$r^{(\mathrm{m})}_t = \bar S^{(\mathrm{m})}_{t} - b^{\mathrm{raw}}_{t}$.
The plotted quantity is the running cumulative regret
$R^{(\mathrm{m})}_T = \sum_{t=1}^{T} r^{(\mathrm{m})}_t$. Because the raw experts are the
vertices of the simplex $\Theta$, regret against this per-init oracle
is an \emph{upper bound} on regret against the best fixed single expert
and a conservative proxy for the best-in-hindsight regret of
Eq.~\eqref{eq:regret}. (The cumulative-regret rollout figure forms the
oracle as the lead-mean of each expert followed by the per-init
minimum; the envelope analysis below takes the per-$(t,\ell)$ minimum
across experts first and then the lead mean.)

\paragraph{Cumulative regret vs.\ the best static convex combination}
\label{app:eval-regret-bih}

The strongest comparator is the best-in-hindsight (BIH) static convex
combination, i.e.\ the $\min_{w^\star\in\Theta}$ term of
Eq.~\eqref{eq:regret} with a single weight vector held fixed across the
whole period. By Lemma~\ref{lem:quadratic} the CRPS of a convex
combination is quadratic in the weights,
\begin{equation}
  L_{t,\ell}(w)
  \;=\;
  A_{t,\ell}^{\!\top} w
  \;-\;
  \tfrac{1}{2}\, w^{\!\top} B_{t,\ell}\, w ,
  \qquad w \in \Theta,
  \label{eq:eval-bih-quad}
\end{equation}
where $A_{t,\ell}\in\R^{N}$ collects the per-expert skill terms and
$B_{t,\ell}\in\R^{N\times N}$ the pairwise spread terms of
Eq.~\eqref{eq:eval-kernel-crps}, averaged over the patch's valid cells.
$B$ is conditionally negative semi-definite, so $-\tfrac12 w^\top B w$
is convex on $\Theta$ and the hindsight problem
\begin{equation}
  w^\star(\ell)
  \;=\;
  \argmin_{w\,\in\,\Theta}\;
  \sum_{t=1}^{T} L_{t,\ell}(w)
  \label{eq:eval-bih-solve}
\end{equation}
is a convex program over the simplex, solved \emph{per lead} by
sequential least-squares quadratic programming (SLSQP, equality
constraint $\1^{\!\top}w = 1$, bounds $w\ge0$), with a defensive sweep
over the $N$ vertices and the uniform point to guard against numerical
edge cases. For samples missing some experts, the corresponding
entries of $w^\star(\ell)$ are zeroed and the vector renormalised over
the available experts (the same masking rule as the training loss), so
the oracle is never credited with absent forecasts.

\paragraph{Patch restriction and the regret curve.}
Solving \eqref{eq:eval-bih-solve} at full $129\times121$ resolution is
intractable, so the BIH diagnostic restricts to a $10\times8$ patch
(used as the well-defined static-convex comparator). Substituting $w^\star(\ell)$ back into
\eqref{eq:eval-bih-quad} gives the per-$(t,\ell)$ BIH score, and the
reported quantity is the cumulative regret of each method against it,
$\sum_{t}\big(S^{(\mathrm{m})}_{t} - S^{\mathrm{BIH}}_{t}\big)$ as a
lead mean, overlaid on the theoretical $c_{\mathrm{theory}}\ln t + C$
envelope. Because $\Theta$ contains the simplex vertices, this BIH
comparator is at least as strong as the best individual expert of
App.~\ref{app:eval-regret-expert}, so regret staying inside the
logarithmic envelope here is the most stringent empirical confirmation
of Theorem~\ref{thm:reg}.

\section{Empirical CRPS estimator}
\label{app:crps}

For a finite ensemble $\{x_1,\dots,x_M\} \subset \R$ sampled from a
predictive distribution $F$, we use the unbiased $M(M{-}1)$-normalised
plug-in estimator
\begin{equation}
  \widehat{\mathrm{CRPS}}(\{x_i\}, y)
  \;=\;
  \underbrace{\frac{1}{M}\sum_{i=1}^{M} |x_i - y|}_{\text{skill term}}
  \;-\;
  \underbrace{\frac{1}{M(M-1)}
              \sum_{1 \le i < j \le M} \big(x_{(j)} - x_{(i)}\big)}_{\text{spread term}},
  \label{eq:crps_unbiased}
\end{equation}
where $x_{(1)} \le x_{(2)} \le \cdots \le x_{(M)}$ denote the order
statistics of the ensemble. The closed-form
\[
  \CRPS(F, y)
  \;=\; \tfrac{1}{M}\sum_i |e_i - y|
       \;-\; \tfrac{1}{2M^2}\sum_{i,j}|e_i - e_j|
\]
referenced in the main text follows from arguments analogous to those
used in the proof of Lemma~\ref{lem:quadratic} and is omitted.

\section{Offline U-Net: architecture, training, and ablation}
\label{app:offline}

\paragraph{Architecture and parameter count.}
The offline aggregator is a U-Net $f_\theta$ with FiLM lead-time
conditioning \citep{filmcitation}. The encoder applies four
\texttt{DoubleConv} stages of channels $(32, 64, 128, 256)$ separated
by $2 \times 2$ max-pool downsamples, and the decoder mirrors it with
transposed-conv upsamples and U-Net skip connections from the
encoder. Two $1 \times 1$ conv heads on the final feature map produce
expert logits and a scalar bias map. A softmax over experts gives
per-pixel convex weights
\[
  w_{i,n,h,w}
  \;=\; \frac{\exp(z_{i,n,h,w})\, m_{i,n}}
             {\sum_{n'} \exp(z_{i,n',h,w})\, m_{i,n'}},
  \qquad \sum_{n} w_{i,n,h,w} = 1,
\]
where $m_{i,n} \in \{0,1\}$ is a per-init expert-availability mask
that excludes experts with missing data on a given init time. 

\paragraph{Training}

Each expert's per-pixel mean
$\mu_n \in \R^{H \times W}$ and standard deviation
$\sigma_n \in \R^{H \times W}$ are stacked into the input
$S_t \in \R^{N \times 2 \times H \times W}$. The U-Net takes
$(S_t, \ell)$ and produces per-pixel weights
$w \in \R^{N \times H \times W}$, which induce a combined predictive
CDF $\hat F_t^{(w)}$. We train end-to-end against
$$ L_\theta(S_t, \ell) = \CRPS\!\big(\hat F_t^{(w)},\, y_\ell\big),$$
where $y_\ell \in \R^{H \times W}$ is the ground truth at lead time
$\ell$.

We train $\theta$ to minimise the fair (kernel) CRPS of the combined
forecast against the ERA5 ground truth $y \in \R^{H \times W}$. For
every pixel we draw $S = 50$ samples by per-pixel weighted resampling
under $w$ and minimise the unbiased kernel CRPS estimator
\[
  \widehat{\CRPS}(\{\bar y_s\}, y)
  \;=\; \frac{1}{S}\sum_{s=1}^{S} |\bar y_s - y|
       \;-\; \frac{1}{2 S(S-1)}
             \sum_{s \neq s'} |\bar y_s - \bar y_{s'}|,
\]
averaged over pixels with valid ground truth.

\paragraph{Optimisation and runtime.}
We optimise using AdamW (learning rate $3 \times 10^{-4}$, weight
decay $10^{-4}$, batch size $8$) for $30$ epochs, with a
cosine-annealing schedule down to $\eta_{\min} = 10^{-6}$. Each
training example is an (init, lead) pair, so the model sees all
$L = 6$ leads independently and the FiLM head learns lead-specific
behaviour. Years $2019$--$2022$ are used for training, $2023$ for
validation, and $2024$--$2025$ are held out for testing. End-to-end
training takes $\approx 5$\,h on a single H100. All inputs and ground-truth
tensors are stored as Zarr at 0.25° resolution and accessed via xarray.

\paragraph{Per-lead variant.}
We also train an unconditional variant $f_{\theta_\ell}$ \emph{per
lead}: $L$ independent U-Nets with the same encoder--decoder topology
but no lead embedding and no FiLM conditioning, each fit only to its
own lead. This trades parameter sharing across leads for a tighter
fit at each individual horizon and serves as an ablation against the
FiLM-conditioned single model.

\section{Per-pixel CRPS improvement and best raw expert maps}
\label{app:spatial-maps}
 
\begin{figure}[h]
  \centering
  \begin{subfigure}[t]{0.49\linewidth}
    \centering
    \includegraphics[width=\linewidth]{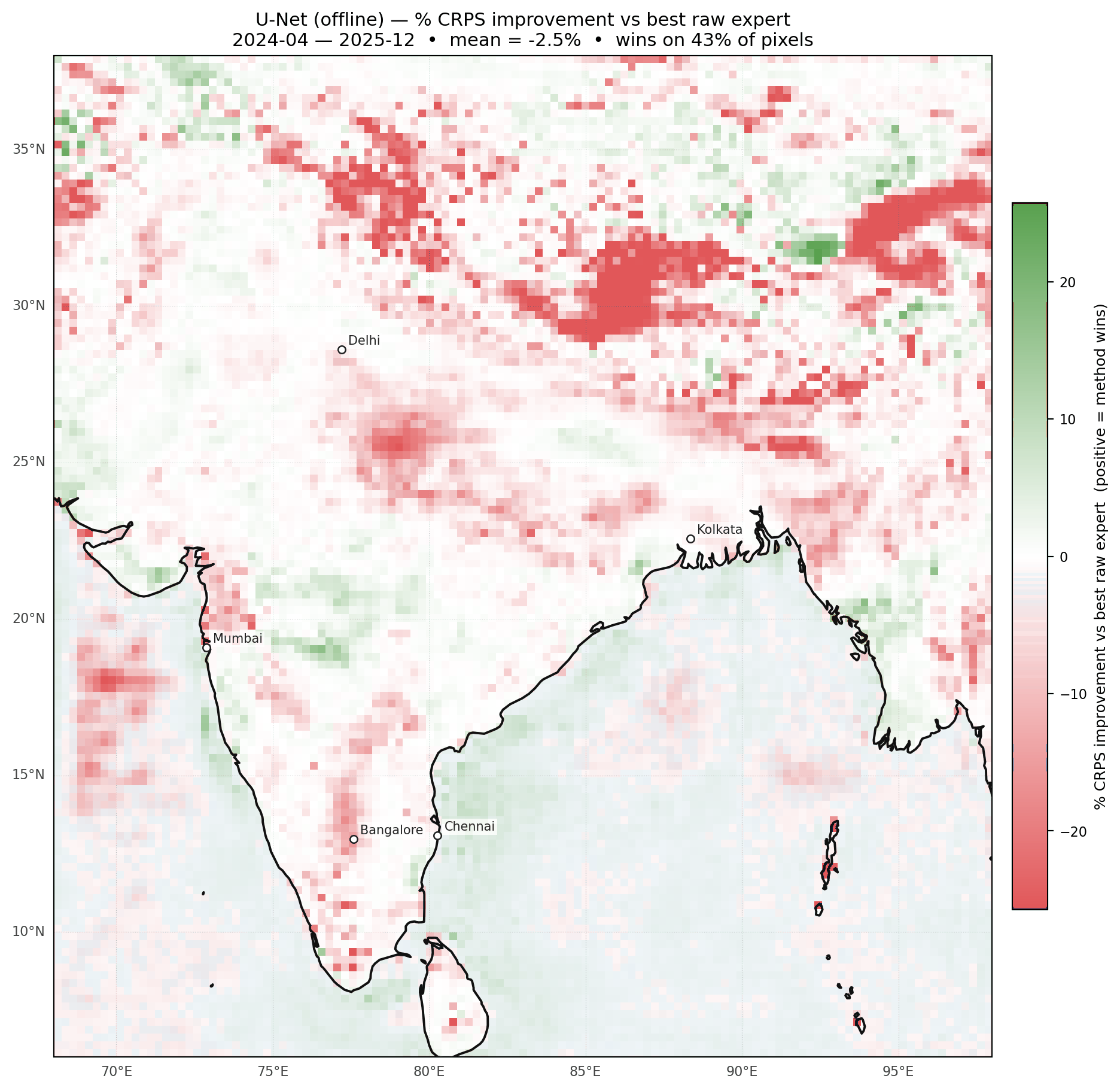}
    \caption{U-Net (offline).}
  \end{subfigure}\hfill
  \begin{subfigure}[t]{0.49\linewidth}
    \centering
    \includegraphics[width=\linewidth]{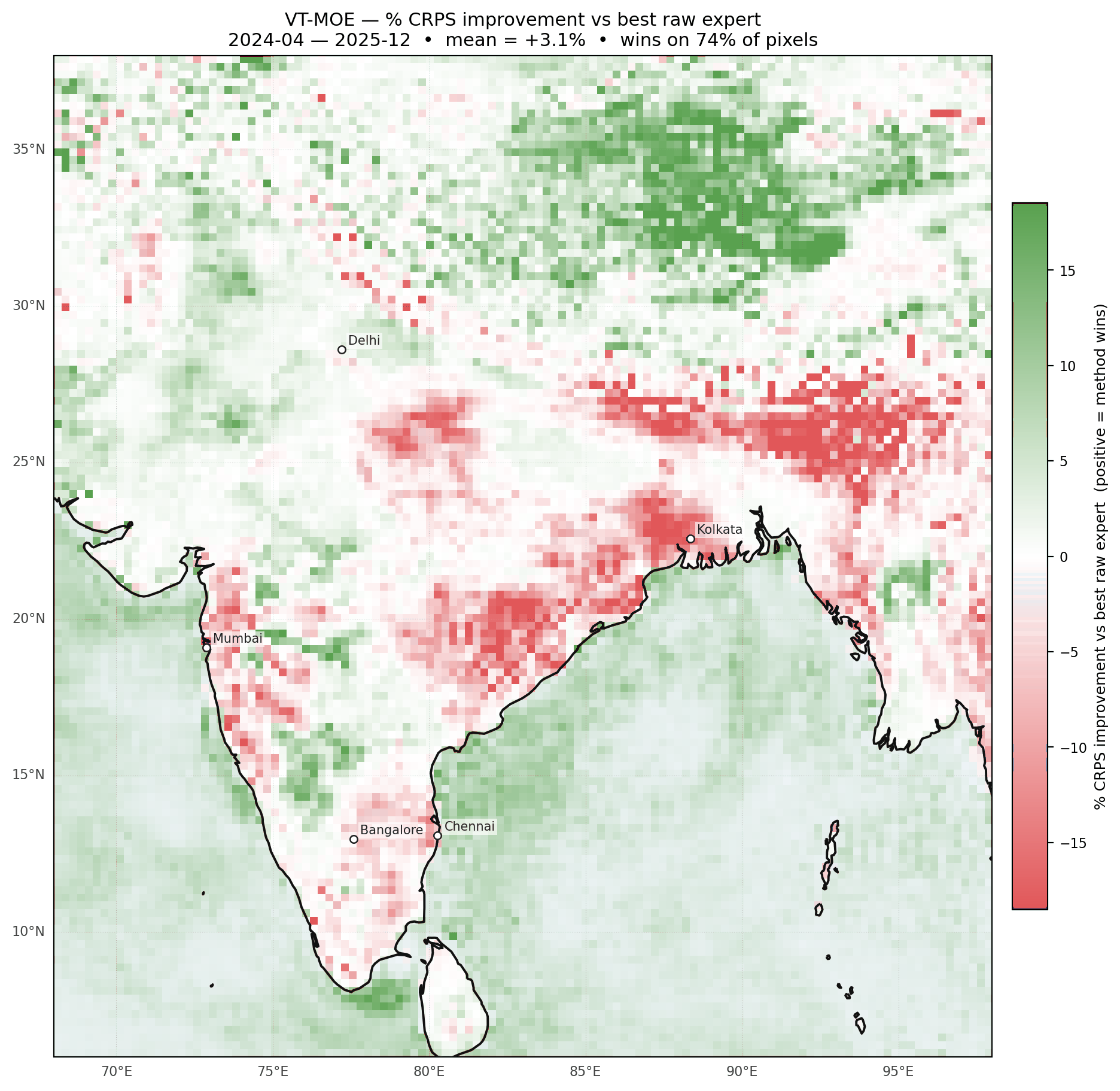}
    \caption{VT-MOS.}
  \end{subfigure}\\[1mm]
  
  \begin{subfigure}[t]{0.49\linewidth}
    \centering
    \includegraphics[width=\linewidth]{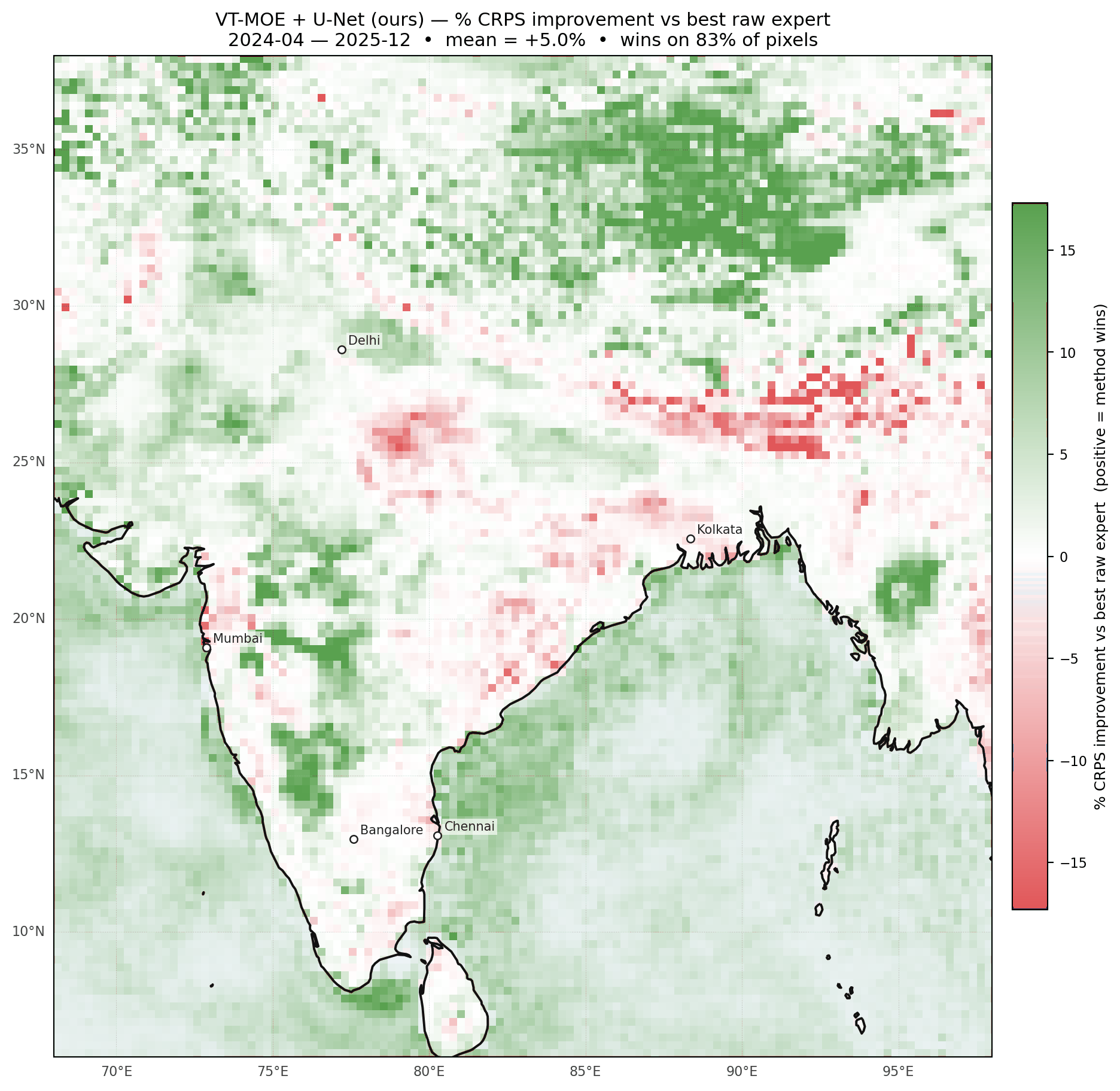}
    \caption{VT-MOS\,+\,U-Net.}
  \end{subfigure}
  \caption{Per-pixel \% CRPS improvement over the best raw expert
  (green i.e. positive = combiner wins), aggregated over 2024--2025. Hybrids
  extend the positive footprint into the trans-Himalayan and
  Western-Ghats belts.}
  \label{fig:avg-improv-maps}
\end{figure}
 
\begin{figure}[h]
  \centering
  \begin{subfigure}[t]{0.32\linewidth}
    \centering
    \includegraphics[width=\linewidth]{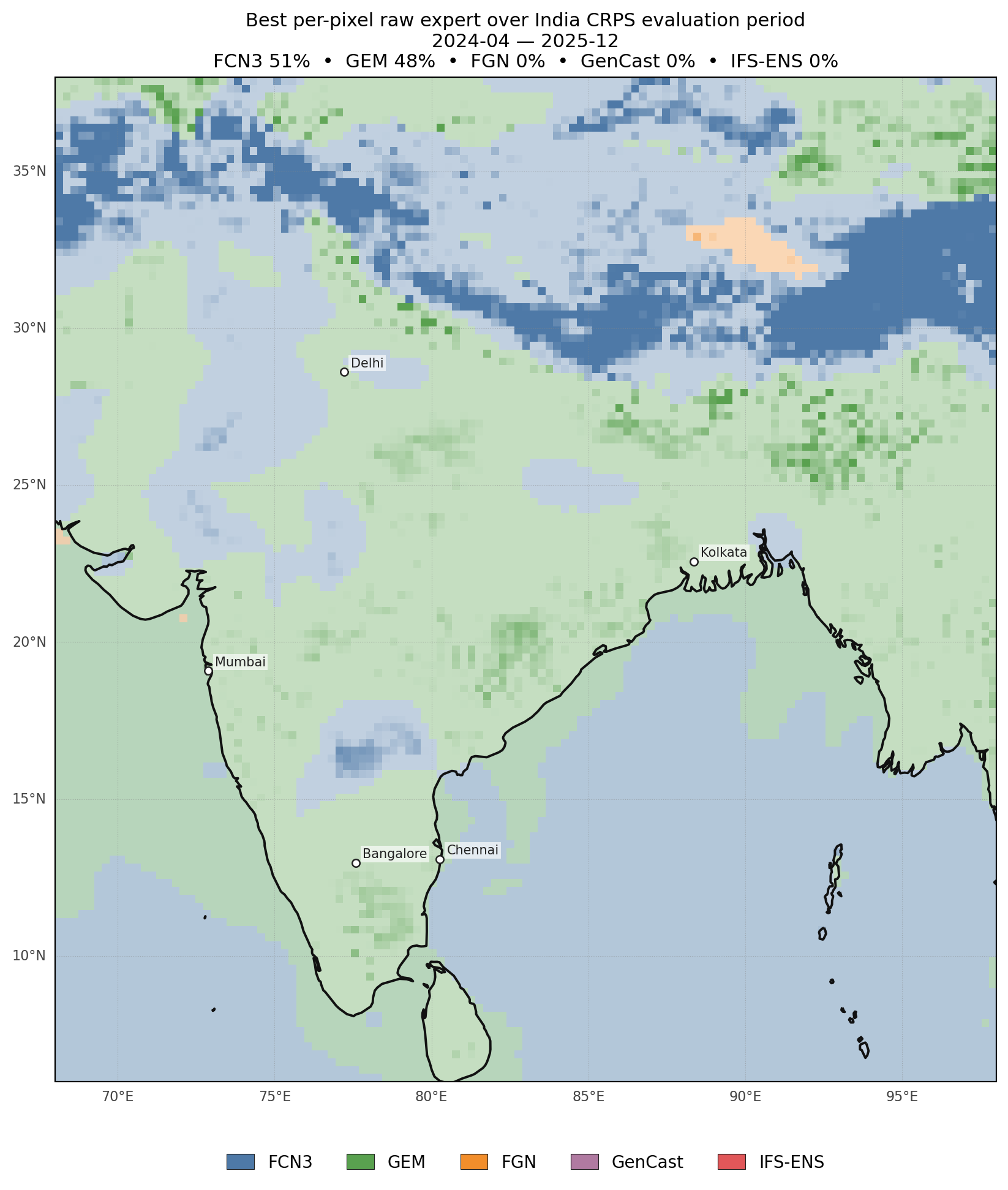}
    \caption{2024--2025.}
  \end{subfigure}\hfill
  \begin{subfigure}[t]{0.32\linewidth}
    \centering
    \includegraphics[width=\linewidth]{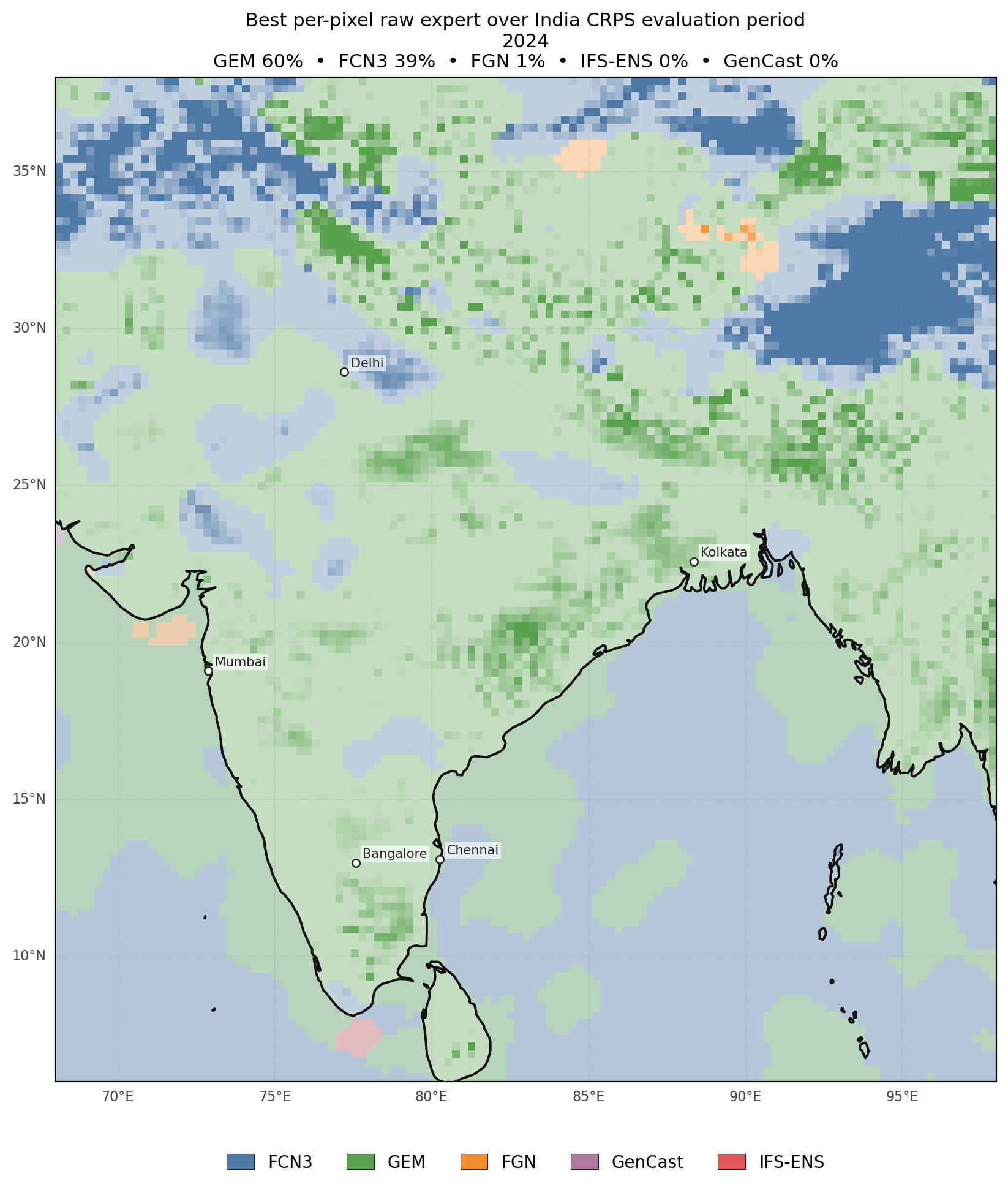}
    \caption{2024.}
  \end{subfigure}\hfill
  \begin{subfigure}[t]{0.32\linewidth}
    \centering
    \includegraphics[width=\linewidth]{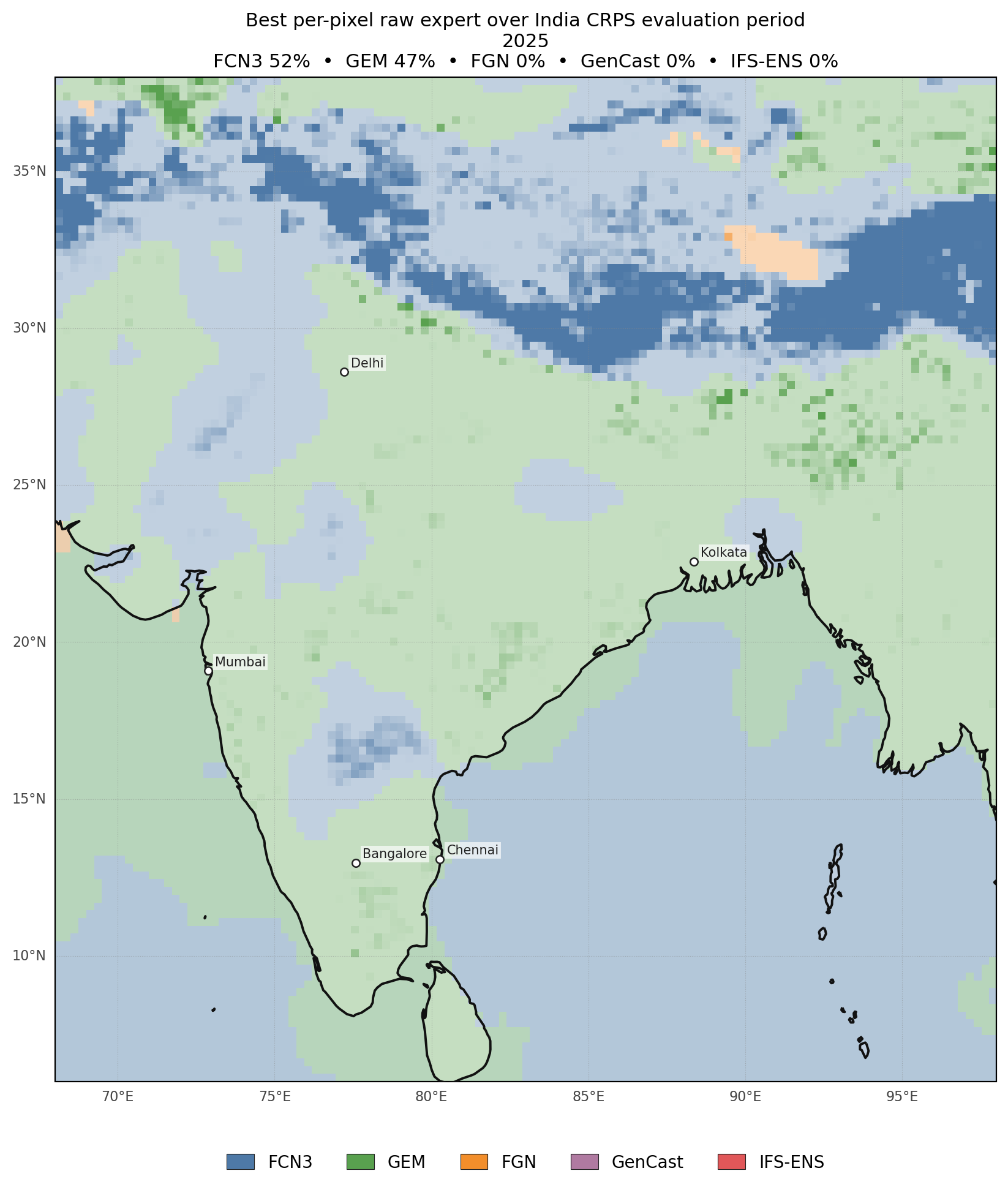}
    \caption{2025.}
  \end{subfigure}
  \caption{Per-pixel best raw expert showing variation\ (categorical, opacity proportional
  to CRPS-margin confidence). The best expert is region- and
  year-dependent.}
  \label{fig:best-raw-expert-maps}
\end{figure}
 
\subsection{BIH-region selection}
\label{app:bih-region}
 
The patch over which we evaluate BIH-regret was selected automatically
as the rectangular sub-region of north India where the proposed hybrid
most exceeds the offline U-Net combiner
(Fig.~\ref{fig:bih-region-diff}). This is the hardest region for the
hybrid: it is where the difference between online adaptation and
static offline weighting is largest, and therefore where stress-testing
the regret bound is most informative.
 
\begin{figure}[h]
  \centering
  \includegraphics[width=0.55\linewidth]{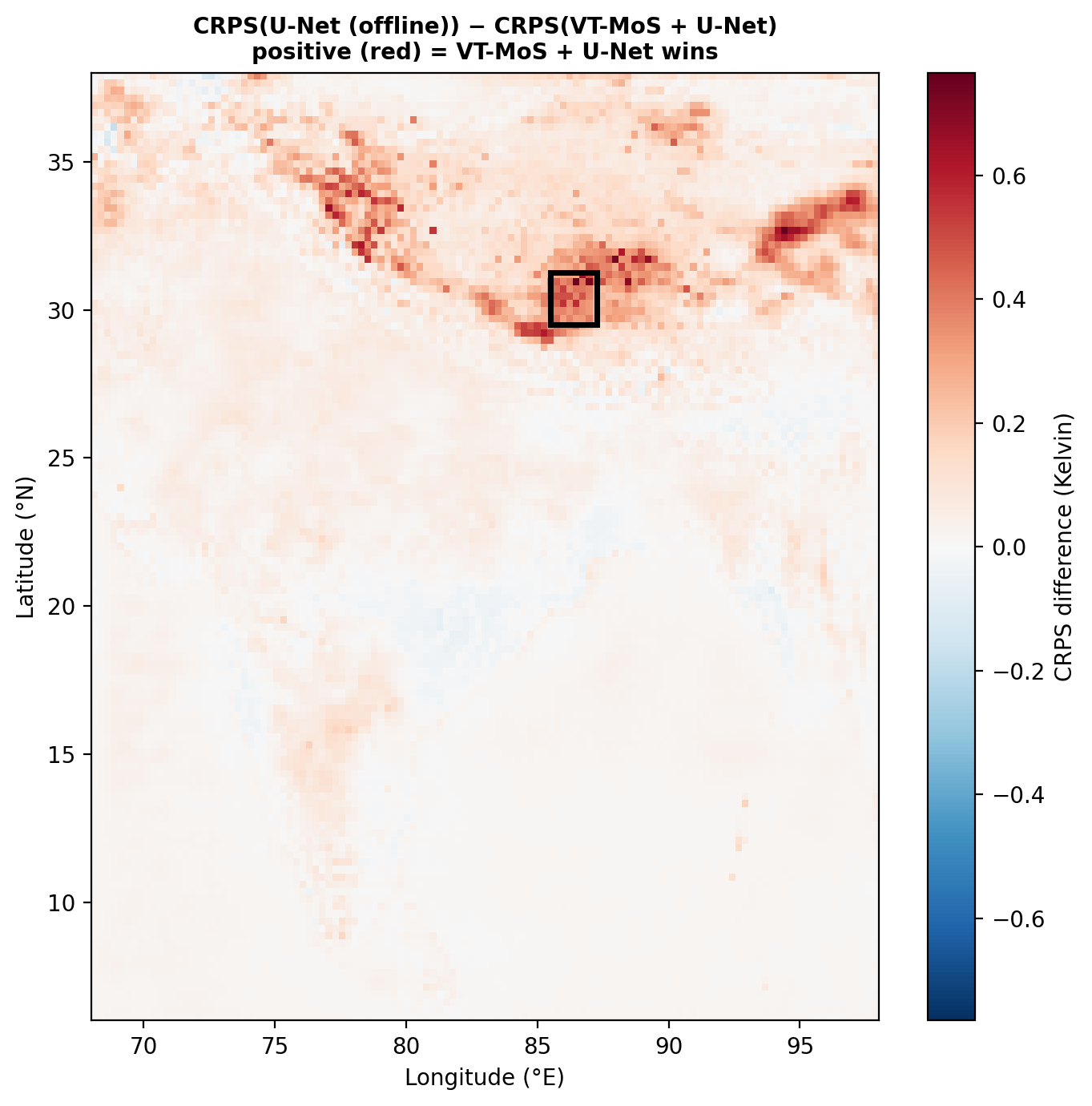}
  \caption{Difference map used to localise the BIH-evaluation patch
  (RED cells = hybrid beats offline U-Net).}
  \label{fig:bih-region-diff}
\end{figure}

\section{Per-city operational evaluation}
\label{app:per-city}

See \ref{tab:per-city}

\begin{table}[h]
  \centering
  \small
  \caption{Patch CRPS (K) over 5$\times$5 city windows, leads averaged,
  2024--2025. Lowest per row bold.}
  \label{tab:per-city}
  \begin{tabular}{lccccccc}
  \toprule
  \textbf{City} & \textbf{VT-MOS+U-Net} & Vovk+U-Net & VT-MOS & U-Net (off) & Vovk AA & Eq.Wt. & Raw FCN3 \\
  \midrule
  Delhi      & \textbf{0.588} & 0.977 & 0.600 & 0.644 & 1.055 & 0.885 & 0.644 \\
  Mumbai     & \textbf{0.357} & 0.663 & 0.378 & 0.359 & 0.746 & 0.564 & 0.414 \\
  Chennai    & \textbf{0.349} & 0.759 & 0.366 & 0.364 & 0.840 & 0.523 & 0.391 \\
  Kolkata    & \textbf{0.470} & 0.873 & 0.484 & 0.482 & 0.939 & 0.700 & 0.535 \\
  Bangalore  & \textbf{0.420} & 0.760 & 0.435 & 0.434 & 0.821 & 0.580 & 0.476 \\
  Hyderabad  & 0.404          & 0.700 & 0.418 & \textbf{0.403} & 0.784 & 0.566 & 0.439 \\
  Ahmedabad  & \textbf{0.435} & 0.811 & 0.452 & 0.458 & 0.890 & 0.642 & 0.499 \\
  Pune       & \textbf{0.378} & 0.706 & 0.395 & 0.387 & 0.788 & 0.542 & 0.434 \\
  Jaipur     & \textbf{0.518} & 0.910 & 0.534 & 0.554 & 0.987 & 0.776 & 0.581 \\
  Lucknow    & \textbf{0.527} & 0.951 & 0.541 & 0.578 & 1.020 & 0.821 & 0.582 \\
  \bottomrule
  \end{tabular}
\end{table}

\section{Weight-contribution dynamics and hybrid stability}
\label{app:weights}
 
The temporal evolution of each combiner's expert weights, decomposed
into three groups (offline U-Nets, online aggregators, hybrids),
appears in Fig.~\ref{fig:weight-groups}. Hybrid stability is shown in
Fig.~\ref{fig:hybrid-stability}: the hybrid collapses onto the U-Net
when the U-Net pseudo-expert band is high (and inherits its variance);
when the band shrinks, the online aggregator over the raw experts takes
over and damps the variance further.
 
\begin{figure}[h]
  \centering
  \begin{subfigure}[t]{0.49\linewidth}
    \centering
    \includegraphics[width=\linewidth]{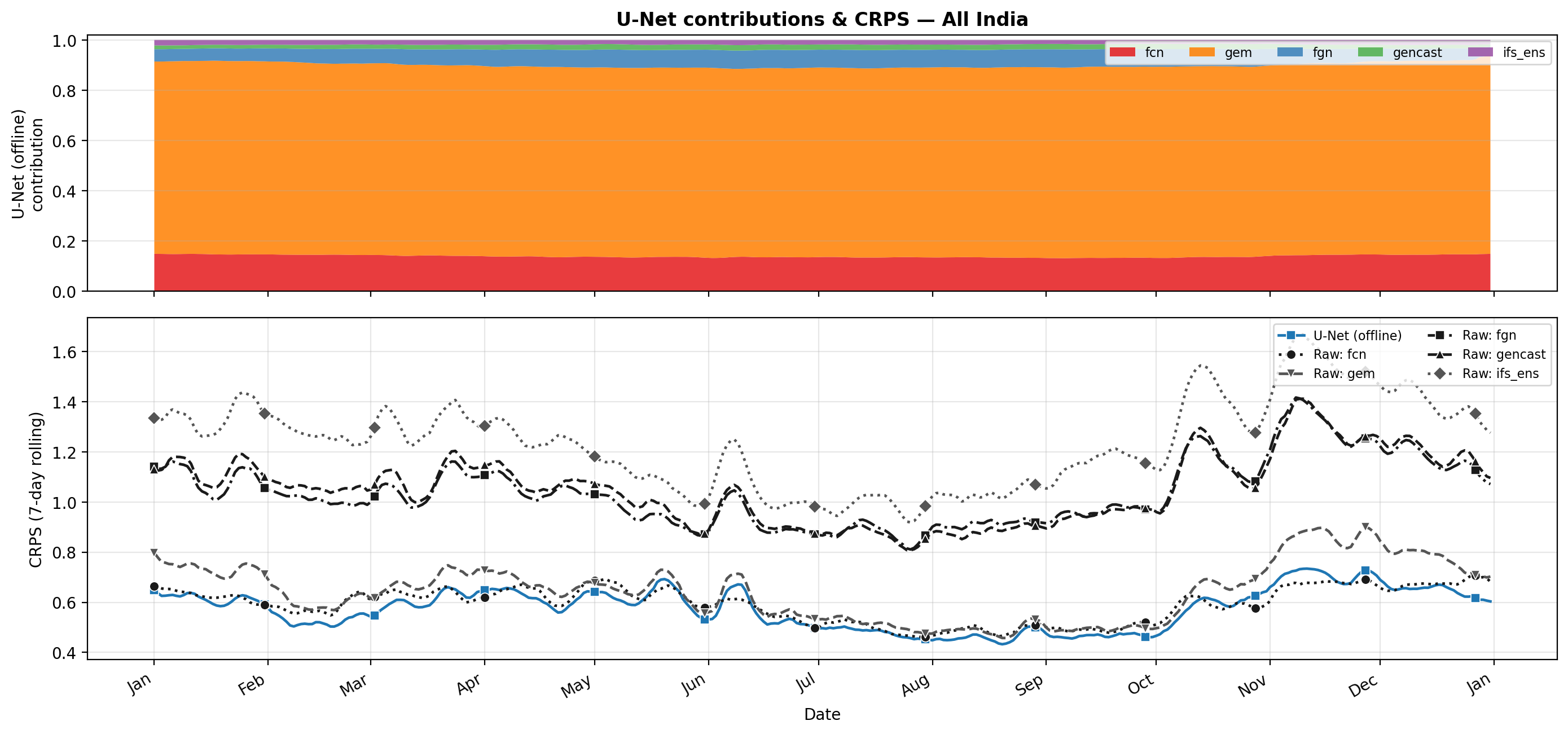}
    \caption{Group 1: offline U-Nets.}
  \end{subfigure}\hfill
  \begin{subfigure}[t]{0.49\linewidth}
    \centering
    \includegraphics[width=\linewidth]{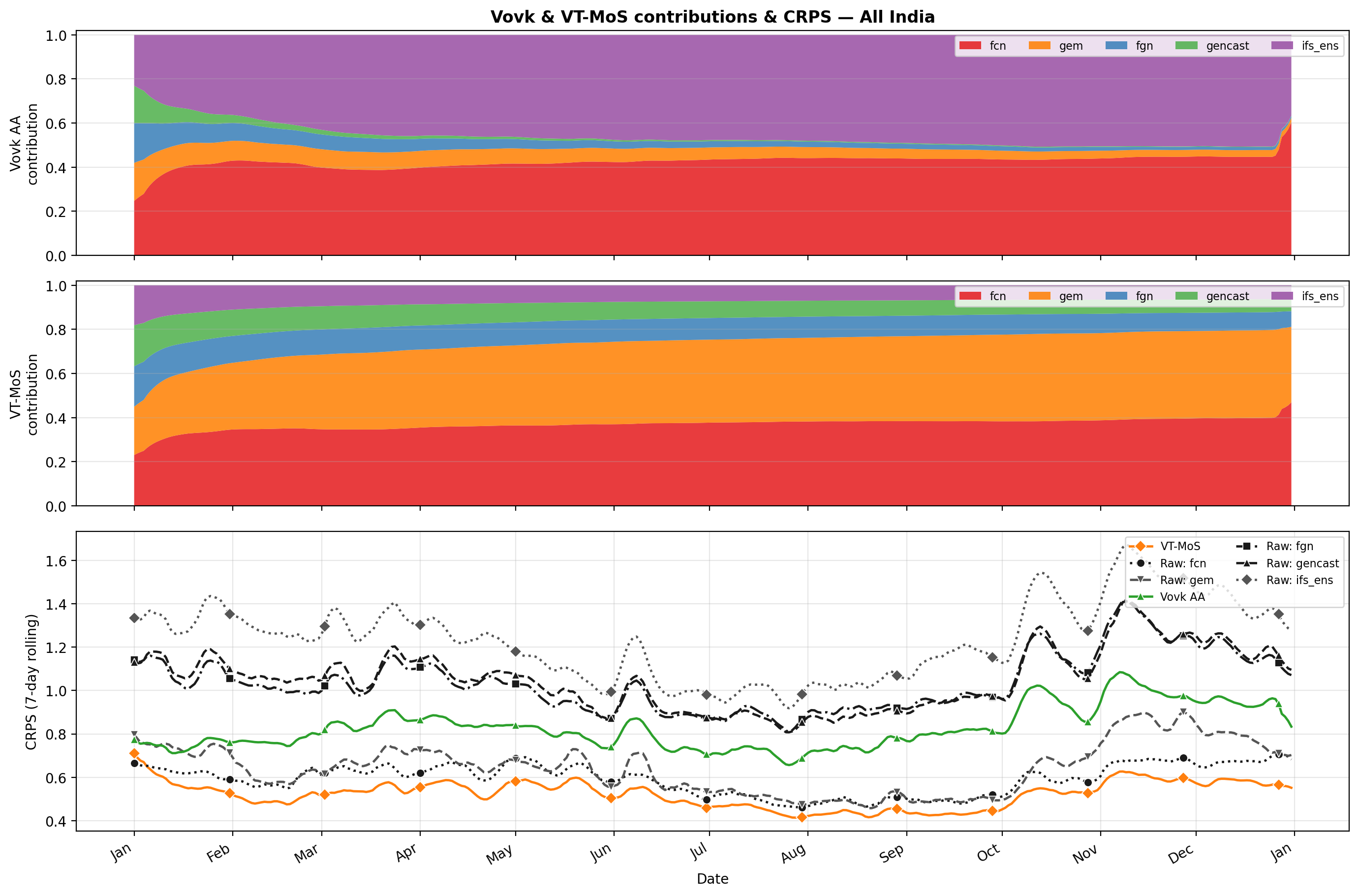}
    \caption{Group 2: Vovk and VT-MOS.}
  \end{subfigure}\\[1mm]
  \begin{subfigure}[t]{0.7\linewidth}
    \centering
    \includegraphics[width=\linewidth]{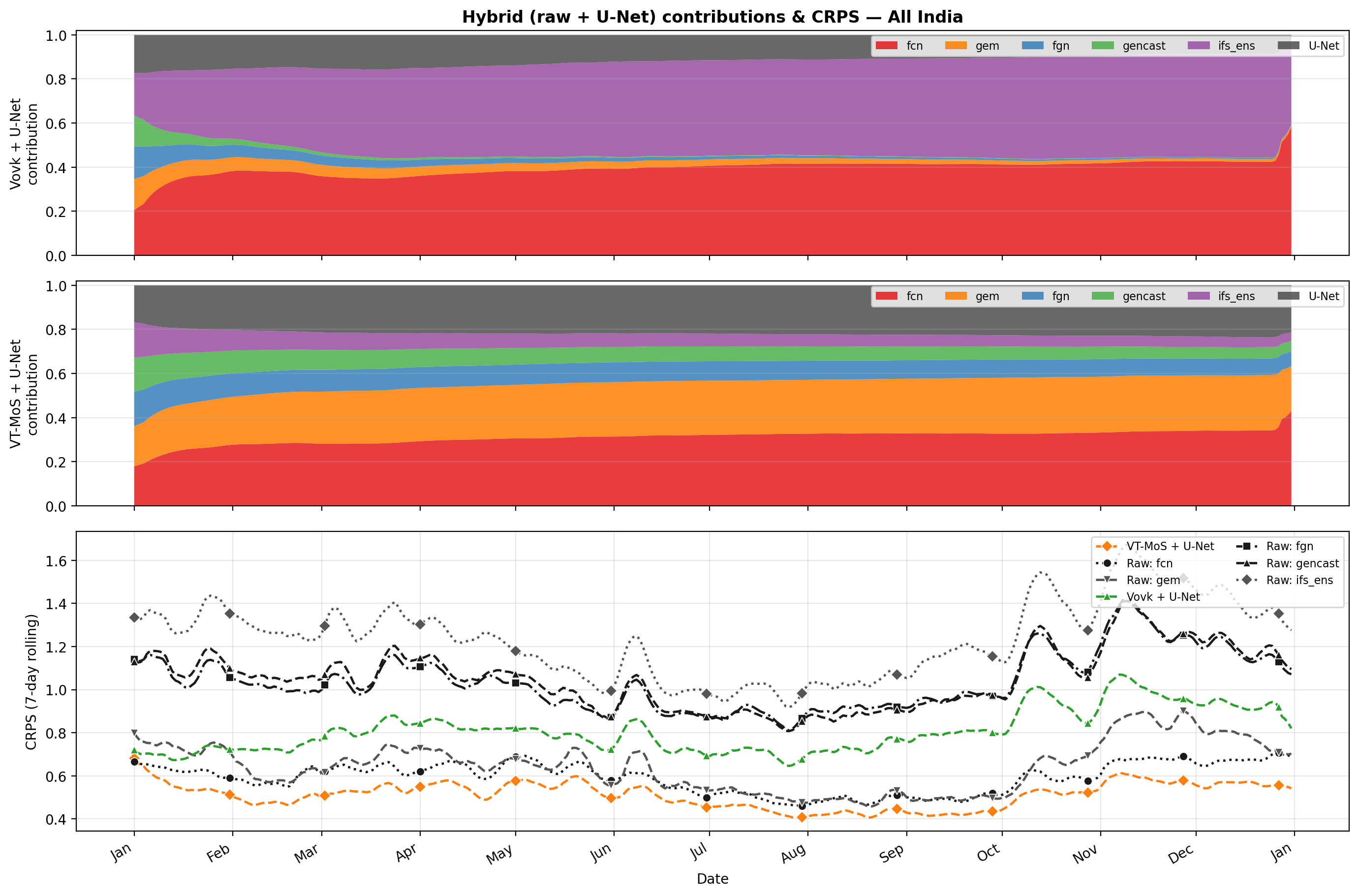}
    \caption{Group 3: Hybrids (Vovk\,+\,U-Net and VT-MOS\,+\,U-Net).}
  \end{subfigure}
  \caption{Three-group decomposition of expert weight contributions
  (top of each panel) alongside the spatial-mean CRPS rollout
  (bottom). The dark-grey band in group 3 is the U-Net pseudo-expert's
  trust coefficient.}
  \label{fig:weight-groups}
\end{figure}
 
\begin{figure}[h]
  \centering
  \includegraphics[width=0.85\linewidth]{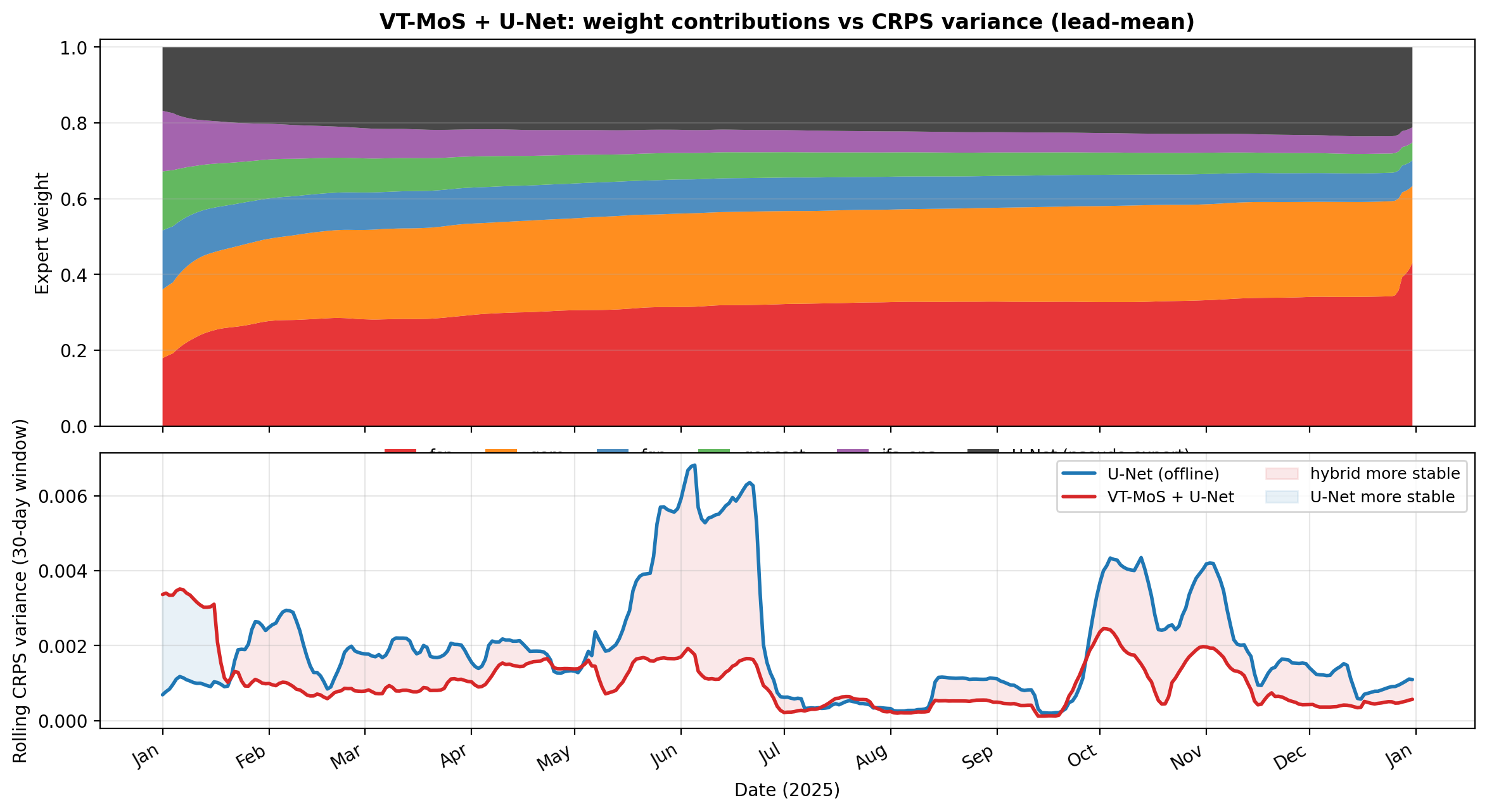}
  \caption{(Top) VT-MOS\,+\,U-Net stacked expert weights over 2025.
  (Bottom) 30-day rolling CRPS variance of U-Net (offline) alone
  (dashed) vs the hybrid (solid).}
  \label{fig:hybrid-stability}
\end{figure}

\section{Tail events: cold/normal/hot}
\label{app:tail}

See Table \ref{app:tail} and Figure \ref{fig:tail-events-bars}
 
\begin{figure}[h]
  \centering
  \includegraphics[width=0.85\linewidth]{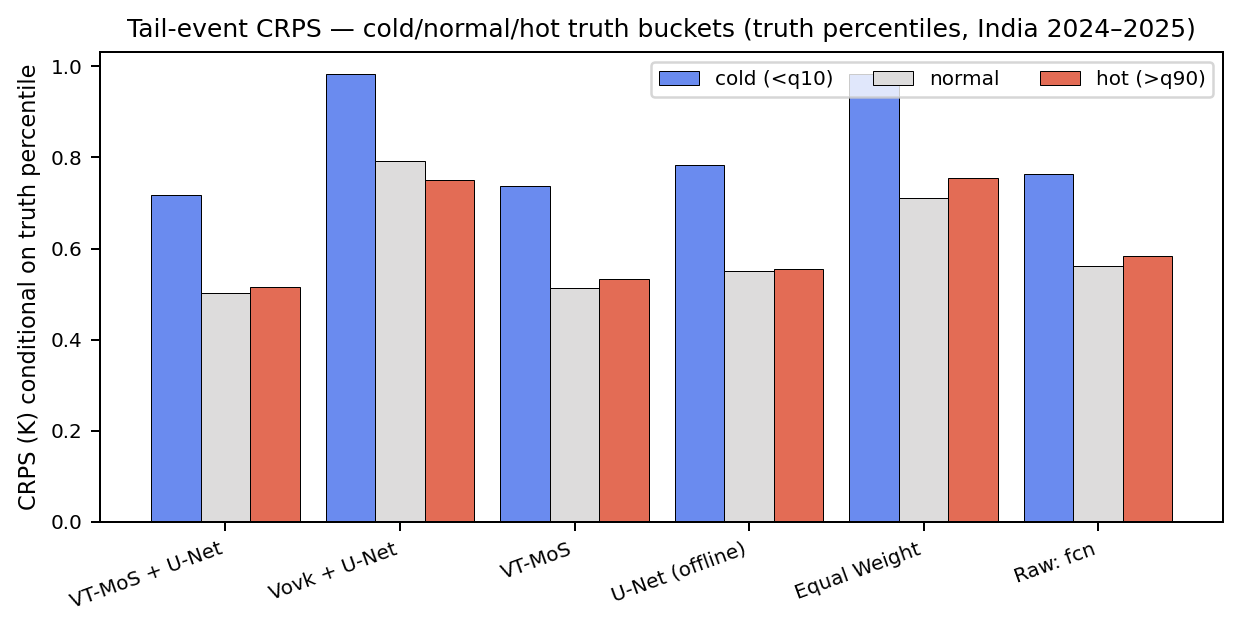}
  \caption{Per-bucket CRPS conditional on truth percentile.
  VT-MOS+U-Net has lower CRPS per bin}
  \label{fig:tail-events-bars}
\end{figure}
 
\begin{table}[h]
  \centering
  \small
  \caption{Tail-event CRPS (K) per truth-percentile bucket.
  Cold = $y<q_{0.1}$, hot = $y>q_{0.9}$}
  \label{tab:tail-events}
  \begin{tabular}{lcccccc}
  \toprule
  \textbf{Method} & overall & cold & normal & hot \\
  \midrule
  \textbf{VT-MOS+U-Net} & \textbf{0.526} & \textbf{0.717} & \textbf{0.503} & \textbf{0.516} \\
  Vovk+U-Net           & 0.806 & 0.983 & 0.791 & 0.751\\
  VT-MOS               & 0.537 & 0.738 & 0.513 & 0.534\\
  U-Net (offline)      & 0.574 & 0.784 & 0.550 & 0.555\\
  Equal Weight         & 0.742 & 0.983 & 0.710 & 0.756\\
  Raw: FCN3            & 0.583 & 0.764 & 0.561 & 0.583\\
  \bottomrule
  \end{tabular}
\end{table}

\section{VT-MOS Monte-Carlo sensitivity}
\label{app:m-sens}
 
Theorem~\ref{thm:reg} assumes $M \to \infty$. Empirically the per-step
predictor RMSE against an $M_{\mathrm{ref}} = 32{,}000$ reference
decays as $O(M^{-1/2})$ as predicted (Fig.~\ref{fig:m-sensitivity},
Tab.~\ref{tab:m-sensitivity}). At our production setting $M = 1000$,
the per-step RMSE is $\approx 0.013$ on $\hat F$ values that lie in
$[0,1]$, comfortably below typical CRPS gradients across the test
window. Increasing to $M = 4000$ reduces this further to $\approx
0.011$, with diminishing returns; we settled on $M = 1000$ as a
production tradeoff with negligible impact on downstream CRPS.
 
\begin{figure}[h]
  \centering
  \includegraphics[width=0.55\linewidth]{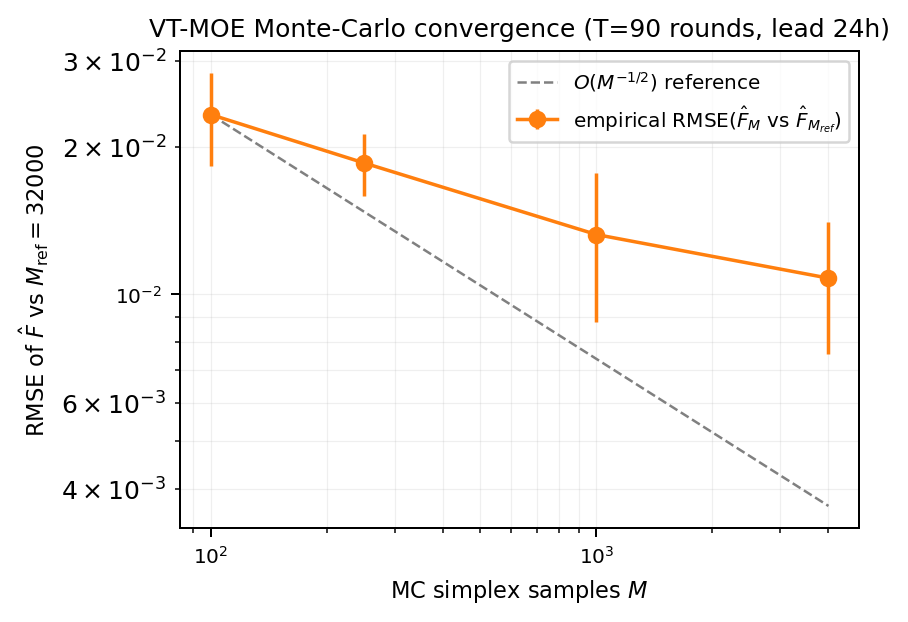}
  \caption{VT-MOS Monte-Carlo convergence: per-step RMSE of the
  predictor $\hat F_M$ against an $M_{\mathrm{ref}}=32{,}000$ reference,
  averaged over 8 random seeds and $T=90$ rounds (lead 24\,h).}
  \label{fig:m-sensitivity}
\end{figure}
 
\begin{table}[h]
  \centering
  \small
  \caption{Per-step VT-MOS predictor RMSE vs $M_{\mathrm{ref}}=32{,}000$
  reference, mean $\pm$ std over 8 seeds.}
  \label{tab:m-sensitivity}
  \begin{tabular}{rcc}
  \toprule
  $M$ & RMSE (mean $\pm$ std) & relative cost \\
  \midrule
  100  & $0.0233 \pm 0.0050$ & $1.0\times$ \\
  250  & $0.0186 \pm 0.0027$ & $2.5\times$ \\
  1000 & $0.0133 \pm 0.0045$ & $10\times$ \\
  4000 & $0.0108 \pm 0.0032$ & $40\times$ \\
  \bottomrule
  \end{tabular}
\end{table}

\section{Station-level evaluation}
\label{app:stations}

See \ref{fig:station-headline}
\begin{figure}[h]
  \centering
  \begin{subfigure}[t]{0.49\linewidth}
    \centering
    \includegraphics[width=\linewidth]{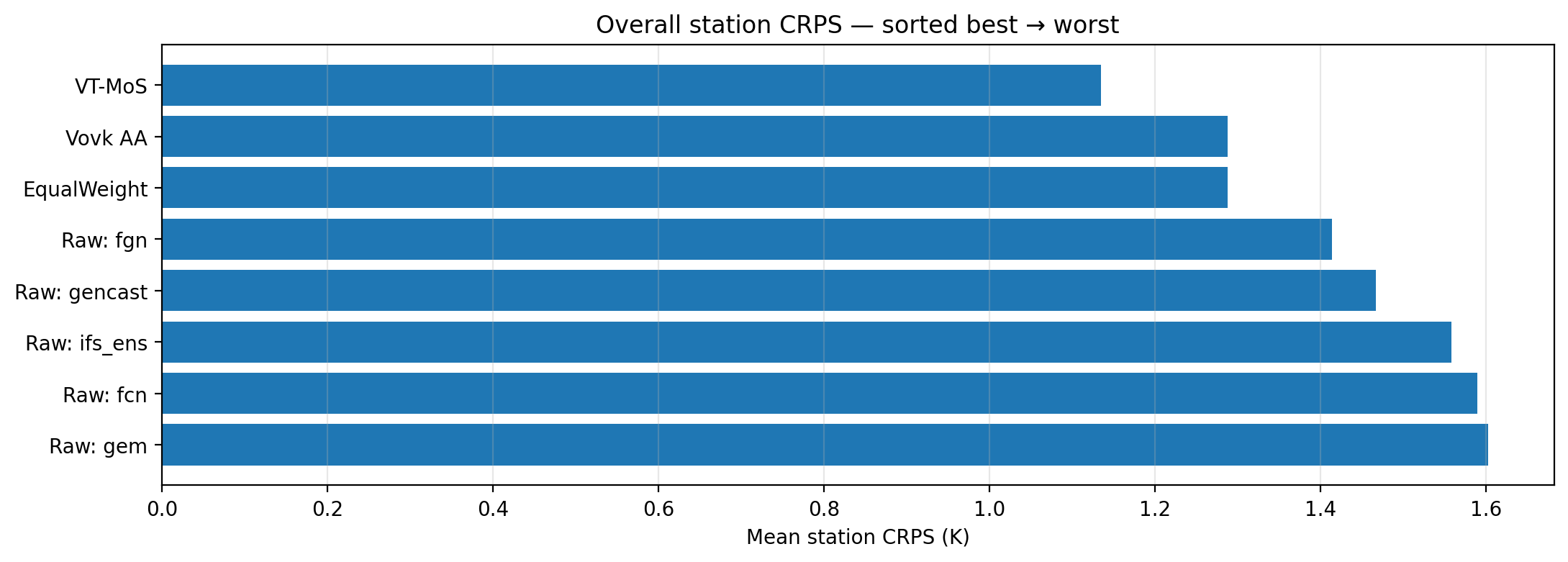}
    \caption{Overall mean CRPS.}
  \end{subfigure}\hfill
  \begin{subfigure}[t]{0.49\linewidth}
    \centering
    \includegraphics[width=\linewidth]{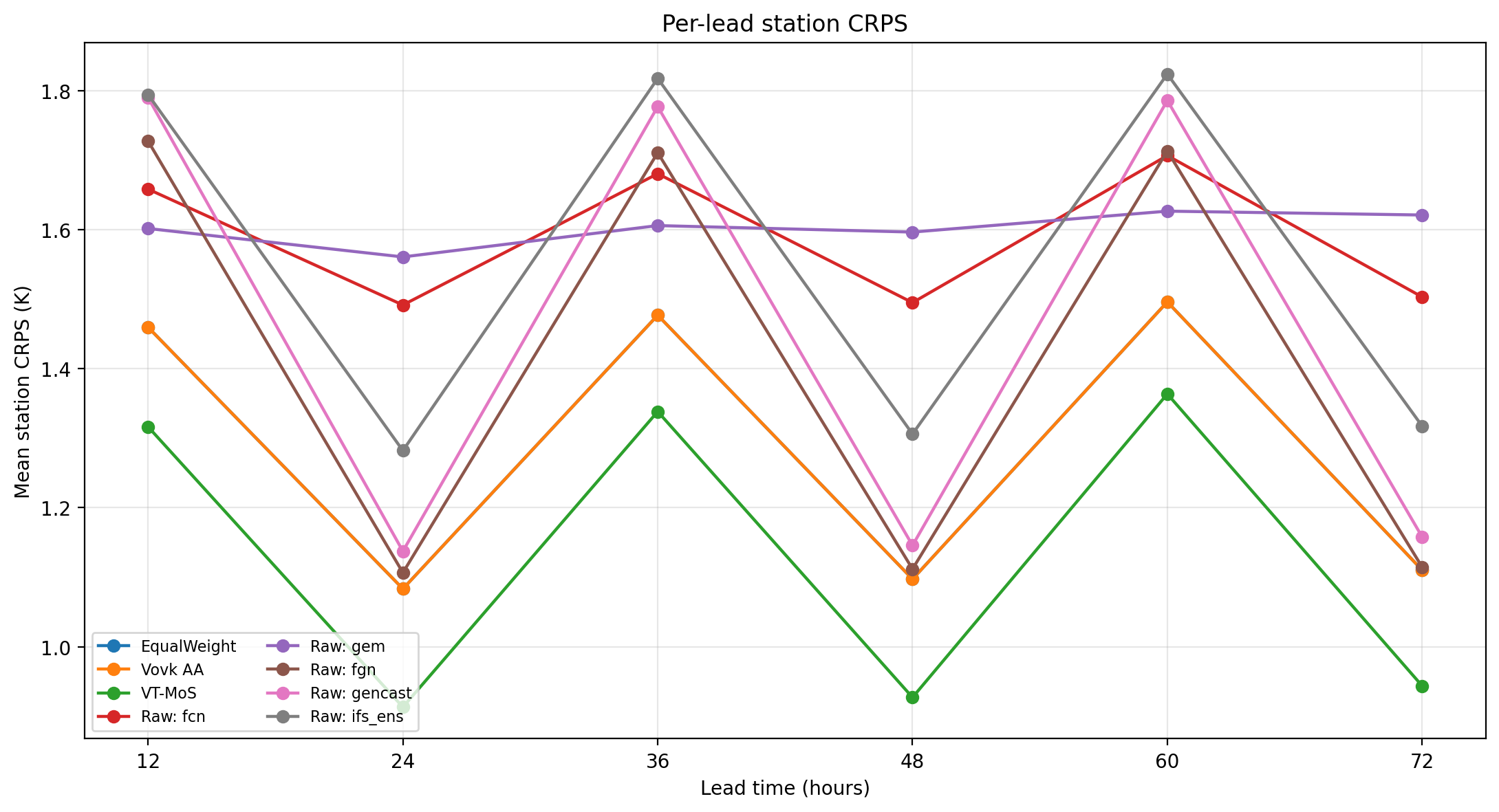}
    \caption{Per-lead mean CRPS.}
  \end{subfigure}\\[1mm]
  \begin{subfigure}[t]{0.49\linewidth}
    \centering
    \includegraphics[width=\linewidth]{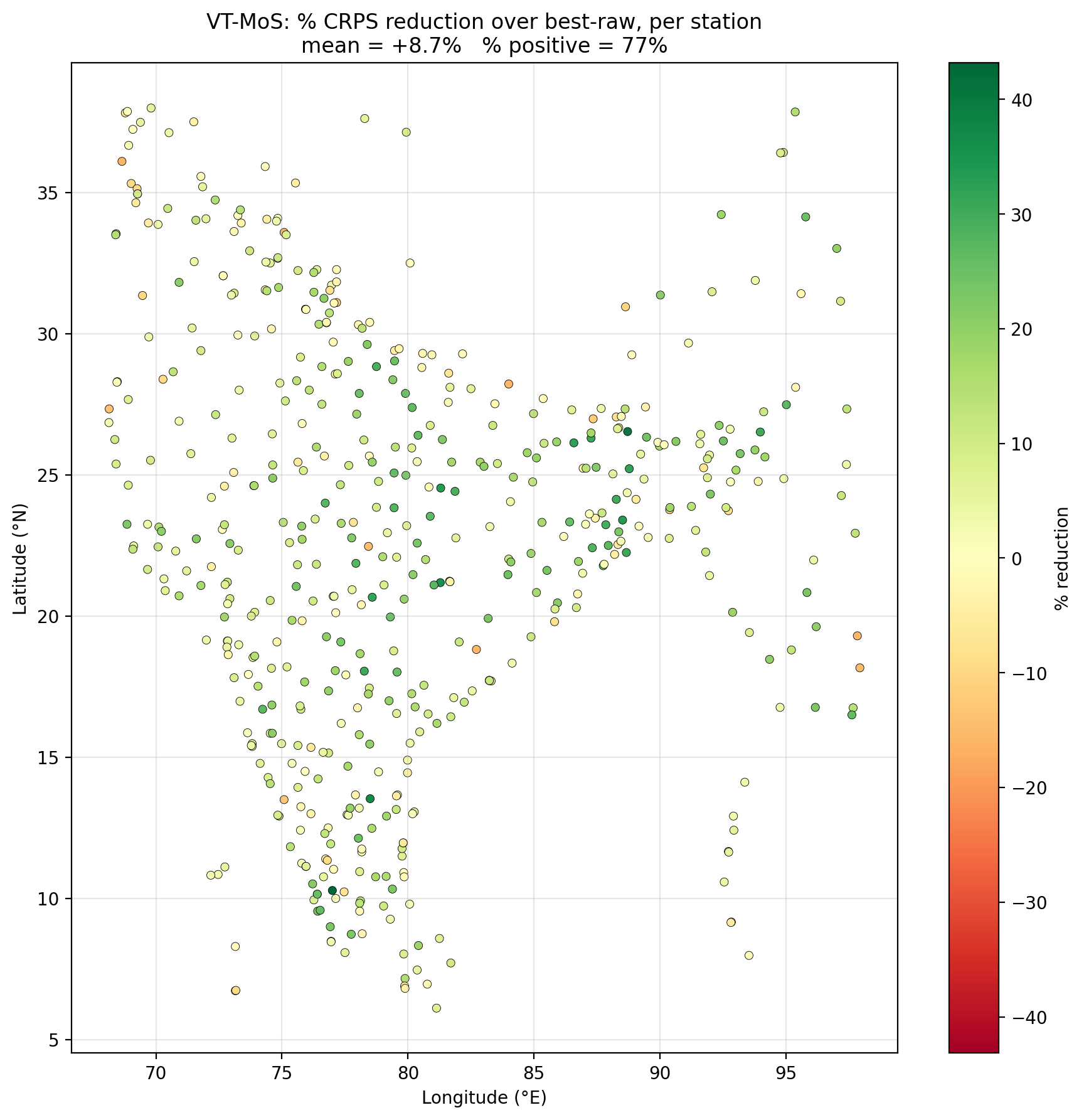}
    \caption{Per-station \% CRPS reduction of VT-MOS.}
  \end{subfigure}\hfill
  \begin{subfigure}[t]{0.49\linewidth}
    \centering
    \includegraphics[width=\linewidth]{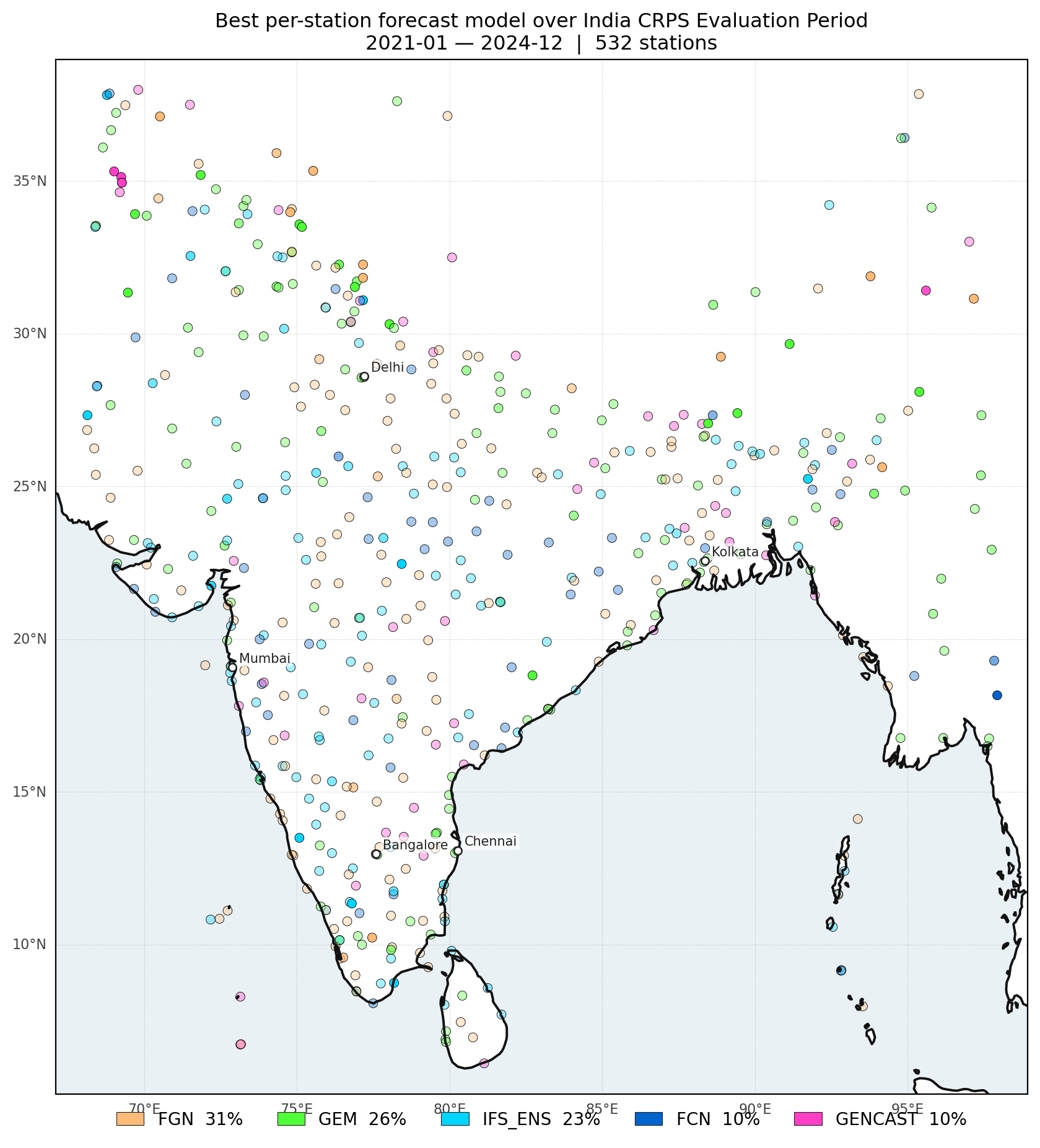}
    \caption{Per-station best raw expert.}
  \end{subfigure}
  \caption{Station-level evaluation across 546 StationBench-India
  sites, 2021--2024. There is more variance between performance of each model on data that neither has seen in training, which further motivates combination of diverse models on sparse station data}
  \label{fig:station-headline}
\end{figure}

\FloatBarrier

\section{Other Regions}
\label{app:otherregions}

\begin{figure}[!htbp]
  \centering
  \includegraphics[width=1\linewidth]{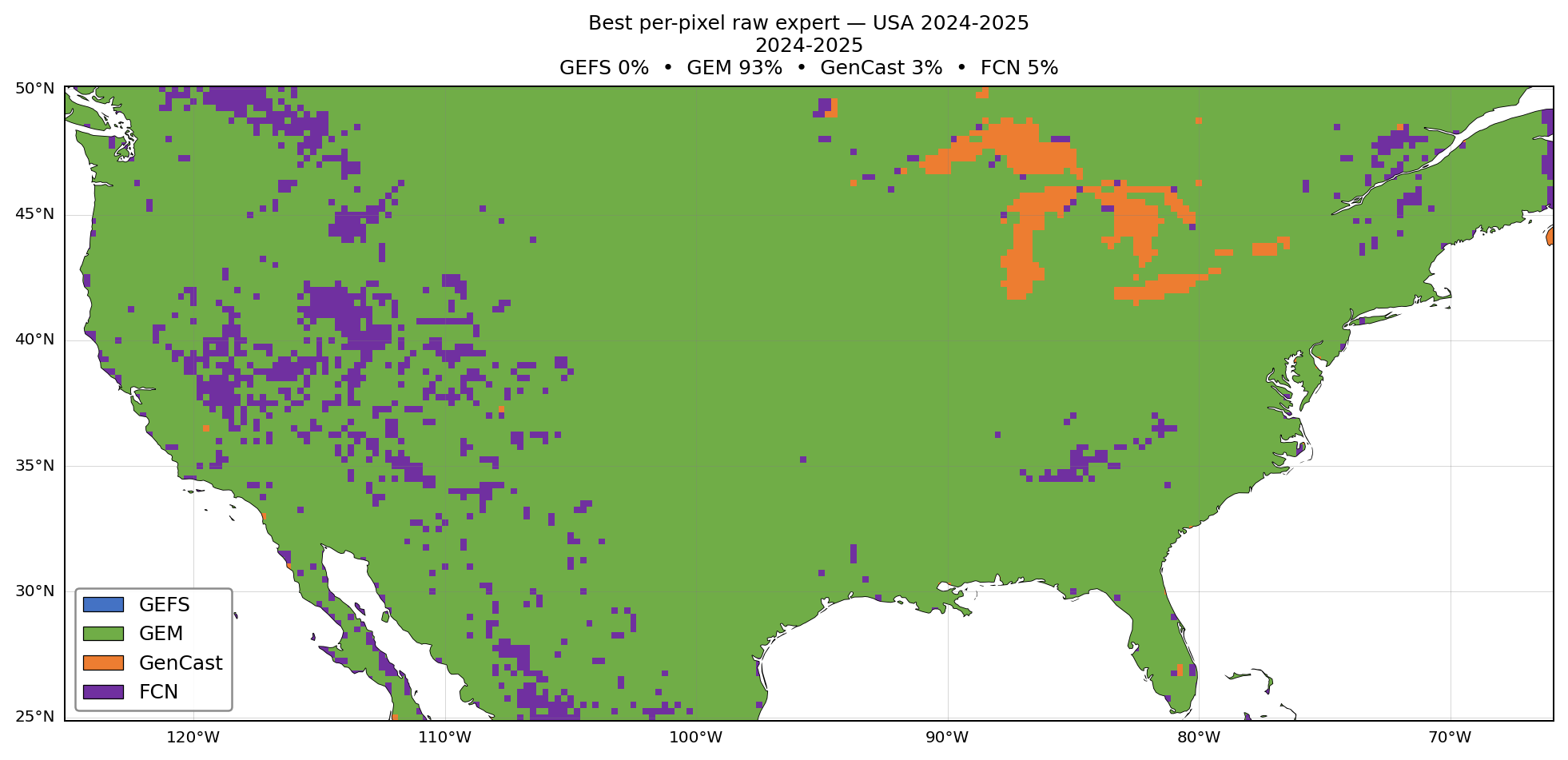}
  \caption{Per-pixel best raw expert showing spatial variation.}
  \label{fig:usa-best-expert-spatial}
\end{figure}

\begin{figure}[!htbp]
  \centering
  \includegraphics[width=1\linewidth]{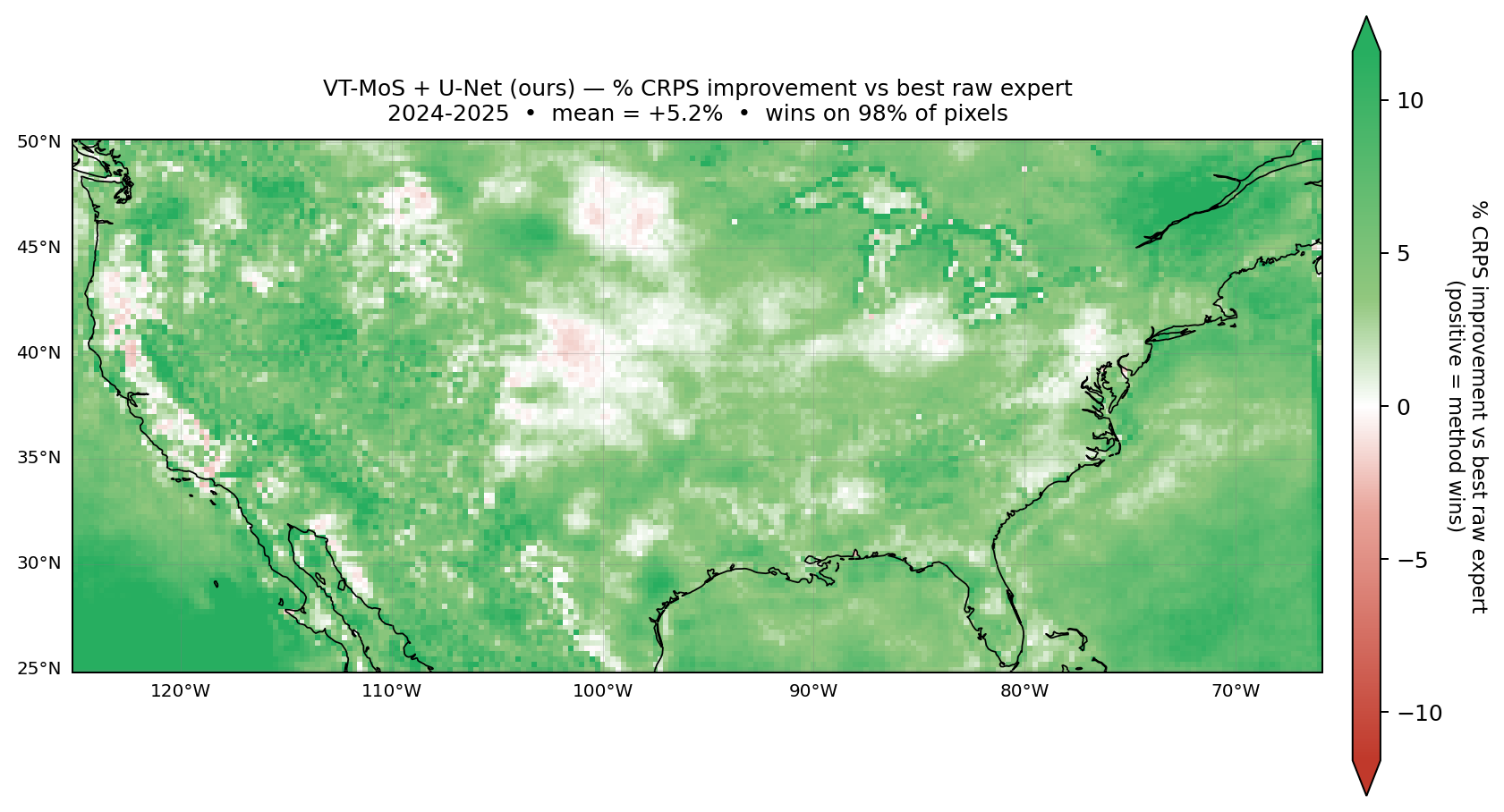}
  \caption{Per-pixel \% CRPS improvement of the hybrid method over the best raw expert, aggregated over 2024--2025.}
  \label{fig:usa-best-expert-spatial}
\end{figure}

\begin{figure}[!htbp]
  \centering

  \begin{subfigure}[t]{0.9\linewidth}
    \centering
    \includegraphics[width=\linewidth]{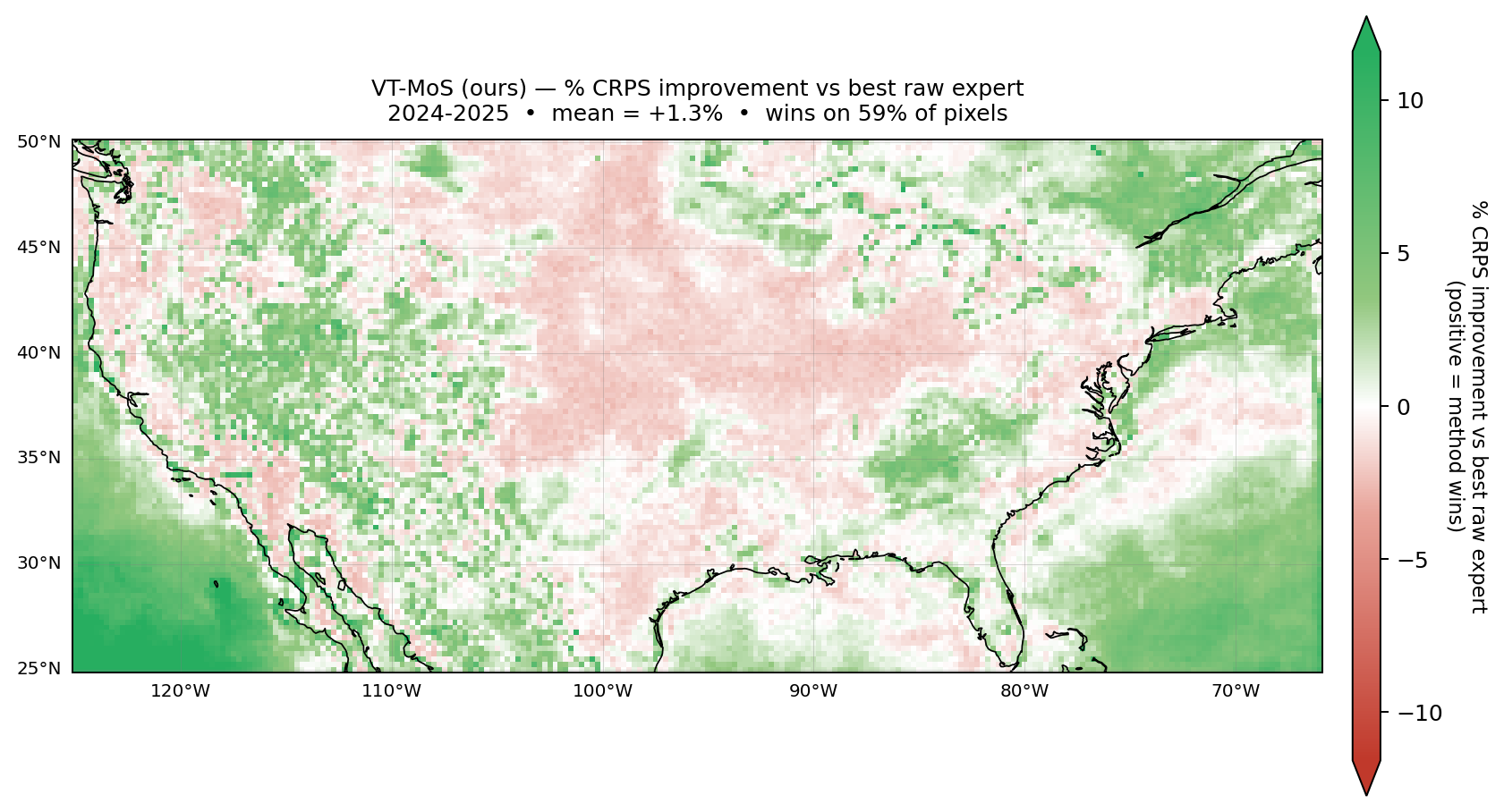}
    \caption{VT-MOS.}
    \label{fig:usa-vtmos-improvement}
  \end{subfigure}

  \vspace{1mm}

  \begin{subfigure}[t]{0.9\linewidth}
    \centering
    \includegraphics[width=\linewidth]{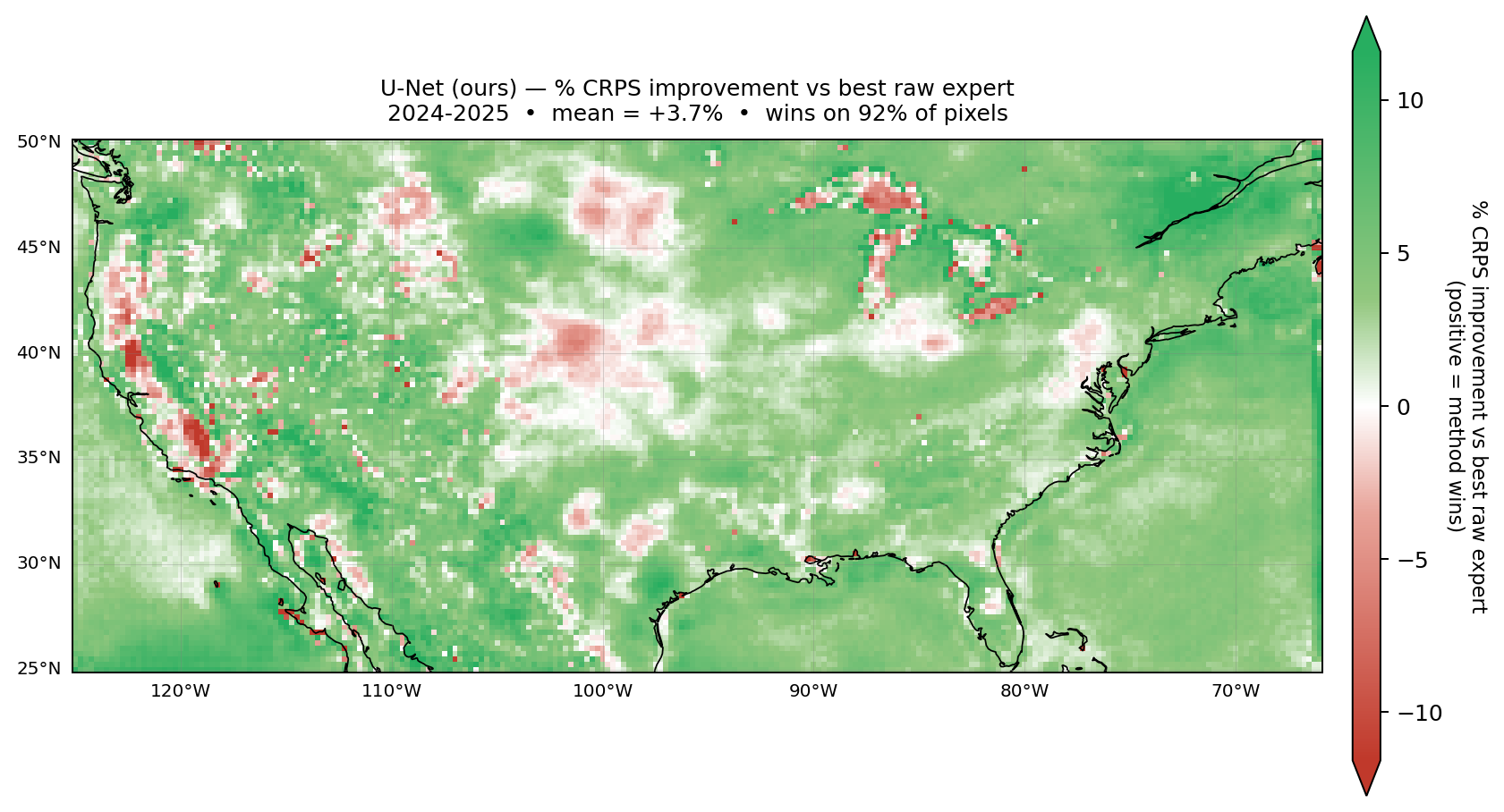}
    \caption{U-Net.}
    \label{fig:usa-vtmos-unet-improvement}
  \end{subfigure}

  \caption{Per-pixel \% CRPS improvement over the best raw expert, aggregated over 2024--2025. Positive values indicate that the combiner outperforms the best raw expert.}
  \label{fig:usa-avg-improvement-maps}
\end{figure}

\begin{figure}[!htbp]
  \centering
  \includegraphics[width=\linewidth]{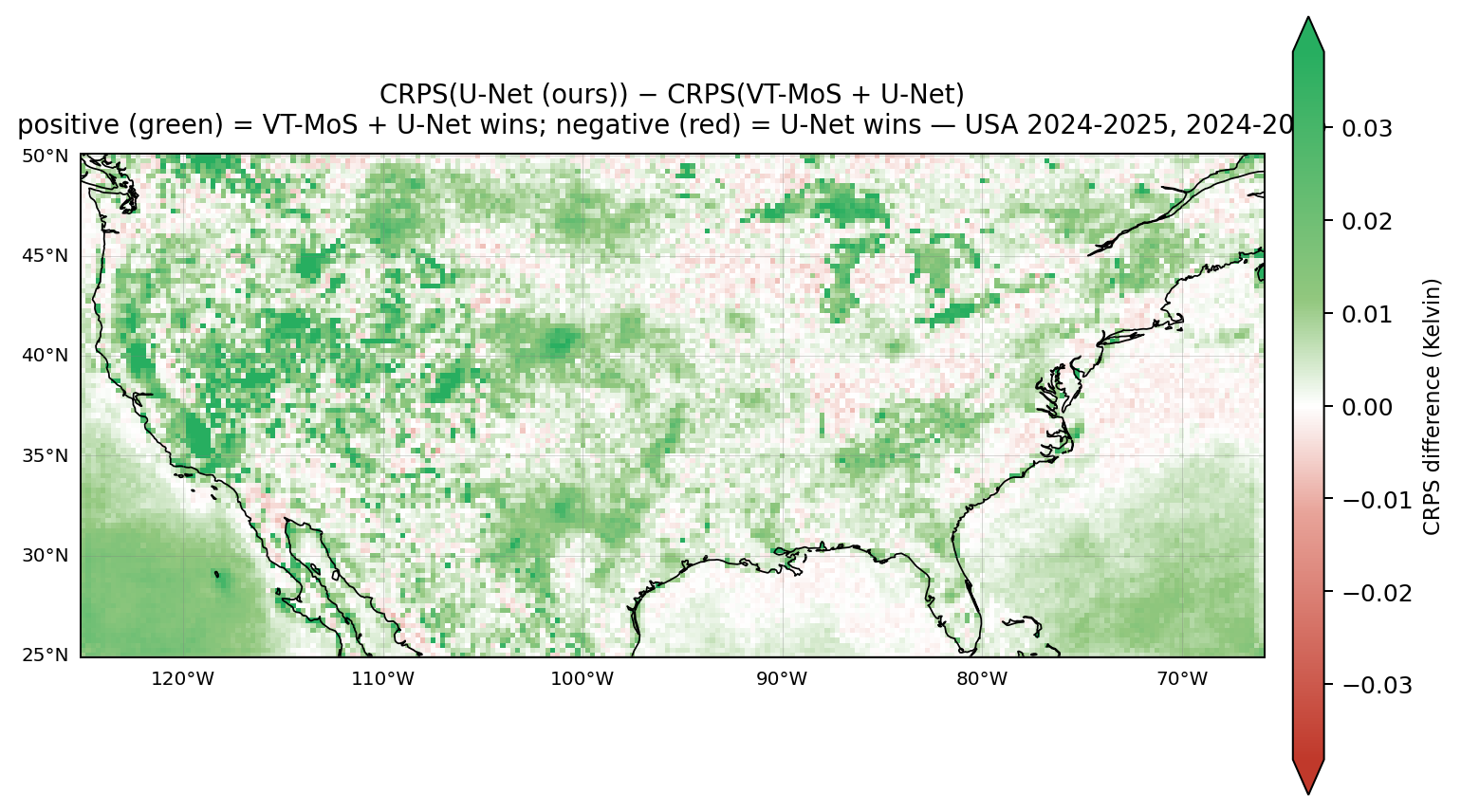}
  \caption{Difference map used to localise the BIH-evaluation patch. Green cells indicate regions where the hybrid model outperforms the offline U-Net.}
  \label{fig:usa-bih-region-diff}
\end{figure}

\begin{figure}[!htbp]
  \centering

  \begin{subfigure}[t]{0.9\linewidth}
    \centering
    \includegraphics[width=\linewidth]{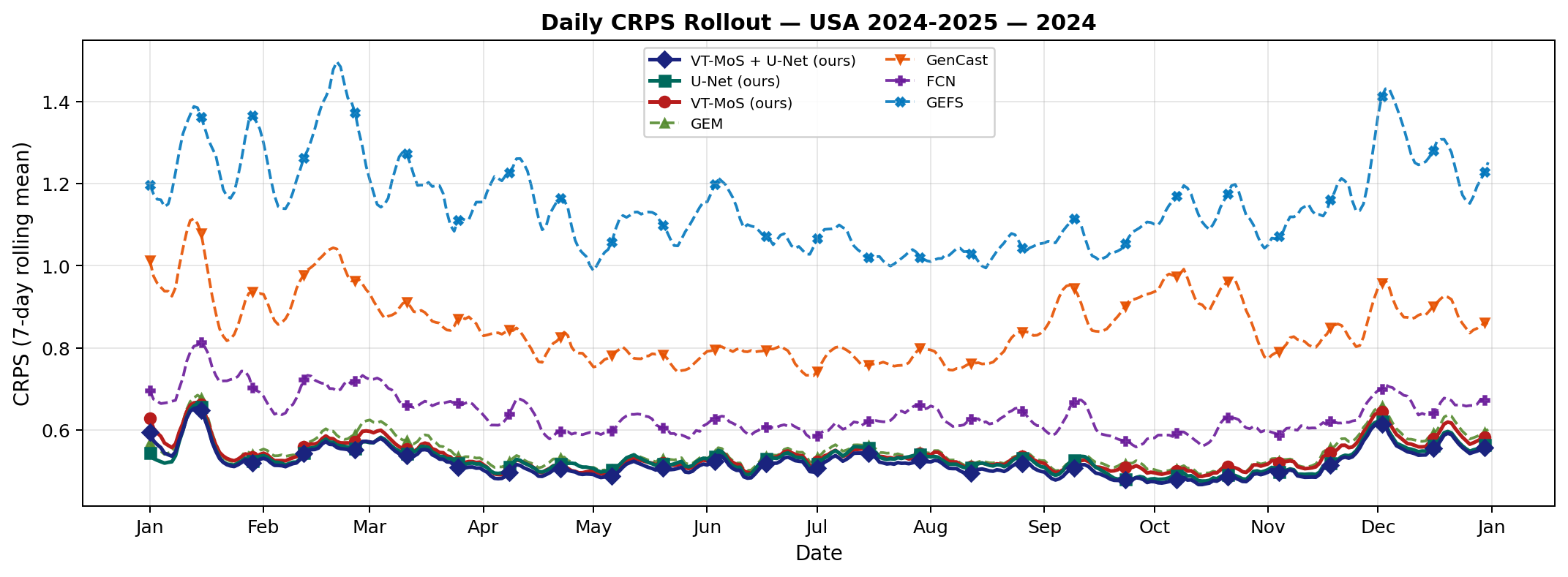}
    \caption{Rollout for 2024.}
    \label{fig:usa-rollout-2024}
  \end{subfigure}
  \hfill
  \begin{subfigure}[t]{0.9\linewidth}
    \centering
    \includegraphics[width=\linewidth]{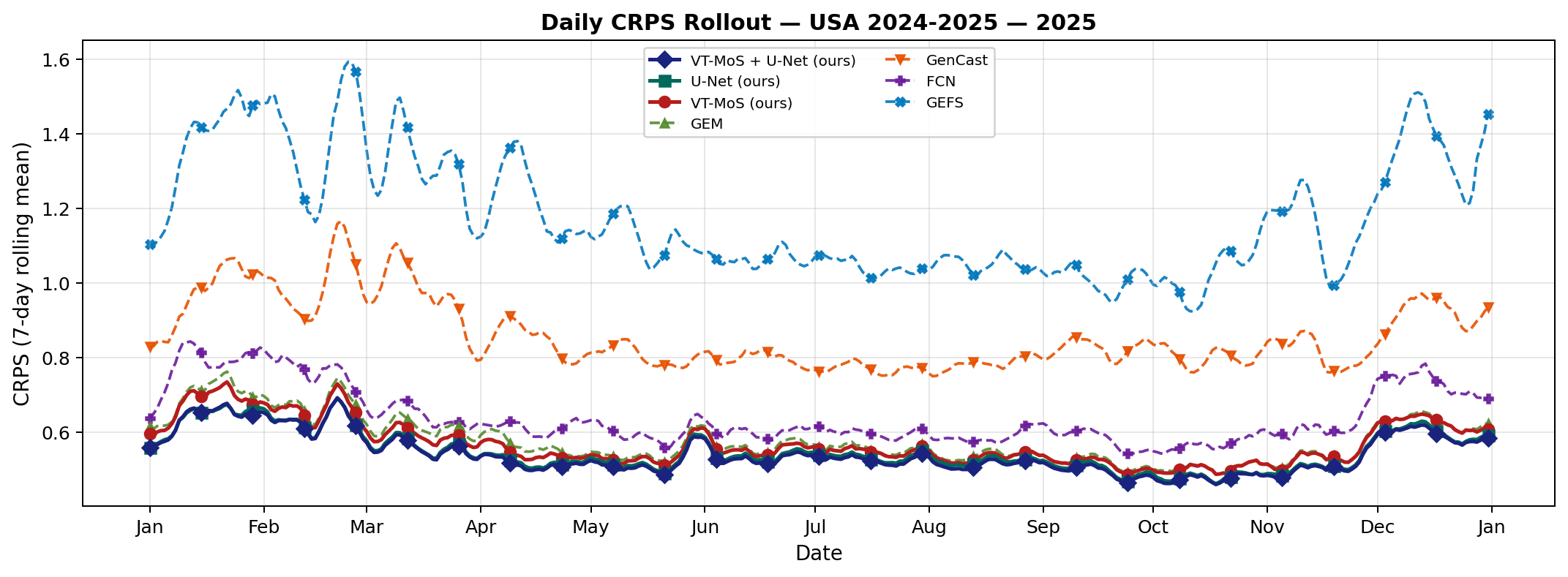}
    \caption{Rollout for 2025.}
    \label{fig:usa-rollout-2025}
  \end{subfigure}

  \vspace{1mm}

  \begin{subfigure}[t]{0.9\linewidth}
    \centering
    \includegraphics[width=\linewidth]{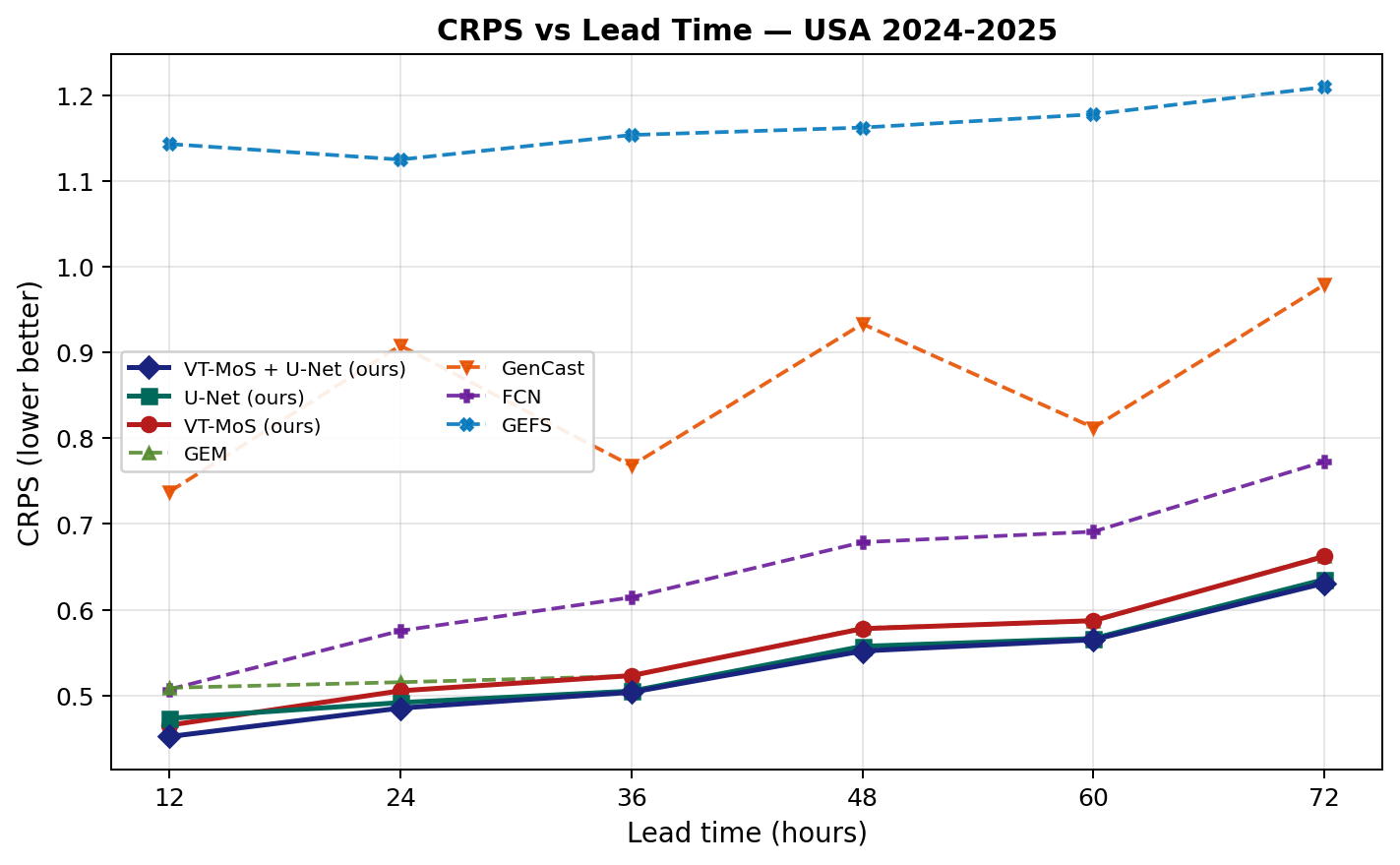}
    \caption{Per-lead CRPS.}
    \label{fig:usa-per-lead-crps}
  \end{subfigure}

  \caption{Rollouts and per-lead CRPS aggregated over 2024--2025.}
  \label{fig:usa-rollouts-per-lead}
\end{figure}

\FloatBarrier

\FloatBarrier
\section{Background: Prediction with Expert Advice}
\label{sec:backgroundexperts}

Prediction with Expert Advice (PWEA) is a foundational framework in online learning and sequential decision making. In this setting, a learner repeatedly combines predictions from a set of experts and tries to perform nearly as well as the best expert in hindsight. Unlike classical statistical learning, the framework does not require assumptions about the underlying data distribution and is therefore well suited for non-stationary or adversarial environments.

Formally, let there be $N$ experts. At each round $t \in \{1, \dots, T\}$, each expert $i$ produces a prediction $\hat{y}_{t,i}$. The learner assigns a weight vector
\[
w_t = (w_{t,1}, \dots, w_{t,N}),
\]
where $w_{t,i} \geq 0$ and $\sum_{i=1}^{N} w_{t,i} = 1$, and forms a combined prediction. After observing the true outcome $y_t$, a loss function $\ell(\cdot,\cdot)$ is incurred.

The performance of the learner is typically evaluated using \emph{regret}, defined as the difference between the cumulative loss of the learner and that of the best expert in hindsight:
\[
R_T
=
\sum_{t=1}^{T} \ell_t(\hat{y}_t)
-
\min_{i \in \{1,\dots,N\}}
\sum_{t=1}^{T} \ell_t(\hat{y}_{t,i}).
\]
A learning algorithm is considered successful if its regret grows sublinearly with time, i.e.,
\[
R_T = o(T),
\]
which implies that the average regret vanishes asymptotically.

One of the most influential algorithms in this framework is the Exponentially Weighted Average (EWA) forecaster, also known as Hedge. The method updates expert weights according to their past losses:
\[
w_{t+1,i}
=
\frac{
w_{t,i}\exp(-\eta \ell_t(\hat{y}_{t,i}))
}{
\sum_{j=1}^{N}
w_{t,j}\exp(-\eta \ell_t(\hat{y}_{t,j}))
},
\]
where $\eta > 0$ is the learning rate. Experts with lower historical loss receive larger weights over time.

A central concept in the theory is that of \emph{mixable losses}. For mixable loss functions, algorithms based on exponential weighting admit strong logarithmic regret guarantees of the form
\[
R_T \leq \frac{\log N}{\eta},
\]
which are independent of the time horizon $T$. Common examples include the logarithmic loss and squared loss under suitable assumptions.

Prediction with Expert Advice has been widely applied in ensemble forecasting, finance, adaptive control, and weather prediction, where multiple models exhibit varying performance across spatial and temporal regimes. The framework is particularly attractive in operational forecasting settings because it enables adaptive combination of heterogeneous models while retaining theoretical guarantees on long-term performance.

\subsection{Vovk's aggregation algorithm}
\label{subsubsec:vovk}

Vovk's exponentially weighted forecaster \citep{vovk1990aggregating}
maintains weights $w_t \in \Theta$, initialised to the uniform vector
$w_1 = (1/N, \dots, 1/N)$. At each round it outputs the linear mixture
$\hat F_t = \sum_n (w_t)_n F_{n,t}$, then updates
\[
  (w_{t+1})_n
  \;=\; \frac{(w_t)_n\, e^{-\CRPS(F_{n,t}, y_t)}}
             {\sum_{m} (w_t)_m\, e^{-\CRPS(F_{m,t}, y_t)}}.
\]
See Algorithm~\ref{alg:ewf}.

\begin{algorithm}[H]
\caption{Vovk's aggregation algorithm}
\label{alg:ewf}
\begin{algorithmic}[1]
\Require Number of experts $N$, support bounds $a < b$.
\State Initialise $(w_1)_n \gets 1/N$ for $n \in [N]$.
\For{$t = 1, 2, \ldots, T$}
  \State Receive expert predictions $\{F_{n,t}\}_{n=1}^{N}$.
  \State \textbf{Output} $\hat F_t = \sum_{n=1}^{N} (w_t)_n F_{n,t}$.
  \State Receive observation $y_t$.
  \State Update
    $(w_{t+1})_n \gets
      \dfrac{(w_t)_n e^{-\CRPS(F_{n,t}, y_t)}}
            {\sum_{m} (w_t)_m e^{-\CRPS(F_{m,t}, y_t)}}$
    for each $n \in [N]$.
\EndFor
\end{algorithmic}
\end{algorithm}

For the VT-MOS algorithm, the integral over
$\Theta=\Delta^{N-1}$
is approximated using Monte Carlo integration with
$M$ simplex samples drawn once and reused across rounds
(Common Random Numbers).

\section{Proof of Mixability}
\label{app:mixproof}

In this section, we furnish the complete proof of Theorem~\ref{thm:mixability}. In order to do this, we require the following result which is informally stated in \cite{freund1996predicting}, and is a continuous version of the analogous result in \cite{haussler1994tight}. The proof is a direct calculation which we will omit.

\begin{proposition}\label{prop:freund}
    Consider the loss function $l':\{0,1\} \times [0,1] \to [0,1]$ given by $l'(\omega,y) = (\omega - y)^2$. Let $\pi$ be any finite measure on $[0,1]$. Then, there exists $\phi \in [0,1]$ such that $$
e^{-2l'(\phi, y)} \leq \frac{\int_{0}^1 e^{-2l'(t,y)} d\pi(t)}{\pi([0,1])} \text{ for $y \in\{0,1\}$}.
    $$
An explicit formula for $\phi$ is given by $$
\phi = \frac 12 - \frac 14 \ln \left(\frac{\int_{0}^1 e^{-2l'(t,0)} d\pi(t)}{\int_{0}^1 e^{-2l'(t,1)} d\pi(t)}\right).
$$
\end{proposition}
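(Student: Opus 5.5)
The plan is to reduce the two inequalities (one for $y=0$, one for $y=1$) to a single one, using the explicit choice of $\phi$. Then we verify that single inequality with a convexity argument on the curve traced by the experts' exponentiated losses. Write
\[
I_0=\frac{\int_0^1 e^{-2t^2}\,d\pi(t)}{\pi([0,1])},\qquad I_1=\frac{\int_0^1 e^{-2(1-t)^2}\,d\pi(t)}{\pi([0,1])}.
\]
The claim is that $e^{-2\phi^2}\le I_0$ and $e^{-2(1-\phi)^2}\le I_1$.

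\textbf{Step 1: $\phi$ is well defined and lies in $[0,1]$.} Pointwise, $e^{-2t^2}/e^{-2(1-t)^2}=e^{2-4t}\in[e^{-2},e^{2}]$ for $t\in[0,1]$. Integrating numerator and denominator against $\pi$ keeps the ratio $I_0/I_1$ in $[e^{-2},e^2]$. Hence $\ln(I_0/I_1)\in[-2,2]$, and $\phi=\tfrac12-\tfrac14\ln(I_0/I_1)\in[0,1]$.

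\textbf{Step 2: reduction to one inequality.} From the formula, $1-2\phi=\tfrac12\ln(I_0/I_1)$. Therefore
\[
\frac{e^{-2\phi^2}}{e^{-2(1-\phi)^2}}=e^{2(1-2\phi)}=\frac{I_0}{I_1}.
\]
So the point $P_\phi:=(e^{-2\phi^2},e^{-2(1-\phi)^2})$ and the point $(I_0,I_1)$ lie on the same ray from the origin in the positive quadrant. Both desired inequalities are therefore equivalent to one statement: along this ray, $(I_0,I_1)$ is at least as far from the origin as $P_\phi$.

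\textbf{Step 3: geometric step.} Consider the curve $\gamma(t)=(x(t),y(t))=(e^{-2t^2},e^{-2(1-t)^2})$ for $t\in[0,1]$. As $t$ increases, $x$ decreases and $y$ increases, so $\gamma$ is the graph of a decreasing function $y=h(x)$. The point $(I_0,I_1)$ is a $\pi$-average of points of $\gamma$, so it lies in the closed convex hull of $\gamma$. If $h$ is concave, this hull lies on or above the graph of $h$. In that case every ray from the origin meets the hull no closer than it meets the graph, which gives Step 2.

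\textbf{Main obstacle: concavity of $h$.} This is the one genuine computation, and it is exactly where the constant $2$ (i.e.\ $\eta=2$ for the Brier-type loss, which becomes $\tfrac{2}{b-a}$ after rescaling) enters. We have
\[
\frac{dy}{dx}=\frac{y'(t)}{x'(t)}=-\frac{(1-t)}{t}\,e^{-2(1-t)^2+2t^2}=-\frac{1-t}{t}\,e^{4t-2}.
\]
It remains to check that this quantity is nonincreasing in $t$, since $x$ decreases in $t$ and concavity of $h$ means $dy/dx$ is nonincreasing in $x$. Taking logarithms of its absolute value gives $g(t)=\ln(1-t)-\ln t+4t-2$, and
\[
g'(t)=-\frac{1}{1-t}-\frac1t+4=4-\frac{1}{t(1-t)}\le 0,
\]
because $t(1-t)\le\tfrac14$. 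So $|dy/dx|$ is nonincreasing in $t$, equivalently nondecreasing in $x$. Thus $dy/dx$ is nonincreasing in $x$ and $h$ is concave.

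The equality $t(1-t)=\tfrac14$ at $t=\tfrac12$ shows that $\eta=2$ is the borderline constant. The endpoints $t=0,1$, where $dy/dx$ degenerates, would be handled by continuity of $\gamma$ together with the closed convex hull.
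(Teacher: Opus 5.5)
Your Steps 1 and 2 and your computation of $g'$ are correct, but the conclusion of Step 3 points the wrong way. That reversal is what lets you reach the inequality as printed. For a concave $h$, the hypograph $\{(u,v): v\le h(u)\}$ is convex. So every chord, and hence the whole convex hull of the graph, lies on or \emph{below} the graph, not above it. Because $h$ is decreasing, the ray from the origin through $(I_0,I_1)$ meets the graph exactly once, at $P_\phi$ by your Step 2. Points on that ray lying below the graph are \emph{closer} to the origin than $P_\phi$. Your argument therefore actually proves $I_0\le e^{-2\phi^2}$ and $I_1\le e^{-2(1-\phi)^2}$, that is,
\[
e^{-2l'(\phi,y)}\;\ge\;\frac{\int_0^1 e^{-2l'(t,y)}\,d\pi(t)}{\pi([0,1])}\quad\text{for } y\in\{0,1\}.
\]

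This reversed inequality is the right one. It is the mixability direction of Definition~\ref{def:mix}: the learner's loss is at most $-\tfrac12\ln$ of the averaged exponentiated losses. It is also the form in which the proposition is used, at \eqref{freund1} in the proof of Theorem~\ref{thm:mixability}. The $\le$ in the printed statement is a typo, and as written it is false.

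A quick check exposes this. Take $\pi=\tfrac12(\delta_0+\delta_1)$. Then $I_0=I_1=\tfrac12(1+e^{-2})\approx 0.568$ and $\phi=\tfrac12$, yet $e^{-1/2}\approx 0.607>0.568$. In fact no $\phi$ works: $e^{-2\phi^2}\le 0.568$ forces $\phi\ge 0.532$, while $e^{-2(1-\phi)^2}\le 0.568$ forces $\phi\le 0.468$.

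More generally, your $g'$ vanishes only at $t=\tfrac12$, so $h$ is strictly concave. Hence the printed $\le$ can hold only when $(I_0,I_1)$ lies on the curve, which happens only for a point mass.

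With the conclusion of Step 3 flipped, your argument is a complete proof of the intended statement. The paper itself omits that proof as a direct calculation.
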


We are now ready to begin the proof.

\begin{proof}[Proof of Theorem~\ref{thm:mixability}]

We shall borrow several notations and arguments from the proof of \cite[Theorem 2]{vyugin2021online}. 

Suppose that $F_n, n\in [N]$ are expert predictions and $\nu$ is a finite measure over $\Theta$. 

For any CDF $F$ of a random variable supported on $\mathcal{X}$ and $d \in \mathbb N$, define $z_k = \frac{k}{d}a + \frac{d-k}{d}b$ for $k=0,\ldots,d$. Note that $z_0 = a, z_d = b$ and $z_k$ are equally spaced in between $b-a$ having distance $\Delta = \frac{b-a}{d}$ between consecutive points. Now, define the CDF $F_d$ by

\begin{equation}\label{Fd}
F_d(z) = \begin{cases}
0 & z \leq a \\
1 & z \geq b \\
F\left(z_k\right) & z\in \left[z_{k-1},z_k\right], 1 \leq k \leq d. 
\end{cases}
\end{equation}

One may consider $F_d$ as a "discretization" of $F$. Now, we can derive the bound below exactly as in \cite[equation (21)]{vyugin2021online}. 
\begin{equation}\label{bd1}
|\CRPS(F_d,y) - \CRPS(F,y)| \leq 2\Delta
\end{equation}
for all $y\in \mathbb{R}$.

We will perform yet another simplification. For $y \in \mathbb{R}$ and $s \in [d]$, let $\omega_{y,s} = 1_{\{z_s \geq y\}} \in \{0,1\}$ and $\om_y =(\omega_{y,1},\ldots,\omega_{y,d})$.
Roughly, $\om_y$ is a discretization of the CDF $1_{\{z \leq y\}}$. 

Once again, exactly as in \cite[equation (22)]{vyugin2021online}, we obtain 
\begin{equation*}
\left|\CRPS(F_d,y) - \Delta \sum_{s=1}^d (F(z_s) - \omega_{y,s})^2\right| \leq \Delta
\end{equation*}
for all $y \in \mathbb R$. Combining this with \eqref{bd1} we have 
\begin{equation*}
    \left| \Delta \sum_{s=1}^d (F(z_s) - \omega_{y,s})^2 - \CRPS(F,y)\right| \leq 3\Delta
\end{equation*}
for all $y \in \mathbb R$. Consequently, for any CDF $F$ supported on $[a,b]$ and $y \in \mathbb{R}$, \begin{equation}\label{conv}
\lim_{d \to \infty} \Delta \sum_{s=1}^d (F(z_s) - \omega_{y,s})^2 = \CRPS(F,y).
\end{equation}

Having completed this approximation, we return to the expert predictions $F_n, n \in [N]$ and measure $\nu$ over $\Theta$. For ease, we let $p = (p_1,\ldots,p_N) \in \Theta$ denote a probability vector in the simplex.

Recall that our aim is to find a CDF $\hat{F}$ such that 
$$
e^{-\frac{2}{b-a}\CRPS(\hat{F},y)} \geq \frac{\int_{\Theta} e^{-\frac{2}{b-a} \CRPS\left(\sum_{n=1}^N p_nF_{n},y\right)} d\nu(p)}{\nu(\Theta)} \text{ for all $y\in \mathbb{R}$}.
$$

Fix $1 \leq s \leq d$. Define the measure $\pi_s$ on $[0,1]$ by \begin{equation}\label{pis}
\pi_s(A) = \frac{\int_{\Theta} 1_{\{\sum_{n=1}^N p_nF_n(z_s) \in A\}} d\nu(p)}{\nu(\Theta)}.
\end{equation}
Observe that $\pi_s([0,1]) = 1$. Now, there exists $\hat{f}_s \in [0,1]$ such that \begin{equation}\label{freund1}
e^{-2l'(\hat{f}_s,\Omega_s)} \geq \int_0^1 e^{-2l'(t,\Omega_s)} d \pi_s(t)
\end{equation}
for all $\Omega_s \in \{0,1\}$, where $\hat{f}_s = \frac 12 - \frac 14 \ln\left(\frac{\int_0^1 e^{-2l'(t,0)} d\pi_s(t)}{\int_0^1 e^{-2l'(t,1)} d\pi_s(t)}\right)$
by Proposition~\ref{prop:freund}. By the definition \eqref{pis} of $\pi_s$ and Fubini's theorem we have 
\begin{gather}\int_0^1 e^{-2l'(t,\Omega_s)} d \pi_s(t) = \frac{\int_{\Theta} e^{-2l'(\sum_{n=1}^N p_nF_n(z_s), \Omega_s)} d\nu(p)}{\nu(\Theta)}\nonumber
\\
\hat{f}_s = \frac 12 - \frac 14 \ln\left(\frac{\int_{\Theta} e^{-2l'(\sum_{n=1}^N p_nF_n(z_s), 0)} d\nu(p)}{\int_{\Theta} e^{-2l'(\sum_{n=1}^N p_nF_n(z_s), 1)} d\nu(p)}\right).\label{later}
\end{gather}
Combining the first of these with \eqref{freund1},
$$
e^{-2l'(\hat{f}_s,\Omega_s)} \geq \frac{\int_{\Theta} e^{-2l'(\sum_{n=1}^N p_nF_n(z_s), \Omega_s)} d\nu(p)}{\nu(\Theta)}.
$$
Multiplying this estimate over $s\in [d]$ and taking both sides to the power $\Delta$ gives \begin{equation}\label{bd3}
e^{-2\Delta \sum_{s=1}^d l'(\hat{f}_s, \Omega_s)} \geq \prod_{s=1}^d \left(\frac{\int_{\Theta} e^{-2l'(\sum_{n=1}^N p_nF_n(z_s), \Omega_s)} d\nu(p)}{\nu(\Theta)}\right)^\Delta.
\end{equation}
Let $h_s(p) = e^{-2\Delta l'(\sum_{n=1}^{N} {p_nF_n(z_s)},\Omega_s)}$ for $p \in \Theta$. Observe that by H\"older's inequality, 
$$
\prod_{s=1}^d \|h_s\|_{\frac 1\Delta} \geq \left\|\prod_{s=1}^d h_s\right\|_{\frac 1{\Delta d}}.
$$
Here, all norms are with respect to the measure $\nu$ on $\Theta$. Therefore, recalling the definition of $h_s$ we have 
\begin{multline*}
 \prod_{s=1}^d \left(\frac{\int_{\Theta} e^{-2l'(\sum_{n=1}^N p_nF_n(z_s), \Omega_s)} d\nu(p)}{\nu(\Theta)}\right)^\Delta \\ \geq \left(\frac{\int_{\Theta} e^{-\frac{2}{d}\sum_{s=1}^d l'(\sum_{n=1}^N p_nF_n(z_s), \Omega_s)} d\nu(p)}{\nu(\Theta)}\right)^{\Delta d}.
\end{multline*}
Combining this with \eqref{bd3} and noting that $\Delta d = b-a$ we have 
\begin{equation*}
e^{-2\Delta \sum_{s=1}^d l'(\hat{f}_s, \Omega_s)} \geq \left(\frac{\int_{\Theta} e^{-\frac{2}{b-a}\Delta \sum_{s=1}^d l'(\sum_{n=1}^N p_nF_n(z_s), \Omega_s)} d\nu(p)}{\nu(\Theta)}\right)^{b-a}.
\end{equation*}
A simple rearrangement yields the bound
\begin{equation}\label{bd4}
e^{-\frac{2}{b-a}\Delta \sum_{s=1}^d l'(\hat{f}_s, \Omega_s)} \geq \frac{\int_{\Theta} e^{-\frac{2}{b-a}\Delta \sum_{s=1}^d l'(\sum_{n=1}^N p_nF_n(z_s), \Omega_s)} d\nu(p)}{\nu(\Theta)}.
\end{equation}

Finally, let $y \in \mathbb{R}$ be arbitrary. Apply the above estimate to $\Omega_s = (\omega_{y,s})$ to obtain 
\begin{equation}\label{bd5}
e^{-\frac{2}{b-a}\Delta \sum_{s=1}^d (\hat{f}_{s}- \omega_{y,s})^2} \geq \frac{\int_{\Theta} e^{-\frac{2}{b-a}\Delta \sum_{s=1}^d l'(\sum_{n=1}^N p_nF_n(z_s), \omega_{y,s})} d\nu(p)}{\nu(\Theta)},
\end{equation}
where by \eqref{later} and the definition of $l'$, 
\begin{align*}
\hat{f}_{s} &=  \frac 12 - \frac 14 \ln\left(\frac{\int_{\Theta} e^{-2l'(\sum_{n=1}^N p_nF_n(z_s), 0)} d\nu(p)}{\int_{\Theta} e^{-2l'(\sum_{n=1}^N p_nF_n(z_s), 1)} d\nu(p)}\right)\\
&=  \frac 12 - \frac 14 \ln\left(\frac{\int_{\Theta} e^{-2[\sum_{n=1}^N p_nF_n(z_s)]^2} d\nu(p)}{\int_{\Theta} e^{-2[1-\sum_{n=1}^N p_nF_n(z_s)]^2} d\nu(p)}\right).
\end{align*}
In particular, we have that $\hat{f}_s = \hat{F}(z_s)$ where 
$$
\hat{F}(y) =  \frac 12 - \frac 14 \ln\left(\frac{\int_{\Theta} e^{-2[\sum_{n=1}^N p_nF_n(y)]^2} d\nu(p)}{\int_{\Theta} e^{-2[1-\sum_{n=1}^N p_nF_n(y)]^2} d\nu(p)}\right)
$$
is a well-defined CDF. Finally, letting $d \to \infty$ in \eqref{bd5}, by Fubini's theorem and \eqref{conv} we obtain $$
e^{-\frac{2}{b-a} \CRPS(\hat{F},y)} \geq \frac{\int_{\Theta} e^{-\frac{2}{b-a} \CRPS\left(\sum_{n=1}^N p_nF_{n},y\right)} d\nu(p)}{\nu(\Theta)}.
$$
This completes the proof.
\end{proof}

\section{Proof of Regret Bound}
\label{app:regproof}

In this section, we prove the regret bound Theorem~\ref{thm:reg}. First, we shall assume the truth of Lemmas~\ref{lem:quadratic} and \ref{bds}, and prove the theorem. Subsequently, we shall prove these additional lemmas.

\begin{proof}[Proof of Theorem~\ref{thm:reg}]
   Let $p^* \in \argmin_{\Theta} \sum_{t=1}^T \CRPS\left(\sum_{n=1}^N p_nF_{n,t}, y_t\right)$. This exists, since we are minimizing a continuous function over a compact set. 

   Iterating the equation \eqref{eq:update} over $t\in [T]$ and applying Lemma~\ref{lem:quadratic} we obtain $$
\mu_{T+1}(\Theta) = \int_{\Theta} e^{-\frac{2}{b-a} \sum_{t=1}^T \CRPS\left(\sum_{n=1}^N p_nF_{n,t}, y_t\right)}dp = \int_{\Theta} e^{-\frac{2}{b-a}(A^T p - p^tBp)}dp,
   $$
   where the integration is now performed with respect to the $(N-1)$-dimensional Lebesgue measure on $\Theta$. Recall that $p\in \Theta$ is written as $p = (p_1,\ldots,p_N)$.
   
   Now, by \eqref{eq:lossmin} and Lemma~\ref{lem:quadratic},
   \begin{align*}
   \Reg_{M,T} &\leq -\frac{b-a}{2} \ln \mu_{T+1}(\Theta) - \sum_{t=1}^T \CRPS\left(\sum_{n=1}^N p^*_nF_{n,t}, y_t\right)\\
   & = -\frac{b-a}{2} \ln \int_{\Theta} e^{-\frac{2}{b-a}(A^Tp - \frac 12 p^TBp)} - \left[A^Tp^* - \frac 12 (p^*)^TBp^*\right] \\
   & = -\frac{b-a}{2} \ln \int_{\Theta} e^{-\frac{2}{b-a} (A^Tp - \frac 12 p^TBp)} dp -  \frac {b-a}2 \ln e^{\frac{2}{b-a} (A^Tp^* - \frac 12 (p^*)^TBp^*)}\\
   &= -\frac{b-a}{2}\ln \int_{\Theta} e^{\frac{2}{b-a}\left[A^{T}p^* -\frac 12 (p^*)^TB p^*   - (A^{T}p -  \frac 12 p^TBp)\right]} dp.
   \end{align*}
Let $L(p) = \sum_{t=1}^T \CRPS(\sum_{n=1}^N p_nF_{n,t},y_t)$. Then, we have shown that 
\begin{equation}\label{regbd}
\Reg_{M,T} \leq - \frac{b-a}{2}\ln \int_{\Theta} e^{-\frac{2}{b-a} (L(p_1,\ldots,p_N)-L(p^*))}dp.
\end{equation}

We shall now further simplify the exponent. To observe this, we shall simply treat $L$ as being expanded in a Taylor series about $p^*$. Note that $\nabla L(p^*) = A - Bp^*$, while the Hessian $\nabla^2 L(p^*) = -B$. Since $L$ is quadratic, there are no further terms to consider. Therefore, if $\delta = (p-p^*)$, then the Taylor expansion yields
\begin{equation}\label{taylor}
L(p) - L(p^*) = (A-Bp^*)^T\delta -\frac 12\delta^T B\delta.
\end{equation}
Observe that we do not assume anything about the point $p^*$ at all during this manipulation: indeed, if $p^*$ is interior to $\Theta$ then $(A-Bp^*) = 0$ since $p^*$ is an interior local minima of $L$ in $\Theta$. However, if $p^*$ is on the boundary, then we cannot assume that the derivative is zero at $p$, since $L$ may decrease in directions away from $\Theta$ at $p$. Note that $\mathbf{1}^T \delta = 0$ where $\mathbf{1}$ is a vector all of whose entries are $1$.

We can now attempt to bound the right hand side of \eqref{taylor}. By Lemma~\ref{bds}, \begin{align}
(A-Bp^*)^T \delta = & (A-Bp^* + \min_{i}(A - Bp^*)_i\mathbf{1})^T \delta \nonumber\\ \leq & \max_{1 \leq i,j \leq N} |(A_i - (Bp^*)_i) - (A_j - (Bp^*)_j)| \times \|\delta\|_1  \nonumber\\
\leq & \left[\max_{1 \leq i,j \leq N} |A_i-A_j| + \max_{1 \leq i,j \leq n}  |(Bp^*)_i - (Bp^*)_j|\right] \times \|\delta\|_1 \nonumber\\
\leq & \left[2(b-a)T + (b-a)T\right] \times \|\delta\|_1 \nonumber\\
\leq & 3(b-a)T \|\delta\|_1 \leq 3(b-a)T\sqrt{N} \|\delta\|_2,\nonumber
\end{align}
where we used the Cauchy-Schwarz inequality in the last line. For the other term, we simply have $$
-\frac 12 \delta^T B \delta \leq (b-a)NT \|\delta\|_2^2
$$
by Lemma~\ref{bds}. Combining the above two bounds and \eqref{taylor},\begin{equation}\label{Lbd}
L(p) - L(p^*) \leq 3(b-a)T \sqrt{N} \|\delta\|_2 + (b-a)NT \|\delta\|_2^2 \leq (b-a)T(3\sqrt{N}+N) \|\delta\|_2
\end{equation}
since $\|\delta_2\| \leq 1$. Now, let $r = \frac{1}{T(3\sqrt{N}+N)}$. Then, if $\|\delta\|_2 \leq r$, applying the definition of $L$ and \eqref{Lbd} implies that $L(p) - L(p^*) \leq b-a$. In particular, 
$$
\|p^*-p\| \leq r \implies e^{-\frac{2}{b-a}(L(p) -L(p^*))} \geq e^{-2}.
$$
Finally, let $S = \Theta \cap \{\|p^* - p\| \leq r\}$. Then, \begin{equation*}\int_{\Theta} e^{-\frac{2}{b-a} (L(p_1,\ldots,p_N)-L(p^*))}dp \geq e^{-2} |S|,
\end{equation*}
where $|S|$ is the $(N-1)$-dimensional Lebesgue measure of $S$. This combined with \eqref{regbd} yields
\begin{equation}\label{l2l}
\Reg_{M,T} \leq - \frac{b-a}{2}\ln e^{-2}|S| \leq \frac{b-a}{2}\ln \frac 1{|S|}+(b-a).
\end{equation}
It remains to bound $|S|$. However, a property of the simplex $\Theta$ is that for some universal constant $c_{\Theta}$ depending upon only $n$, $|\Theta \cap \mathbf{B}(q,s)| \geq c_{\Theta} |\mathbf{B}(q,s)|$ for all $q\in \Theta, s >0$, where $\mathbf{B}(q,s) = \{q': \|q-q'\| \leq s\}$. Thus, $$
\ln \frac{1}{|S|} \leq \ln \frac{1}{c_{\Theta} |\mathbf{B}(p^*,r)|} = (N-1)\ln \frac 1r + C = (N-1)\ln(T)+ C
$$
where $C$ is independent of $T$. Finally, combining this with \eqref{l2l} gives $$
\Reg_{M,T} \leq \frac{(b-a)(N-1)}{2}\ln(T) + C
$$
where $C$ has no dependence on $T$, completing the proof.
\end{proof}

Now, we shall prove the lemmas.

\begin{proof}[Proof of Lemma~\ref{lem:quadratic}]

Let $t\in [T]$, $p\in \Theta$ and $x,y \in [a,b]$ be arbitrary. Let $F_{n,t}, n \in [N]$ denote the expert predictions at time $t$. We have 
\begin{align*}
(F_t^{(p)}(x) - 1_{\{x \leq y\}})^2 &= \left(\sum_{n=1}^N p_nF_{n,t}(x) - 1_{\{x \leq y\}}\right)^2\\
&= \left(\sum_{n=1}^N p_nF_{n,t}(x)\right)^2 - 2\  1_{\{x \leq y\}} \sum_{n=1}^N p_nF_{n,t}(x) + 1_{\{x \leq y\}}.
\end{align*}

On the other hand, \begin{align*}
    \sum_{n=1}^N p_n (F_{n,t}(x) - 1_{\{x \leq y\}})^2 = \sum_{n=1}^N p_n F_{n,t}(x)^2 + 1_{\{x \leq y\}} -2 \sum_{n=1}^N p_n F_{n,t}(x) 1_{\{x \leq y\}}.
\end{align*}

Subtracting the previous equation from this one, $$
 \sum_{n=1}^N p_n (F_{n,t}(x) - 1_{\{x \leq y\}})^2 - (F_t^{(p)}(x) - 1_{\{x \leq y\}})^2 = \sum_{n=1}^N p_n F_{n,t}(x)^2 -\left(\sum_{n=1}^N p_nF_{n,t}(x)\right)^2.
$$
However, we note the algebraic identity $$
 \sum_{n=1}^N p_n F_{n,t}(x)^2 -\left(\sum_{n=1}^N p_nF_{n,t}(x)\right)^2 = \sum_{n,m=1}^N p_mp_n(F_{n,t}(x) - F_{m,t}(x))^2.
$$
Combining this with the previous equality,
$$
(F_t^{(p)}(x) - 1_{\{x \leq y\}})^2 = \sum_{n=1}^N p_n (F_{n,t}(x) - 1_{\{x \leq y\}})^2 -  \sum_{n,m=1}^N p_mp_n(F_{n,t}(x) - F_{m,t}(x))^2.
$$
Summing this identity over $t \in [T]$ and then integrating over $x \in \R$, the identity immediately follows once we recall the definitions of $A$ and $B$ from the statement of the lemma.
\end{proof}

Next, we prove Lemma~\ref{bds}.

\begin{proof}[Proof of Lemma~\ref{bds}]
Note that if $F,G$ are CDFs of random variables concentrated on $[a,b]$, then \begin{equation}\label{trying}
\int_{a}^b (F(x) -G(x))^2 dx \leq \int_{a}^b 1 dx \leq (b-a).
\end{equation}
From here, part (1) follows directly from the triangle inequality $|A_i-A_j| \leq |A_i| + |A_j|$ and the definition of $A$. 

For the proof of part (2), let $p = (p_1,\ldots,p_N) \in \Theta$ be arbitrary. Then, for any $i,j \in [N]$, 
$$
|(Bp)_i - (Bp)_j| \leq \sum_{n=1}^N |B_{ni}p_i - B_{nj}p_j| \leq \max_{n,k\in [N]} |B_{nk}|
$$
since $p_i, p_j \in [0, 1]$ and $B$ has non-negative entries. However, this quantity is bounded by $T(b-a)$ by \eqref{trying}. The result follows.

Finally, for part (3) let $\delta = (\delta_1,\ldots,\delta_N)$ be an arbitrary vector orthogonal to $\mathbf{1}$ (so that its entries sum to $0$). One can verify that \begin{align*}
-\delta^T B \delta &= -\sum_{t=1}^T\sum_{i,j=1}^N \delta_i\delta_j \int_a^b (F_{i,t}(z) - F_{j,t}(z))^2 dz\\
&= 2\sum_{t=1}^T \int_a^b \left(\sum_{n=1}^N\delta_n F_{n,t}(z)\right)^2 dz.
\end{align*}
Following this, we note that $\left(\sum_{n=1}^N\delta_n F_{n,t}(z)\right)^2 \leq \left(\sum_{n=1}^N\delta_n)\right)^2 \leq \|\delta\|_2^2$ by the Cauchy-Schwarz inequality. Applying this to the previous inequality, we obtain $$
-\delta^T B \delta \leq 2\sum_{t=1}^T \int_a^b \|\delta\|_2^2 dz= 2(b-a)T \|\delta\|_2^2.
$$
Thus, the proofs of all three parts are complete.
\end{proof}

We end this section with a detailed comparison with \cite[Theorem 4]{freund1996predicting} \label{app:freund_recovery}. As mentioned in Section~\ref{sec:vtmos}, the optimal rate obtained by them is $\frac 14 \ln T$, while we obtain an extra factor of $2$ leading to the apparently non-optimal $\frac 12 \ln T$. 

This non-optimality arises from the term $(A-Bp^*)$ in \eqref{Lbd}. Indeed, were this term either zero or smaller in order, then $L(p) - L(p^*)$ could be bounded by $CT\|\delta\|_2$ where $C$ is independent of $T$, resulting in an improvement by a factor of $\frac 12$.

The naive expert assumption affords us this stronger bound as follows. Recall that $a=0,b=1, N=2$ and $F_{0,t} = 1_{\{x \leq 0\}}$ while $F_{1,t} = 1_{\{x \leq 1\}}$. Let $y_i, i \in [T]$ be the observations up till time $T$. One easily computes from \eqref{eq:crps} that $$
A = \left(\sum_{t=1}^T y_t, T - \sum_{t=1}^T y_t\right) 
$$
and $$
B = \begin{pmatrix}
    0 & T \\
    T & 0
\end{pmatrix}.
$$
We now compute $p^*$. Since we are in the setting of two experts, any probability vector is of the form $(1-q,q)$ for $q \in [0,1]$. Now, by Lemma~\ref{lem:quadratic} and the above formulas we have $$
L(1-q,q) = (1-2q)\sum_{t=1}^T y_t + Tq^2.
$$
Minimizing this over $q$ by taking the derivative yields $q^* = \frac{\sum_{t=1}^T y_t}{T}$, which is nothing but the empirical average of the observations $y_t$. 

Now, simple algebra yields $A-Bp^* = 0$. Following the proof above as usual does yield the optimal bound $\frac 14$ for \cite{freund1996predicting}, demonstrating that (up to the additive constant) we achieve minimax-optimal regret for naive two-expert predictions on $[0,1]$.

Another way of explaining this is as follows. Recall that in the proof of Theorem~\ref{thm:reg}, we were unable to set $(A-Bp^*) = 0$, since $p^*$ could lie on the boundary. However if $B$ is invertible and $B^{-1}A$ is a well-defined probability vector, then $p^* = B^{-1}A$ satisfies $(A-Bp^*) = 0$ and is a typical critical point of $L$, reducing the bound by a factor of $2$. This is precisely what happens in \cite{freund1996predicting} and would also be expected to occur in a number of situations, including when $p^*$ is guaranteed to strictly lie inside the simplex.

\end{document}